\newtheorem{theorem}{Theorem}
\newtheorem{lemma}{Lemma}
\newtheorem{corollary}{Corollary}
\newtheorem{remark}{Remark}
\newtheorem{definition}{Definition}
\newtheorem{assumption}{Assumption}
\newtheorem{example}{Example}
\title{Communication Compression for Decentralized Learning with Operator Splitting Methods}
\author{%
  Yuki Takezawa \\
  Kyoto University and RIKEN AIP\\
  \texttt{yuki-takezawa@ml.ist.i.kyoto-u.ac.jp} \\
  % examples of more authors
  \And
  Kenta Niwa \\
  NTT Communication Science Laboratories \\
  \texttt{kenta.niwa.bk@hco.ntt.co.jp} \\
  \And
  Makoto Yamada \\
  Kyoto University and RIKEN AIP \\
  \texttt{myamada@i.kyoto-u.ac.jp}
}
\begin{document}

\maketitle

\begin{abstract}
In decentralized learning, operator splitting methods using a primal-dual formulation (e.g., the Edge-Consensus Learning (ECL))
has been shown to be robust to heterogeneous data and has attracted significant attention in recent years.
However, in the ECL, a node needs to exchange dual variables with its neighbors.
These exchanges incur significant communication costs.
For the Gossip-based algorithms, many compression methods have been proposed, but these Gossip-based algorithm do not perform well when the data distribution held by each node is statistically heterogeneous. In this work, we propose the novel framework of the compression methods for the ECL,
called the Communication Compressed ECL (C-ECL).
Specifically, we reformulate the update formulas of the ECL,
and propose to compress the update values of the dual variables.
We demonstrate experimentally that the C-ECL can achieve a nearly equivalent performance with fewer parameter exchanges than the ECL.
Moreover, we demonstrate that the C-ECL is more robust to heterogeneous data than the Gossip-based algorithms.
\end{abstract}

\section{Introduction}
In recent years, neural networks have shown promising results in various fields, 
including image processing \cite{chen2020simple,dosovitskiy2021an}
and natural language processing \cite{vaswani2017attention,devlin2019bert},
and have thus attracted considerable attention.
To train a neural network, we generally need to collect a large number of training data.
Owning to the use of crowdsourcing services, it is now easy to collect a large number of annotated images and texts.
However, because of privacy concerns, it is difficult to collect a large number of \emph{personal} data, 
such as medical data including gene expression and medical images, on a single server. %\makoto{a server}.
In such cases, decentralized learning, which aims to train a model without sharing the training data among servers, is a powerful tool. Decentralized learning was originally studied to train large-scale models in parallel, 
and because it allows us to train models without aggregating the training data, 
it has recently attracted significant attention from the perspective of privacy preservation.

One of the most widely used algorithms for decentralized learning is the Gossip-based algorithm \citep{boyd2006randomized,lian2017can}.
In the Gossip-based algorithm, each node (i.e., server) updates the model parameters using its own gradient,
exchanges model parameters with its neighbors,
and then takes the average value to reach a consensus.
The Gossip-based algorithm is a simple yet effective approach.
When the distribution of the data subset held by each node is statistically homogeneous,
the Gossip-based algorithm can perform as well as the vanilla SGD, which trains the model on a single node using all of the training data.
However, when the data distribution of each node is statistically heterogeneous 
(e.g., each node has only images of some classes and not others),
\textit{the client-drift} \citep{karimireddy2020scaffold} occurs, and the Gossip-based algorithms do not perform well \cite{tang2018d2,vogels2021relaysum}.

Recently, the operator splitting method using the primal-dual formulations, called the Edge-Consensus Learning (ECL) \citep{niwa2020edge}, has been proposed.
%The second approach is the primal-dual algorithm to solve linear constrained optimization problems \cite{li2019decentralized,zhang2018distributed}. 
The primal-dual algorithm, including the Alternating Direction Method of Multipliers (ADMM) \cite{boyd2011distributed} and the Primal-Dual Method of Multipliers (PDMM) \cite{zhang2018distributed}, can be applied in decentralized learning by representing the model consensus as the linear constraints.
It has recently been shown that the ADMM and PDMM for decentralized learning can be derived by solving the dual problem using operator splitting methods \citep{sherson2019derivation} (e.g., the Douglas-Rachford splitting \cite{douglas1955numerical} and the Peaceman-Rachford splitting \cite{peaceman1955numerical}).
%Recently, \citet{sherson2019derivation} proposed the methods with the operator splitting methods 
%(e.g., the Douglas-Rachford splitting \cite{douglas1955numerical} and the Peaceman-Rachford splitting \cite{peaceman1955numerical})
%using the primal-dual formulations.
In addition, \citet{niwa2020edge,niwa2021asynchronous} applied these operator splitting methods to the neural networks, 
named them the Edge-Consensus Learning (ECL).
Then, they showed that the ECL can be interpreted as a variance reduction method and is robust to heterogeneous data.

%However, for both the Gossip-based algorithm and the primal-dual algorithm,
However, for both the Gossip-based algorithm and the ECL,
each node needs to exchange the model parameters and/or dual variables with its neighbors,
and such exchanges incur significant communication costs.
Recently, in the Gossip-based algorithm, many studies have proposed methods for compressing the exchange of parameters \cite{tang2018communication,koloskova2019decentralized,lu2020monniqua,vogels2020powergossip}.
Then, they showed that these compression methods can train a model 
with fewer parameter exchanges than the uncompressed Gossip-based algorithm.
However, these compression methods are based on the Gossip algorithms
and do not perform well when the data distribution of each node is statistically heterogeneous.

In this work, we propose the novel framework for the compression methods applied to the ECL,
which we refer to as \textit{the Communication Compressed ECL (C-ECL)}.
Specifically, we reformulate the update formulas of the ECL,
and propose to compress the update values of the dual variables.
Theoretically, we analyze how our proposed compression affects the convergence rate of the ECL
and show that the C-ECL converges linearly to the optimal solution as well as the ECL.
Experimentally, we show that the C-ECL can achieve almost the same accuracy as the ECL with fewer parameter exchanges.
Furthermore, 
the experimental results show that the C-ECL is more robust to heterogeneous data than the Gossip-based algorithm,
and when the data distribution of each node is statistically heterogeneous,
the C-ECL can outperform the uncompressed Gossip-based algorithm, in terms of both the accuracy and the communication costs.

\textbf{Notation:}
In this work,
$\|\cdot\|$ denotes the L2 norm,
$\mathbf{0}$ denotes a vector with all zeros,
and $\mathbf{I}$ denotes an identity matrix.

\section{Related Work}
In this section, we briefly introduce the problem setting of decentralized learning,
and then introduce the Gossip-based algorithm and the ECL.

\subsection{Decentralized Learning}
Let $G=(\mathcal{V}, \mathcal{E})$ be an undirected connected graph that represents the network topology 
where $\mathcal{V}$ denotes the set of nodes and $\mathcal{E}$ denotes the set of edges.
For simplicity, we denote $\mathcal{V}$ as the set of integers $\{1,2,\ldots, |\mathcal{V}|\}$. 
We denote the set of neighbors of the node $i$ as $\mathcal{N}_i=\{j\in \mathcal{V} | (i, j) \in \mathcal{E}\}$. 
The goal of decentralized learning is formulated as follows:
\begin{align}
\label{eq:decentralized_learning}
    \inf_{\mathbf{w}} \; \sum_{i\in \mathcal{V}} f_i(\mathbf{w}), \;\; f_i(\mathbf{w}) \coloneqq\mathbb{E}_{\zeta_i \sim \mathcal{D}_i} [F(\mathbf{w}; \zeta_i)],
\end{align}
where $\mathbf{w}\in\mathbb{R}^d$ is the model parameter, $F$ is the loss function, $\mathcal{D}_i$ represents the data held by the node $i$,
$\zeta_i$ is the data sample from $\mathcal{D}_i$,
and $f_i$ is the loss function of the node $i$.

\subsection{Gossip-based Algorithm}
\label{sec:gossip}
A widely used approach for decentralized learning is the Gossip-based algorithm (e.g. D-PSGD \cite{lian2017can}). 
In the Gossip-based algorithm, each node $i$ computes its own gradient $\nabla f_i$,
exchanges parameters with its neighbors,
and then takes their average. 
Although, the Gossip-based algorithm is a simple and effective approach,
there are two main issues:
high communication costs and sensitivity to the heterogeneity of the data distribution.

% communication compression
In the Gossip-based algorithm,
the node needs to receive the model parameter from its neighbors.
Because the number of parameters in a neural network is large,
the exchanges of the model parameters incur huge communication costs.
%To overcome this bottleneck, 
To reduce these communication costs of the Gossip-based algorithm, 
many methods that compress the exchanging parameters by using sparsification, quantization, and low-rank approximation have been proposed \cite{tang2018communication,koloskova2019decentralized,lu2020monniqua,Koloskova2020Decentralized,vogels2020powergossip}. 
Then, it was shown that these compression methods can achieve almost the same accuracy as the uncompressed Gossip-based algorithm
with fewer parameter exchanges.

% heterogeneous
The second issue is that the Gossip-based algorithm is sensitive to the heterogeneity of the data distribution.
When the data distribution of each node is statistically heterogeneous, 
because the optimal solution of the loss function $\sum_i f_i$
and the optimal solution of the loss function of each node $f_i$ are far from each other,
the Gossip-based algorithms do not perform well \cite{niwa2020edge,vogels2021relaysum}.
To address heterogeneous data in the Gossip-based algorithm, 
\citet{tang2018d2} and \citet{xin2020variance} applied variance reduction methods \cite{defazio2014saga,johnson2013accelerating} 
to the Gossip-based algorithm.
\citet{lorenzo2016next} proposed gradient tracking algorithms,
and \citet{vogels2021relaysum} recently proposed the RelaySum.

\subsection{Edge-Consensus Learning}
\label{sec:primal_dual}
In this section, we briefly introduce the Edge-Consensus Learning (ECL) \cite{niwa2020edge}.
By reformulating Eq. \eqref{eq:decentralized_learning}, the primal problem can be defined as follows:
\begin{align}
\label{eq:ecl_primal}
    \inf_{\{\mathbf{w}_i\}_i} \sum_{i\in\mathcal{V}} f_i(\mathbf{w}_i) \;\;
    \text{s.t.} \;\; \mathbf{A}_{i|j} \mathbf{w}_i + \mathbf{A}_{j|i} \mathbf{w}_j = \mathbf{0}, \; (\forall(i,j) \in \mathcal{E}),
\end{align}
where $\mathbf{A}_{i|j}=\mathbf{I}$ when $j\in\mathcal{N}_i$ and $i<j$, 
and $\mathbf{A}_{i|j}=-\mathbf{I}$ when $j\in\mathcal{N}_i$ and $i>j$.
By contrast, the Gossip-based algorithm explicitly computes the average at each round, 
whereas the primal problem of Eq. \eqref{eq:ecl_primal} represents the consensus based on the linear constraints.
Subsequently, by solving the the dual problem of Eq. \eqref{eq:ecl_primal} using the Douglas-Rachford splitting \citep{douglas1955numerical},
the update formulas can be derived as follows \citep{sherson2019derivation}:
\begin{align}
    \label{eq:update_w}
    \mathbf{w}^{(r+1)}_i &= \text{argmin}_{\mathbf{w}_i} \{ f_i(\mathbf{w}_i)
    + \frac{\alpha}{2} \sum_{j\in\mathcal{N}_i} {\left\| \mathbf{A}_{i|j} \mathbf{w}_i - \frac{1}{\alpha}\mathbf{z}^{(r)}_{i|j} \right\|}^2 \}, \\
    \label{eq:update_y}
    \mathbf{y}^{(r+1)}_{i|j} &= \mathbf{z}^{(r)}_{i|j} - 2 \alpha \mathbf{A}_{i|j} \mathbf{w}^{(r+1)}_i, \\
    \label{eq:update_z}
    \mathbf{z}^{(r+1)}_{i|j} &= (1 - \theta) \mathbf{z}^{(r)}_{i|j} + \theta \mathbf{y}^{(r+1)}_{j|i},
\end{align}
where $\theta \in (0, 1]$ and $\alpha>0$ are hyperparameters,
and $\mathbf{y}_{i|j}, \mathbf{z}_{i|j} \in \mathbb{R}^d$ are dual variables.
We present detailed derivation of these update formulas in Sec. \ref{sec:derivation_of_ecl}.
When $\theta=1$, the Douglas-Rachford splitting is specifically called the Peaceman-Rachford splitting \citep{peaceman1955numerical}.
When $f_i$ is non-convex (e.g., a loss function of a neural network), Eq. \eqref{eq:update_w} can not be generally solved.
Then, \citet{niwa2020edge} proposed the Edge-Consensus Learning (ECL), which approximately solves Eq. \eqref{eq:update_w} as follows: 
\begin{align}
    \mathbf{w}^{(r+1)}_i &= \text{argmin}_{\mathbf{w}_i} \{ \langle \mathbf{w}_i, \nabla f_i(\mathbf{w}^{(r)}_i) \rangle
    + \frac{1}{2\eta} \left\| \mathbf{w}_i - \mathbf{w}^{(r)}_i \right\|^2
    + \frac{\alpha}{2} \sum_{j\in\mathcal{N}_i} {\left\| \mathbf{A}_{i|j} \mathbf{w}_i - \frac{1}{\alpha}\mathbf{z}^{(r)}_{i|j} \right\|}^2 \},
\end{align}
where $\eta>0$ corresponds to the learning rate.
Then, \citet{niwa2021asynchronous} showed that the ECL can be interpreted as a stochastic variance reduction method
and demonstrated that the ECL is more robust to heterogeneous data than the Gossip-based algorithms. 
However, as shown in Eq. \eqref{eq:update_z}, 
the node $i$ must receive dual variables $\mathbf{y}_{j|i}$ from its neighbor $j$ in the ECL.
Therefore, in the ECL as well as in the Gossip-based algorithm,
large communication costs occur during training.

In addition to the ECL, other methods using primal-dual formulations have been proposed \citep{li2019decentralized},
and recently, the compression methods for these primal-dual algorithms has been studied \cite{kovalev2021linearly,liu2021linear}.
However, the compression methods for the ECL have not been studied.
In this work, we propose the compression method for the ECL, called the C-ECL, 
that can train a model with fewer parameter exchanges and is robust to heterogeneous data.

\section{Proposed Method}
\label{sec:proposed}

\subsection{Compression Operator}
Before proposing the C-ECL, 
we first introduce the compression operator used in this work.
\begin{assumption}[Compression Operator]
\label{assumption:compression}
For some $\tau \in (0, 1]$,
we assume that the compression operator $\textbf{comp}: \mathbb{R}^d \rightarrow \mathbb{R}^d$ satisfies the following conditions:
\begin{alignat}{3}
    \label{eq:bound_of_compression}
    \mathbb{E}_\omega \left\| \mathrm{comp}(\mathbf{x} ; \omega) - \mathbf{x} \right\|^2  &\leq (1-\tau) \left\| \mathbf{x} \right\|^2 \quad &(\forall \mathbf{x} \in \mathbb{R}^d), \\
    \label{eq:linearlity_of_compression}
    \mathrm{comp}(\mathbf{x} + \mathbf{y} ; \omega) &= \mathrm{comp} (\mathbf{x} ; \omega) + \mathrm{comp} (\mathbf{y} ; \omega) \quad &(\forall \omega, \forall \mathbf{x}, \mathbf{y} \in \mathbb{R}^d), \\
    \label{eq:linearlity_of_compression_2}
    \mathrm{comp}(-\mathbf{x} ; \omega) &= - \mathrm{comp}(\mathbf{x} ; \omega) \quad &(\forall \omega, \forall \mathbf{x} \in \mathbb{R}^d),
\end{alignat}
where $\omega$ represents the parameter of the compression operator.
In the following, we abbreviate $\omega$ and write $\textbf{comp}(\mathbf{x})$ as the operator containing the randomness.
\end{assumption}
The assumption of Eq. \eqref{eq:bound_of_compression} is commonly used for the compression methods for the Gossip-based algorithms \citep{lu2020monniqua,vogels2020powergossip,koloskova2019decentralized}.
In addition, we assume that the compression operator satisfies Eqs. (\ref{eq:linearlity_of_compression}-\ref{eq:linearlity_of_compression_2}),
and the low-rank approximation \cite{vogels2020powergossip} and the following sparsification used in the compression methods for the Gossip-based algorithm satisfy Eqs. (\ref{eq:linearlity_of_compression}-\ref{eq:linearlity_of_compression_2}).

\begin{example}
For some $k \in (0, 100]$,
we define the operator $\textbf{rand}_{k\%}: \mathbb{R}^d \rightarrow \mathbb{R}^d$ as follows:
\begin{align}
\label{eq:rand}
    \mathrm{rand}_{k \%}(\mathbf{x}) \coloneqq \mathbf{s} \circ \mathbf{x},
\end{align}
where $\circ$ is the Hadamard product and $\mathbf{s} \in \{0, 1\}^d$ is a uniformly sampled sparse vector whose element is one with probability $k \%$.
Here, the parameter $\omega$ of the compression operator corresponds to the randomly sampled vector $\mathbf{s}$.
Then, $\textbf{rand}_{k\%}$ satisfies Assumption \ref{assumption:compression} \citep{stich2018sparsified}.
\end{example}

\subsection{Communication Compressed Edge-Consensus Learning}
In this section, we propose the \textit{the Communication Compressed ECL (C-ECL)},
the method for compressing the dual variables exchanged in the ECL using the compression operator introduced in the previous section.

In the ECL, to update $\mathbf{z}_{i|j}$ in Eq. \eqref{eq:update_z}, the node $i$ needs to receive $\mathbf{y}_{j|i}$ from the node $j$.
Because the number of elements in $\mathbf{y}_{j|i}$ is the same as that of the model parameter $\mathbf{w}_j$,
this exchange incurs significant communication costs.
A straightforward approach to reduce this communication cost is compressing $\mathbf{y}_{j|i}$ in Eq. \eqref{eq:update_z} as follows:
\begin{align}
\label{eq:naive_compression}
    \mathbf{z}^{(r+1)}_{i|j} &= (1 - \theta) \mathbf{z}^{(r)}_{i|j} + \theta \; \text{comp} ( \mathbf{y}^{(r+1)}_{j|i} ).
\end{align}
However, we found experimentally that compressing $\mathbf{y}_{j|i}$ does not work.
In the compression methods for the Gossip-based algorithm, \citet{lu2020monniqua} showed that the model parameters are not robust to the compression.
This is because the optimal solution of the model parameter is generally not zero, 
and thus the error caused by the compression does not approach zero even if the model parameters are near the optimal solution.
Therefore, the successful compression methods for the Gossip-based algorithms  
compress the gradient $\nabla f_i(\mathbf{w}_i)$ or the model difference $(\mathbf{w}_j - \mathbf{w}_i)$,
which approach zero when the model parameters are near the optimal solution \cite{koloskova2019decentralized,lu2020monniqua,vogels2020powergossip}.

Inspired by these compression methods for the Gossip-based algorithms,
we reformulate Eq. \eqref{eq:update_z} into Eq. \eqref{eq:reformulated_update_of_z} so that we can compress the parameters
which approach zero when the model parameters are near the optimal solution.
\begin{align}
\label{eq:reformulated_update_of_z}
    \mathbf{z}^{(r+1)}_{i|j} &= \mathbf{z}^{(r)}_{i|j} + \theta (\mathbf{y}^{(r+1)}_{j|i} - \mathbf{z}^{(r)}_{i|j}).
\end{align}
In the Douglas-Rachford splitting, $\mathbf{z}_{i|j}$ approaches the fixed point (i.e., $\mathbf{z}_{i|j}^{(r)}=\mathbf{z}_{i|j}^{(r+1)}$)
when the model parameters approach the optimal solution.
(See Sec. \ref{sec:preliminary} for details of the Douglas-Rachford splitting and the definition of the fixed point).
Then, from Eq. \eqref{eq:reformulated_update_of_z}, 
%when $\mathbf{z}_{i|j}$ approaches to the fixed point,
when the model parameters approach to the optimal solution,
$(\mathbf{y}_{j|i} - \mathbf{z}_{i|j})$ in Eq. \eqref{eq:reformulated_update_of_z} approaches zero.
Then, instead of compressing $\mathbf{y}_{j|i}$ as in Eq. \eqref{eq:naive_compression},
we propose compressing $(\mathbf{y}_{j|i} - \mathbf{z}_{i|j})$ as follows:
\begin{align}
\label{eq:compression_in_cecl}
    \mathbf{z}^{(r+1)}_{i|j} 
    &= \mathbf{z}^{(r)}_{i|j} + \theta \; \text{comp} (\mathbf{y}^{(r+1)}_{j|i} - \mathbf{z}^{(r)}_{i|j}) \\
    &= \mathbf{z}^{(r)}_{i|j} + \theta \; (\text{comp} (\mathbf{y}^{(r+1)}_{j|i}) - \text{comp}(\mathbf{z}^{(r)}_{i|j})), \nonumber 
\end{align}
where we use Assumption \ref{assumption:compression} in the last equation.
When $\textbf{rand}_{k\%}$ is used as the compression operator,
$\mathbf{y}_{j|i}$ and $\mathbf{z}_{i|j}$ must be compressed using the same sparse vector $\mathbf{s}$ in Eq. \eqref{eq:rand}
to use Assumption \ref{assumption:compression}.
In Alg. \ref{alg:C-ECL}, we provide the pseudo-code of the C-ECL.
For simplicity, the node $i$ and the node $j$ exchange $\omega_{i|j}$ and $\omega_{j|i}$ at Lines 5-6 in Alg. \ref{alg:C-ECL}.
However, by sharing the same seed value to generate $\omega_{i|j}$ and $\omega_{j|i}$ before starting the training,
the node $i$ and the node $j$ can obtain the same values $\omega_{i|j}$ and $\omega_{j|i}$ without any exchanges.
Moreover, when $\textbf{rand}_{k\%}$ is used as the compression operator,
the node $i$ can obtain $\omega_{i|j}$ from the received value $\textbf{comp}(\mathbf{y}_{j|i} ; \omega_{i|j})$
because $\textbf{comp}(\mathbf{y}_{j|i} ; \omega_{i|j})$ is stored in a sparse matrix format (e.g., COO format).
Thus, these exchanges of $\omega_{i|j}$ and $\omega_{j|i}$ can be omitted in practice.
Therefore, in the C-ECL, each node only needs to exchange the compressed value of $\mathbf{y}_{j|i}$,
and the C-ECL can train a model with fewer parameter exchanges than the ECL.
\begin{algorithm}[t]
   \caption{Update procedure at the node $i$ of the C-ECL.}
   \label{alg:C-ECL}
\begin{algorithmic}[1]
   \FOR{$r=0$ {\bfseries to} $R$}
   \STATE $\mathbf{w}^{(r+1)}_i \leftarrow \text{argmin}_{\mathbf{w}_i} \{ f_i(\mathbf{w}_i)
    + \frac{\alpha}{2} \sum_{j\in\mathcal{N}_i} {\left\| \mathbf{A}_{i|j} \mathbf{w}_i - \frac{1}{\alpha} \mathbf{z}^{(r)}_{i|j} \right\|}^2 \}$.
   \FOR{$j \in \mathcal{N}_i$ }
   \STATE $\mathbf{y}^{(r+1)}_{i|j} \leftarrow \mathbf{z}^{(r)}_{i|j} - 2 \alpha \mathbf{A}_{i|j} \mathbf{w}^{(r+1)}_i$.
   \STATE $\textbf{Receive}_{i\leftarrow j}(\omega_{i|j}^{(r+1)})$.  \tcp*[f]{This exchange can be omitted.}
   \STATE $\textbf{Transmit}_{i\rightarrow j}(\omega_{j|i}^{(r+1)})$.  \tcp*[f]{This exchange can be omitted.}
   \STATE $\textbf{Receive}_{i\leftarrow j}(\text{comp}( \mathbf{y}^{(r+1)}_{j|i} ; \omega^{(r+1)}_{i|j}))$.
   \STATE $\textbf{Transmit}_{i\rightarrow j}( \text{comp}(\mathbf{y}^{(r+1)}_{i|j} ; \omega^{(r+1)}_{j|i}))$.
   \STATE $\mathbf{z}^{(r+1)}_{i|j} \leftarrow \mathbf{z}^{(r)}_{i|j} + \theta \; \text{comp}(\mathbf{y}^{(r+1)}_{j|i} - \mathbf{z}^{(r)}_{i|j} ; \omega^{(r+1)}_{i|j}) $.
   \ENDFOR
   \ENDFOR
\end{algorithmic}
\end{algorithm}

\section{Convergence Analysis}
In this section, we analyze how compression in the C-ECL affects the convergence rate of the ECL.
Our convergence analysis is based on the analysis of the Douglas-Rachford splitting \citep{giselsson2017linear},
and the proofs of which are presented in Sec. \ref{sec:convergence_analysis}.

\subsection{Assumptions}
In this section, we introduce additional notations and assumptions used in our convergence analysis.
We define $N\coloneqq |\mathcal{V}|$, $N_{\text{min}} \coloneqq \text{min}_i \{ |\mathcal{N}_i| \}$, 
$N_{\text{max}} \coloneqq \text{max}_i \{ |\mathcal{N}_i| \}$
where $\mathcal{N}_i$ is the set of neighbors of the node $i$.
Let $\mathcal{N}_i(j)$ be the $j$-th smallest index of the node in $\mathcal{N}_i$,
we define $\mathbf{w} \in \mathbb{R}^{dN}$,
$\mathbf{z}_i \in \mathbb{R}^{d|\mathcal{N}_i|}$,
and $\mathbf{z} \in \mathbb{R}^{2d|\mathcal{E}|}$ as follows:
\begin{align}
\label{eq:additional_definition}
    \mathbf{w} \coloneqq (\mathbf{w}_1^\top, \ldots, \mathbf{w}_N^\top)^\top, \;\;
    \mathbf{z}_i \coloneqq (\mathbf{z}_{i|\mathcal{N}_i(1)}^\top, \ldots, \mathbf{z}_{i|\mathcal{N}_i(|\mathcal{N}_i|)}^\top)^\top, \;\;
    \mathbf{z} \coloneqq (\mathbf{z}_1^\top, \ldots, \mathbf{z}_N^\top)^\top.
\end{align}
For simplicity, we drop the superscripts of the number of round $r$.
Let $\{ \mathbf{w}_i^{\ast} \}_i$ be the optimal solution of Eq. \eqref{eq:ecl_primal},
we define $\mathbf{w}^\star \in \mathbb{R}^{dN}$ in the same manner as the definition of $\mathbf{w}$ in Eq. \eqref{eq:additional_definition}.
We define the loss function as $f(\mathbf{w}) \coloneqq \sum_{i\in\mathcal{V}} f_i(\mathbf{w}_i)$.
Next, we introduce the assumptions used in the convergence analysis.
\begin{assumption}
\label{assumption:convex}
We assume that $f$ is proper, closed and convex.
\end{assumption}
\begin{assumption}
\label{assumption:smoothness_and_strong_convex}
We assume that $f$ is $L$-smooth and $\mu$-strongly convex with $L>0$ and $\mu>0$.
\end{assumption}
\begin{assumption}
\label{assumption:no_isolated_nodes}
We assume that the graph $G$ has no isolated nodes (i.e., $N_{\text{min}}>0$).
\end{assumption}
Assumption \ref{assumption:convex}, \ref{assumption:smoothness_and_strong_convex} are
the standard assumptions used for the convergence analysis of the operator splitting methods \citep{giselsson2017linear,ryu2015primer}.
Assumption \ref{assumption:smoothness_and_strong_convex} is weaker than the assumptions of the smoothness and the strongly convexity of $f_i$ for all $i \in \mathcal{V}$, which are commonly used in decentralized learning.
Assumption \ref{assumption:no_isolated_nodes} holds in general because decentralized learning assumes that the graph $G$ is connected.
In addition, we define $\delta \in \mathbb{R}$ as follows:
\begin{align*}
    \delta \coloneqq \text{max} \left( 
    \frac{\alpha  N_{\text{max}} - \mu}{\alpha  N_{\text{max}} + \mu},
    \frac{L - \alpha  N_{\text{min}}}{L + \alpha  N_{\text{min}}}
    \right).
\end{align*}
Suppose that Assumptions \ref{assumption:convex}, \ref{assumption:smoothness_and_strong_convex}, and \ref{assumption:no_isolated_nodes} hold
and $\alpha \in (0, \infty)$ holds,
then $\delta \in [0, 1)$ holds because $L\geq\mu>0$ and $N_{\text{max}}\geq N_{\text{min}}>0$.

\subsection{Convergence Rates}
\begin{theorem}
\label{theorem:main_convergence_analysis}
Let $\bar{\mathbf{z}} \in \mathbb{R}^{2d|\mathcal{E}|}$ be the fixed point of the Douglas-Rachford splitting.\footnote{A more detailed definition is shown in Sec. \ref{sec:preliminary} and \ref{sec:convergence_analysis}.}
Suppose that Assumptions \ref{assumption:compression}, \ref{assumption:convex}, \ref{assumption:smoothness_and_strong_convex}, and \ref{assumption:no_isolated_nodes} hold.
If $\tau \geq 1 - ( \frac{1 - \delta}{1 + \delta} )^2$ and $\theta$ satisfies
\begin{align}
\label{eq:condition_of_theta}
    \theta \in \left( \frac{2\delta \sqrt{1 - \tau}}{(1 - \delta)(1 - \sqrt{1 - \tau})}, \frac{2}{(1+\delta)(1 + \sqrt{1 - \tau})} \right),
\end{align}
then $\mathbf{w}^{(r+1)}$ generated by Alg. \ref{alg:C-ECL} linearly converges to the optimal solution $\mathbf{w}^\star$ of Eq. \eqref{eq:ecl_primal} as follows:
\begin{align}
\label{eq:convergence_rate_of_cecl}
    \mathbb{E} \|\mathbf{w}^{(r+1)} - \mathbf{w}^\star \| 
    \leq \frac{\sqrt{N_{\text{max}}}}{\mu + \alpha N_{\text{min}}} \left\{ |1 - \theta| + \theta \delta + \sqrt{1 - \tau} (\theta + |1 - \theta| \delta + \delta ) \right\}^r \| \mathbf{z}^{(0)} - \bar{\mathbf{z}} \|.
\end{align}
\end{theorem}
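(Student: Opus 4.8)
The plan is to view the C-ECL as a \emph{perturbed} fixed-point iteration of the relaxed Douglas--Rachford operator, to treat the compression as a bounded multiplicative perturbation whose error vanishes at the fixed point, and then to iterate a one-step contraction on the dual error $\|\mathbf{z}^{(r)}-\bar{\mathbf{z}}\|$ before transferring it to the primal error $\|\mathbf{w}^{(r+1)}-\mathbf{w}^\star\|$. First I would package \eqref{eq:update_y}--\eqref{eq:update_z} into operator form. Stacking the per-edge maps, the $\mathbf{w}$-argmin followed by \eqref{eq:update_y} is the reflected resolvent $R_f$ of $f$, and the neighbor swap $\mathbf{y}_{j|i}\mapsto\mathbf{z}_{i|j}$ in \eqref{eq:update_z} is the reflected resolvent of the consensus constraint, which here is an isometric permutation $P$. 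Writing $S\coloneqq P R_f$, the uncompressed ECL is the relaxed iteration $\mathbf{z}^{(r+1)}=(1-\theta)\mathbf{z}^{(r)}+\theta S\mathbf{z}^{(r)}$, the fixed point obeys $S\bar{\mathbf{z}}=\bar{\mathbf{z}}$, and, using the linearity in Assumption~\ref{assumption:compression}, Alg.~\ref{alg:C-ECL} becomes
\begin{align*}
\mathbf{z}^{(r+1)}=\mathbf{z}^{(r)}+\theta\,\mathrm{comp}\big((S-I)\mathbf{z}^{(r)}\big).
\end{align*}
The structural input I would import from the linear-convergence theory of Douglas--Rachford splitting \citep{giselsson2017linear}, under Assumptions~\ref{assumption:convex}--\ref{assumption:smoothness_and_strong_convex}, is that $R_f$ is a $\delta$-contraction while $P$ is norm-preserving, so that $S$ contracts toward $\bar{\mathbf{z}}$: $\|S\mathbf{z}-\bar{\mathbf{z}}\|\le\delta\|\mathbf{z}-\bar{\mathbf{z}}\|$. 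Verifying that the factor is exactly the stated $\delta$ -- by identifying the effective proximal parameter of the $\mathbf{w}$-subproblem as $\alpha|\mathcal{N}_i|\in[\alpha N_{\text{min}},\alpha N_{\text{max}}]$ and the moduli of $f$ as $\mu,L$, then reading off $\max(\tfrac{\alpha N_{\text{max}}-\mu}{\alpha N_{\text{max}}+\mu},\tfrac{L-\alpha N_{\text{min}}}{L+\alpha N_{\text{min}}})$ -- is the most delicate preliminary.

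Next I would carry out the one-step error analysis. Set $\mathbf{e}^{(r)}\coloneqq\mathbf{z}^{(r)}-\bar{\mathbf{z}}$, $\mathbf{g}^{(r)}\coloneqq S\mathbf{z}^{(r)}-S\bar{\mathbf{z}}$ and $\mathbf{d}^{(r)}\coloneqq(S-I)\mathbf{z}^{(r)}$; since $(S-I)\bar{\mathbf{z}}=\mathbf{0}$, the update splits as
\begin{align*}
\mathbf{e}^{(r+1)}=\underbrace{\big((1-\theta)\mathbf{e}^{(r)}+\theta\mathbf{g}^{(r)}\big)}_{\text{relaxed DR step}}+\theta\big(\mathrm{comp}(\mathbf{d}^{(r)})-\mathbf{d}^{(r)}\big).
\end{align*}
The bracket has norm at most $(|1-\theta|+\theta\delta)\|\mathbf{e}^{(r)}\|$ by the contraction of $S$. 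For the compression term I take the conditional expectation over $\omega$, apply Jensen together with \eqref{eq:bound_of_compression} to get $\theta\,\mathbb{E}\|\mathrm{comp}(\mathbf{d}^{(r)})-\mathbf{d}^{(r)}\|\le\sqrt{1-\tau}\,\|\theta\mathbf{d}^{(r)}\|$, and then bound $\|\theta\mathbf{d}^{(r)}\|$ through the identity $\theta\mathbf{d}^{(r)}=-\theta\mathbf{e}^{(r)}-(1-\theta)\mathbf{g}^{(r)}+\mathbf{g}^{(r)}$, whose three pieces have norms $\theta\|\mathbf{e}^{(r)}\|$, $|1-\theta|\delta\|\mathbf{e}^{(r)}\|$ and $\delta\|\mathbf{e}^{(r)}\|$; this reproduces exactly the factor $\theta+|1-\theta|\delta+\delta$. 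Combining the two contributions gives $\mathbb{E}[\|\mathbf{e}^{(r+1)}\|\mid\mathbf{z}^{(r)}]\le c\,\|\mathbf{e}^{(r)}\|$ with $c$ the bracketed factor of \eqref{eq:convergence_rate_of_cecl}; taking total expectations and iterating yields $\mathbb{E}\|\mathbf{e}^{(r)}\|\le c^{\,r}\|\mathbf{e}^{(0)}\|$.

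It then remains to certify $c<1$ on the admissible region and to pass to $\mathbf{w}$. Splitting into the cases $\theta\le1$ and $\theta>1$ and cancelling, the inequality $c<1$ collapses to $\theta>\frac{2\delta\sqrt{1-\tau}}{(1-\delta)(1-\sqrt{1-\tau})}$ and $\theta<\frac{2}{(1+\delta)(1+\sqrt{1-\tau})}$ respectively, i.e.\ precisely the interval \eqref{eq:condition_of_theta}; clearing denominators shows this interval is non-empty iff $\sqrt{1-\tau}\le\frac{1-\delta}{1+\delta}$, the stated threshold on $\tau$. Finally, the aggregate $\mathbf{w}$-objective $\sum_i\{f_i(\mathbf{w}_i)+\frac{\alpha}{2}\sum_{j}\|\mathbf{A}_{i|j}\mathbf{w}_i-\frac1\alpha\mathbf{z}^{(r)}_{i|j}\|^2\}$ is $(\mu+\alpha N_{\text{min}})$-strongly convex by Assumption~\ref{assumption:smoothness_and_strong_convex} together with the quadratic penalty, so its minimizer $\mathbf{w}^{(r+1)}$ is Lipschitz in $\mathbf{z}^{(r)}$; bounding this Lipschitz map, in which each node aggregates at most $N_{\text{max}}$ dual blocks, produces the prefactor in $\|\mathbf{w}^{(r+1)}-\mathbf{w}^\star\|\le\frac{\sqrt{N_{\text{max}}}}{\mu+\alpha N_{\text{min}}}\|\mathbf{e}^{(r)}\|$. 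Substituting the iterated dual bound closes the argument.

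I expect the main obstacle to be establishing that $S$ contracts with the specific constant $\delta$ expressed through $\mu,L,\alpha,N_{\text{min}},N_{\text{max}}$: this requires carefully aligning the ECL $\mathbf{w}$-update with the reflected-resolvent calculus of \citep{giselsson2017linear} and handling the heterogeneous neighbor counts, since Assumption~\ref{assumption:smoothness_and_strong_convex} controls only the aggregate $f$ rather than each $f_i$. A secondary difficulty is choosing the regrouping of $\theta\mathbf{d}^{(r)}$ and the case split so that the resulting factor matches the clean endpoints of \eqref{eq:condition_of_theta} and the sharp $\tau$-threshold rather than a looser sufficient condition.
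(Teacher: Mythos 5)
Your proposal is correct and takes essentially the same route as the paper's proof: the same reformulation of Alg.~\ref{alg:C-ECL} as a compressed fixed-point iteration (Eqs.~\eqref{eq:first_reflected_resolvent}--\eqref{eq:second_reflected_resolvent}), the same $\delta$-contraction of the composed reflected resolvents obtained from \citep{giselsson2017linear} via the dual moduli $N_{\text{max}}/\mu$ and $N_{\text{min}}/L$ (Lemmas~\ref{lemma:singular_values}--\ref{lemma:contractiveness_of_two_reflected_resolvent}), the identical error split and regrouping yielding the factor $|1-\theta|+\theta\delta+\sqrt{1-\tau}(\theta+|1-\theta|\delta+\delta)$ with the same $\theta\le 1$ / $\theta\ge 1$ case analysis (Lemmas~\ref{lemma:contractive_of_z}--\ref{lemma:condition_of_contractiveness_of_z}), and the same $\bigl(\sqrt{N_{\text{max}}}/(\mu+\alpha N_{\text{min}})\bigr)$-Lipschitz transfer from the dual error to the primal iterate (Lemma~\ref{lemma:lipschitz_of_w}). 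The only step you leave implicit is that the primal optimum $\mathbf{w}^\star$ is the image of the fixed point $\bar{\mathbf{z}}$ under the $\mathbf{w}$-update map, i.e. the primal--dual correspondence the paper proves separately (Lemma~\ref{lemma:relationship_between_primal_and_dual}), which is needed before your final Lipschitz bound applies to $\|\mathbf{w}^{(r+1)}-\mathbf{w}^\star\|$.
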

\begin{corollary}
\label{corollary:convergenece_rate_of_ecl}
Let $\bar{\mathbf{z}} \in \mathbb{R}^{2d|\mathcal{E}|}$ be the fixed point of the Douglas-Rachford splitting.
Under Assumptions \ref{assumption:compression}, \ref{assumption:convex}, \ref{assumption:smoothness_and_strong_convex}, and \ref{assumption:no_isolated_nodes},
when $\tau = 1$ and $\theta \in ( 0, \frac{2}{1+\delta} )$,
$\mathbf{w}^{(r+1)}$ generated by Alg. \ref{alg:C-ECL} linearly converges to the optimal solution $\mathbf{w}^\star$ of Eq. \eqref{eq:ecl_primal} as follows:
\begin{align}
\label{eq:convergence_rate_of_ecl}
    \mathbb{E} \|\mathbf{w}^{(r+1)} - \mathbf{w}^\star \| \leq \frac{\sqrt{N_{\text{max}}}}{\mu + \alpha N_{\text{min}}} ( |1 - \theta| + \theta \delta )^r \| \mathbf{z}^{(0)} - \bar{\mathbf{z}} \|.
\end{align}
\end{corollary}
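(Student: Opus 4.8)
The plan is to obtain Corollary~\ref{corollary:convergenece_rate_of_ecl} directly as the special case $\tau = 1$ of Theorem~\ref{theorem:main_convergence_analysis}, so that no new estimate is needed beyond checking that the hypotheses and the claimed rate match after substitution. First I would verify the theorem's threshold condition $\tau \geq 1 - \left(\frac{1-\delta}{1+\delta}\right)^2$: since $\delta \in [0,1)$ gives $\left(\frac{1-\delta}{1+\delta}\right)^2 \geq 0$, we have $1 - \left(\frac{1-\delta}{1+\delta}\right)^2 \leq 1 = \tau$, so this hypothesis holds trivially.

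Next I would substitute $\sqrt{1-\tau} = 0$ into the admissible interval for $\theta$ in Eq.~\eqref{eq:condition_of_theta}. The lower endpoint $\frac{2\delta\sqrt{1-\tau}}{(1-\delta)(1-\sqrt{1-\tau})}$ collapses to $0$; here I would note that the denominator $(1-\delta)(1-\sqrt{1-\tau})$ tends to $(1-\delta) > 0$ by Assumption~\ref{assumption:no_isolated_nodes} and $\delta < 1$, so there is no degeneracy. The upper endpoint $\frac{2}{(1+\delta)(1+\sqrt{1-\tau})}$ collapses to $\frac{2}{1+\delta}$. This recovers exactly the range $\theta \in \left(0, \frac{2}{1+\delta}\right)$ assumed in the corollary, so the theorem applies for every admissible $\theta$ appearing here.

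Finally I would specialize the contraction factor in Eq.~\eqref{eq:convergence_rate_of_cecl}. Setting $\sqrt{1-\tau} = 0$ makes the term $\sqrt{1-\tau}(\theta + |1-\theta|\delta + \delta)$ vanish, leaving the factor $|1-\theta| + \theta\delta$; together with the unchanged prefactor $\frac{\sqrt{N_{\text{max}}}}{\mu + \alpha N_{\text{min}}}$ and the initial gap $\|\mathbf{z}^{(0)} - \bar{\mathbf{z}}\|$, this is precisely Eq.~\eqref{eq:convergence_rate_of_ecl}.

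The step requiring the most care is interpretive rather than computational. Assumption~\ref{assumption:compression} with $\tau = 1$ forces $\mathbb{E}_\omega\|\text{comp}(\mathbf{x};\omega) - \mathbf{x}\|^2 \leq 0$ for all $\mathbf{x}$, so $\text{comp}$ is the identity almost surely and Alg.~\ref{alg:C-ECL} reduces to the uncompressed ECL. I would flag this so it is clear that the corollary recovers the standard linear rate of the ECL / Douglas--Rachford splitting as a genuine limiting case rather than asserting a separate bound; beyond confirming that the interval endpoints remain well-posed in the $\sqrt{1-\tau}\to 0$ limit, there is no substantive obstacle.
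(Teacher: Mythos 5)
Your proposal is correct and follows exactly the paper's route: the paper proves this corollary in one line by specializing Theorem~\ref{theorem:main_convergence_analysis} to $\tau=1$, which is precisely what you do, with the substitution details (threshold check, collapse of the $\theta$-interval to $(0, \tfrac{2}{1+\delta})$, vanishing of the $\sqrt{1-\tau}$ term) spelled out. One minor phrasing note: with $\tau=1$ the endpoints are obtained by direct substitution rather than a limit, so no limiting argument is actually needed.
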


Because $\tau=1$ implies that $\textbf{comp}(\mathbf{x})=\mathbf{x}$ in the C-ECL,
Corollary \ref{corollary:convergenece_rate_of_ecl} shows the convergence rate of the ECL under Assumptions \ref{assumption:compression}, \ref{assumption:convex}, \ref{assumption:smoothness_and_strong_convex}, and \ref{assumption:no_isolated_nodes},
which is almost the same rate as that shown in the previous work \citep{giselsson2017linear}.
Comparing the domain of $\theta$ for the ECL and the C-ECL to converge, 
as $\tau$ decreases, the domain in Eq. \eqref{eq:condition_of_theta} becomes smaller.
Subsequently, in order for the domain in Eq. \eqref{eq:condition_of_theta} to be non-empty,
$\tau$ must be greater than or equal to $(1 - (1-\delta)^2/(1+\delta)^2)$.
Next, comparing the convergence rate of the ECL and the C-ECL,
the compression in the C-ECL slows down the convergence rate of the ECL by the term $(\sqrt{1 - \tau} (\theta + |1 - \theta| \delta + \delta ))$.
Moreover, similar to the convergence analysis of the Douglas-Rachford splitting \citep{giselsson2017linear},
Theorem \ref{theorem:main_convergence_analysis} and Corollary \ref{corollary:convergenece_rate_of_ecl}
imply that the optimal parameter of $\theta$ can be determined as follows:

\begin{corollary}
\label{corollary:optimal_convergence_rate}
Suppose that \ref{assumption:compression}, \ref{assumption:convex}, \ref{assumption:smoothness_and_strong_convex}, and \ref{assumption:no_isolated_nodes} hold,
and $\tau \geq 1 - ( \frac{1 - \delta}{1 + \delta} )^2$,
the optimal convergence rate of Eq. \eqref{eq:convergence_rate_of_cecl} in the C-ECL is achieved when $\theta=1$.
\end{corollary}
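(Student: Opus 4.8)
The plan is to read the bracketed quantity in Eq.~\eqref{eq:convergence_rate_of_cecl} as a one-variable function of $\theta$ and minimize it directly over the feasible interval of Eq.~\eqref{eq:condition_of_theta}. Writing $s \coloneqq \sqrt{1-\tau}$, the per-round contraction factor is
\[
\rho(\theta) \coloneqq |1-\theta| + \theta\delta + s\,(\theta + |1-\theta|\,\delta + \delta).
\]
Because $\rho$ governs the linear rate in Theorem~\ref{theorem:main_convergence_analysis} (and Corollary~\ref{corollary:convergenece_rate_of_ecl} is just its $\tau=1$, i.e.\ $s=0$, specialization), determining the optimal $\theta$ amounts to locating the minimizer of $\rho$. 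The only nonlinearity is the kink at $\theta=1$ produced by $|1-\theta|$, so I would treat $\rho$ as a piecewise-linear function with a single breakpoint there.

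First I would split on the sign of $1-\theta$. For $\theta\le 1$, substituting $|1-\theta|=1-\theta$ and collecting terms gives, after routine algebra, $\rho(\theta) = (1 + 2s\delta) - \theta\,(1-\delta)(1-s)$; since $\delta\in[0,1)$ and $s=\sqrt{1-\tau}<1$ under the stated hypothesis, the slope $-(1-\delta)(1-s)$ is strictly negative, so $\rho$ is strictly decreasing on $(-\infty,1]$. For $\theta\ge 1$, substituting $|1-\theta|=\theta-1$ yields $\rho(\theta) = \theta\,(1+\delta)(1+s) - 1$, whose slope $(1+\delta)(1+s)$ is strictly positive, so $\rho$ is strictly increasing on $[1,\infty)$. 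A quick check confirms the two one-sided expressions agree at $\theta=1$, both equalling $\rho(1)=\delta + s(1+\delta)$, so $\rho$ is continuous and attains its unique global minimum exactly at $\theta=1$.

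It then remains to verify that $\theta=1$ actually lies in the admissible interval of Eq.~\eqref{eq:condition_of_theta}. The hypothesis $\tau \ge 1-\big(\tfrac{1-\delta}{1+\delta}\big)^2$ is equivalent, upon taking nonnegative square roots, to $s \le \tfrac{1-\delta}{1+\delta}$, and for a non-degenerate interval I would work with the strict version $s < \tfrac{1-\delta}{1+\delta}$. For the upper endpoint, $1 < \tfrac{2}{(1+\delta)(1+s)}$ is equivalent to $(1+\delta)(1+s)<2$, i.e.\ $s<\tfrac{1-\delta}{1+\delta}$; for the lower endpoint, clearing denominators reduces $\tfrac{2\delta s}{(1-\delta)(1-s)}<1$ to $s(1+\delta)<1-\delta$, again $s<\tfrac{1-\delta}{1+\delta}$. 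Hence both endpoint inequalities hold and $\theta=1$ is strictly interior, so the minimizer is feasible and the optimal contraction factor is achieved there.

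The computations are elementary, so the real care is the bookkeeping around the kink and the boundary behaviour. I expect the main subtlety to be the degenerate case $s=\tfrac{1-\delta}{1+\delta}$ (i.e.\ $\tau$ exactly at the threshold), where both endpoints of Eq.~\eqref{eq:condition_of_theta} collapse to $1$ and the feasible interval is empty; this is why the argument requires the strict inequality to place $\theta=1$ strictly inside. A related minor point is that when $s=1$ (only possible if $\tau=0$, hence $\delta=0$) the slope on $(-\infty,1]$ vanishes and the minimizer is no longer unique, but $\theta=1$ still attains the minimum, so the statement of Corollary~\ref{corollary:optimal_convergence_rate} is unaffected.
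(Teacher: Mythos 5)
Your proof is correct and takes essentially the same approach as the paper's: split the contraction factor at the kink $\theta=1$, note that for $\theta\le 1$ it equals $1+2\delta\sqrt{1-\tau}-\theta(1-\sqrt{1-\tau})(1-\delta)$ (strictly decreasing) and for $\theta\ge 1$ it equals $-1+\theta(1+\delta)(1+\sqrt{1-\tau})$ (strictly increasing), hence the minimum is at $\theta=1$. Your extra feasibility check that $\theta=1$ lies inside the interval of Eq.~\eqref{eq:condition_of_theta} is what the paper records in Lemma~\ref{lemma:condition_of_contractiveness_of_z} (the domain is non-empty and contains $1$), so the two arguments match in substance.
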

\begin{corollary}
\label{corollary:optimal_convergence_rate_of_ecl}
Suppose that Assumption \ref{assumption:compression}, \ref{assumption:convex}, \ref{assumption:smoothness_and_strong_convex}, and \ref{assumption:no_isolated_nodes} hold,
and $\tau=1$,
the optimal convergence rate of Eq. \eqref{eq:convergence_rate_of_ecl} is achieved when $\theta=1$.
\end{corollary}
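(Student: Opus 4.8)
The plan is to reduce the statement to an elementary one-dimensional minimization. Since $\tau=1$, the prefactor $\frac{\sqrt{N_{\text{max}}}}{\mu+\alpha N_{\text{min}}}\|\mathbf{z}^{(0)}-\bar{\mathbf{z}}\|$ in Eq.~\eqref{eq:convergence_rate_of_ecl} does not depend on $\theta$, so the rate is governed entirely by the base of the geometric factor. I therefore define $g(\theta)\coloneqq|1-\theta|+\theta\delta$ and aim to show that $g$ is minimized over the admissible interval $\theta\in(0,\frac{2}{1+\delta})$ precisely at $\theta=1$. Because $x\mapsto x^r$ is increasing on $[0,\infty)$ for fixed $r$, minimizing the rate $g(\theta)^r$ is equivalent to minimizing $g(\theta)$ itself, so it suffices to locate the minimizer of $g$.

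First I would split on the sign of $1-\theta$ to remove the absolute value. For $\theta\in(0,1]$ one has $g(\theta)=1-\theta(1-\delta)$, which is strictly decreasing in $\theta$ because $1-\delta>0$; here I use $\delta\in[0,1)$, which holds under Assumptions~\ref{assumption:convex}, \ref{assumption:smoothness_and_strong_convex}, and \ref{assumption:no_isolated_nodes} as remarked just before the theorem. For $\theta\in[1,\frac{2}{1+\delta})$ one has $g(\theta)=\theta(1+\delta)-1$, which is strictly increasing in $\theta$ because $1+\delta>0$. The two branches agree at $\theta=1$ with the common value $g(1)=\delta$, so $g$ is piecewise linear, strictly decreasing to the left of $1$ and strictly increasing to the right, and hence attains its unique minimum at $\theta=1$.

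The one point requiring care is checking that the candidate minimizer lies in the open admissible interval rather than at an excluded endpoint. I would verify $0<1<\frac{2}{1+\delta}$; the right inequality is equivalent to $\delta<1$, which is guaranteed by the standing assumptions, so $\theta=1$ is interior and the minimum is genuinely attained there. For completeness I would note that at the right endpoint $g\to 1$, confirming that $(0,\frac{2}{1+\delta})$ is exactly the contraction region and that $g(1)=\delta<1$ indeed gives linear convergence.

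I do not expect any substantial obstacle here: the result is a routine piecewise-linear minimization, and the entire argument consists of the sign split together with the interior-point check. This same reasoning is the specialization to $\tau=1$ of the more general minimization behind Corollary~\ref{corollary:optimal_convergence_rate}, with the extra compression term $\sqrt{1-\tau}\,(\theta+|1-\theta|\delta+\delta)$ vanishing identically when $\tau=1$.
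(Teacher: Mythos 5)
Your proof is correct and follows essentially the same route as the paper: the paper proves this corollary by citing Corollary~\ref{corollary:optimal_convergence_rate}, whose proof is exactly your sign-split on $|1-\theta|$ showing the rate is decreasing for $\theta\leq 1$ and increasing for $\theta\geq 1$; you simply carry out that piecewise-linear minimization directly at $\tau=1$, where the compression term vanishes. Your additional check that $\theta=1$ lies in the interior of $(0,\tfrac{2}{1+\delta})$ is a sound (and slightly more careful) touch, but the argument is the same.
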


\section{Experiments}
In this section, we demonstrate that the C-ECL can achieve almost the same performance with fewer parameter exchanges than the ECL.
Furthermore, we show that the C-ECL is more robust to heterogeneous data than the Gossip-based algorithm.

\subsection{Experimental Setting}
\textbf{Dataset and Model:}
We evaluate the C-ECL using FashionMNIST \cite{xiao2017/online} and CIFAR10 \citep{Krizhevsky09learningmultiple},
which are datasets of 10-class image-classification tasks.
As the models for both datasets, we use 5-layer convolutional neural networks \citep{lecun1998gradientbased} with group normalization \cite{wu2018group}.
Following the previous work \cite{niwa2020edge}, 
we distribute data to the nodes in two settings: the homogeneous and heterogeneous settings.
In the homogeneous setting, the data are distributed such that each node has the data of all $10$ classes
and has approximately the same number of data of each class.
In the heterogeneous setting, the data are distributed such that each node has the data of randomly selected $8$ classes.
In both settings, the data are distributed to the nodes such that each node has the same number of data.

\textbf{Network:}
In Sec. \ref{sec:experimental_results}, we evaluate all comparison methods on a network of a ring consisting of eight nodes.
%(i.e., $| \mathcal{V} |=8$, $|\mathcal{N}_i|=2, \forall i \in \mathcal{V}$).
In addition, in Sec. \ref{sec:network_topology},
we evaluate all comparison methods in four settings of the network topology: chain, ring, multiplex ring, and fully connected graph,
where each setting consists of eight nodes.
In Sec. \ref{sec:experimental_setting}, we show a visualization of the network topology.
Each node exchanges parameters with its neighbors per five local updates.
We implement with PyTorch using gloo\footnote{\url{https://pytorch.org/docs/stable/distributed.html}} as the backend,
and run all comparison methods on eight GPUs (NVIDIA RTX 3090).

\textbf{Comparison Methods:}
(1) D-PSGD \citep{lian2017can}: The uncompressed Gossip-based algorithm.
(2) PowerGossip \citep{vogels2020powergossip}: The Gossip-based algorithm that compresses the exchanging parameters by using a low-rank approximation. We use the PowerGossip as the compression method for the Gossip-based algorithm
because the PowerGossip has been shown to achieve almost the same performance as other existing compression methods
without additional hyperparameter tuning.
%\footnote{\citet{vogels2020powergossip} discussed only the case when $K=1$. 
%In the experiments, we extend the PowerGossip for the case when $K>1$, and show this algorithm in Appendix.}.
(3) ECL \citep{niwa2020edge,niwa2021asynchronous}: The primal-dual algorithm described in Sec. \ref{sec:primal_dual}.
%Since \citet{niwa2020edge} showed that the Peaceman-Rachford splitting (i.e., $\theta=1$) converges faster than the Douglas-Rachford splitting,
Because \citet{niwa2020edge} showed that the ECL converges faster when $\theta=1$ than when $\theta=0.5$,
we set $\theta=1$.
(4) C-ECL: Our proposed method described in Sec. \ref{sec:proposed}.
We use $\textbf{rand}_{k\%}$ as the compression operator.
Following Corollary \ref{corollary:optimal_convergence_rate}, we set $\theta=1$.
We initialize $\mathbf{z}_{i|j}$ and $\mathbf{y}_{i|j}$ to zeros.
However, we found that when we compress the update values of $\mathbf{z}_{i|j}$ by using $\textbf{rand}_{k\%}$,
the convergence becomes slower because $\mathbf{z}_{i|j}$ remains to be sparse in the early training stage.
Then, we set $k\%$ of $\textbf{rand}_{k\%}$ to $100\%$ only during the first epoch.

In addition, for the reference, we show the results of the Stochastic Gradient Descent (SGD),
in which the model is trained on a single node containing all training data. 
In our experiments, we set the learning rate, number of epochs, and batch size to the same values for all comparison methods.
In Sec. \ref{sec:experimental_setting}, we show the detailed hyperparameters used for all comparison methods.

\subsection{Experimental Results}
\label{sec:experimental_results}
In this section, we evaluate the accuracy and the communication costs when setting the network topology to be a ring.

\begin{table}[t]
\center
\caption{Test accuracy and communication costs on the homogeneous setting.
For the C-ECL, the number in the bracket is $k$ of $\textbf{rand}_{k\%}$.
For the PowerGossip, the number in the bracket is the number of the power iteration steps.
As the communication costs, the average amount of parameter sent per epoch is shown.}
\label{table:iid}
\vskip -0.1 in
\begin{tabular}{lcrrccrrr}
               & \multicolumn{3}{c}{FashionMNIST} && \multicolumn{3}{c}{CIFAR10} \\
               & Accuracy       & \multicolumn{2}{c}{Send/Epoch}      && Accuracy    & \multicolumn{2}{c}{Send/Epoch} \\
\cmidrule[\heavyrulewidth]{1-4} 
\cmidrule[\heavyrulewidth]{6-8}
SGD             &  $88.7$  & \multicolumn{2}{c}{-} &&  $75.7$  &     \multicolumn{2}{c}{-} \\
\cmidrule{1-4} 
\cmidrule{6-8}
D-PSGD          &  $84.1$  &  $5336$ KB  & ($\times 1.0$)  && $72.8$  & $6255$ KB & ($\times 1.0$) \\
ECL             &  $84.4$  &  $5336$ KB  & ($\times 1.0$)  && $72.6$  & $6255$ KB & ($\times 1.0$) \\
\cmidrule{1-4} 
\cmidrule{6-8}
%PowerGossip (8) &  $84.1$  &   $870$ KB  & ($\times 6.1$)  && $72.4$  &  $887$ KB & ($\times 7.0$) \\
PowerGossip (1)  &  $84.0$  &   $138$ KB  & ($\times 38.7$)  && $72.0$  &  $135$ KB & ($\times 46.3$) \\
PowerGossip (10) &  $84.3$  &   $1079$ KB  & ($\times 5.0$)  && $72.3$  &  $1102$ KB & ($\times 5.7$) \\
PowerGossip (20) &  $84.2$  &   $2124$ KB  & ($\times 2.5$)  && $72.2$  &  $2175$ KB & ($\times 2.9$) \\
\cmidrule{1-4} 
\cmidrule{6-8}
C-ECL ($1\%$)   &  $84.0$   &   $115$ KB & ($\times 48.1$) && $71.5$  &  $132$ KB & ($\times 47.4$) \\
C-ECL ($10\%$)  &  $84.0$   &  $1075$ KB & ($\times 5.1$)  && $71.4$  & $1257$ KB & ($\times 5.0$) \\
C-ECL ($20\%$)  &  $84.0$   &  $2142$ KB & ($\times 2.5$)  && $71.1$  & $2507$ KB & ($\times 2.5$) \\
\cmidrule[\heavyrulewidth]{1-4} 
\cmidrule[\heavyrulewidth]{6-8}
\end{tabular}
\end{table}

\begin{table}[t]
\center
\caption{Test accuracy and communication costs on the heterogeneous setting.
For the C-ECL, the number in the bracket is $k$ of $\textbf{rand}_{k\%}$.
For the PowerGossip, the number in the bracket is the number of the power iteration steps.
As the communication costs, the average amount of parameter sent per epoch is shown.}
\label{table:noniid}
\vskip -0.1 in
\begin{tabular}{lcrrccrrr}
               & \multicolumn{3}{c}{FashionMNIST} && \multicolumn{3}{c}{CIFAR10} \\
               & Accuracy       & \multicolumn{2}{c}{Send/Epoch}      && Accuracy    & \multicolumn{2}{c}{Send/Epoch} \\
\cmidrule[\heavyrulewidth]{1-4} 
\cmidrule[\heavyrulewidth]{6-8}
SGD             &  $88.7$ & \multicolumn{2}{c}{-} && $75.7$ &     \multicolumn{2}{c}{-} \\
\cmidrule{1-4} 
\cmidrule{6-8}
D-PSGD          &  $79.4$  &  $5336$ KB  & ($\times 1.0$)  &&  $70.8$   &  $6155$ KB  & ($\times 1.0$) \\
ECL             &  $84.5$  &  $5336$ KB  & ($\times 1.0$)  &&  $72.7$   &  $6155$ KB  & ($\times 1.0$) \\
\cmidrule{1-4} 
\cmidrule{6-8}
%PowerGossip (8) &  $77.9$  &   $870$ KB  & ($\times 6.1$)  &&  $67.1$   &   $872$ KB  &  ($\times 7.0$)\\
PowerGossip (1)  &  $77.5$  &   $138$ KB  & ($\times 38.7$)  && $64.3$  &  $133$ KB & ($\times 46.3$) \\
PowerGossip (10) &  $77.7$  &   $1079$ KB  & ($\times 5.0$)  && $67.2$  &  $1084$ KB & ($\times 5.7$) \\
PowerGossip (20) &  $77.4$  &   $2124$ KB  & ($\times 2.5$)  && $67.9$  &  $2141$ KB & ($\times 2.9$) \\
\cmidrule{1-4} 
\cmidrule{6-8}
C-ECL ($1\%$)   &  $77.7$  &   $115$ KB  & ($\times 48.1$) &&  $60.2$   &   $129$ KB  & ($\times 47.7$)\\
C-ECL ($10\%$)  &  $83.4$  &  $1075$ KB  &  ($\times 5.1$) &&  $61.8$   &  $1237$ KB  &  ($\times 5.0$)\\
C-ECL ($20\%$)  &  $83.6$  &  $2142$ KB  &  ($\times 2.5$) &&  $72.3$   &  $2467$ KB  &  ($\times 2.5$)\\
\cmidrule[\heavyrulewidth]{1-4} 
\cmidrule[\heavyrulewidth]{6-8}
\end{tabular}
\vskip -0.2 in
\end{table}
\textbf{Homogeneous Setting:}
First, we discuss the results on the homogeneous setting.
Table \ref{table:iid} shows the accuracy and the communication costs on the homogeneous setting.
The results show that the D-PSGD and the ECL achieve almost the same accuracy on both datasets.
Then, the C-ECL and the PowerGossip are comparable and achieve almost the same accuracy as the ECL and the D-PSGD 
even when we set $k\%$ of $\textbf{rand}_{k\%}$ to $1\%$ and the number of the power iteration steps to $1$ respectively.
Therefore, the C-ECL can achieve the comparable accuracy with approximately $50$-times fewer parameter exchanges than the ECL and the D-PSGD
on the homogeneous setting.

\textbf{Heterogeneous Setting}:
Next, we discuss the results on the heterogeneous setting.
Table \ref{table:noniid} shows the accuracy and the communication costs on the heterogeneous setting.
In the D-PSGD, the accuracy on the heterogeneous setting decreases by approximately $3\%$ compared to that on the homogeneous setting.
In the PowerGossip, even if the number of power iteration steps is increased, 
the accuracy does not approach that of the D-PSGD and the ECL.
On the other hand, the accuracy of the ECL is almost the same on both the homogeneous and heterogeneous settings,
and the results indicate that the ECL is more robust to heterogeneous data than the D-PSGD.
In the C-ECL, when we set $k\%$ of $\textbf{rand}_{k\%}$ to $1\%$, 
the accuracy on the heterogeneous setting decreases by approximately $10\%$ compared to that of the ECL.
However, when we increase $k\%$ of $\textbf{rand}_{k\%}$, 
the C-ECL is competitive with the ECL and outperforms the D-PSGD and the PowerGossip.
Specifically, on FashionMNIST, when we set $k\%$ to $10\%$, the C-ECL is competitive to the ECL
and outperforms the D-PSGD and the PowerGossip.
On CIFAR10, when we set $k\%$ to $20\%$, the C-ECL is competitive with the ECL
and outperforms the D-PSGD and the PowerGossip.

In summary, on the homogeneous setting, 
the C-ECL and the PowerGossip can achieve almost the same accuracy as the ECL and the D-PSGD with approximately $50$-times fewer parameter exchanges.
On the heterogeneous setting, 
the C-ECL can achieve almost the same accuracy as the ECL with approximately $4$-times fewer parameter exchanges
and can outperform the PowerGossip.
Furthermore, the results show that the C-ECL can outperform the D-PSGD, the uncompressed Gossip-based algorithm,
in terms of both the accuracy and the communication costs.

\subsection{Network Topology}
\label{sec:network_topology}
In this section, we show the accuracy and the communication costs when the network topology is varied.
Table \ref{table:cost} and Fig. \ref{fig:fashion} show the communication costs and the accuracy on FashionMNIST
when the network topology is varied as a chain, ring, multiplex ring, or fully connected graph.

On the homogeneous setting, Fig. \ref{fig:fashion} shows that the accuracy of all comparison methods are almost the same
and reach that of the SGD on all network topologies.
On the heterogeneous setting, Fig. \ref{fig:fashion} shows that the accuracy of the D-PSGD and the PowerGossip are decreased compared to that on the homogeneous setting. 
On the other hand, the accuracy of the ECL is almost the same as on the homogeneous setting on all network topologies.
Then, on all network topologies, the C-ECL achieves almost the same accuracy as the ECL with fewer parameters exchanges
and consistently outperforms the PowerGossip.
Moreover, the results show that on the heterogeneous setting,
the C-ECL can outperform the D-PSGD, the uncompressed Gossip-based algorithm, 
on all network topologies in terms of both the accuracy and the communication costs.
\begin{figure}[t]
\center
\subfigure[Homogeneous Setting]{
    \includegraphics[width=0.95\hsize]{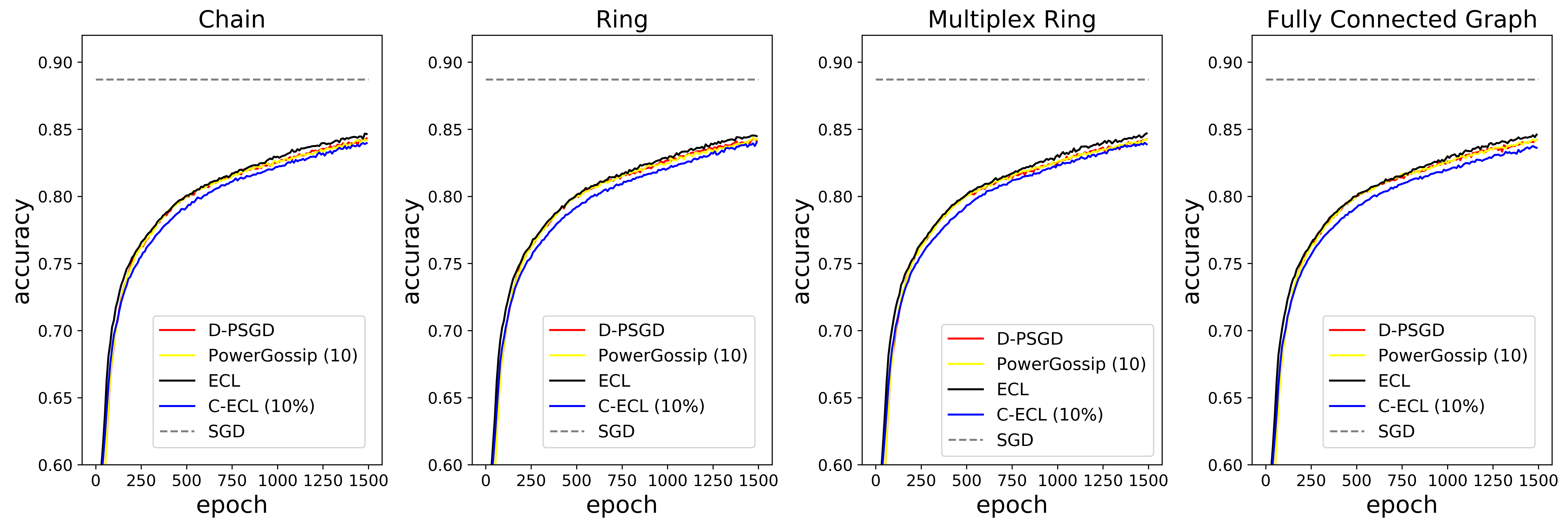}
}
\vskip -0.05 in
\subfigure[Heterogeneous Setting]{
    \includegraphics[width=0.95\hsize]{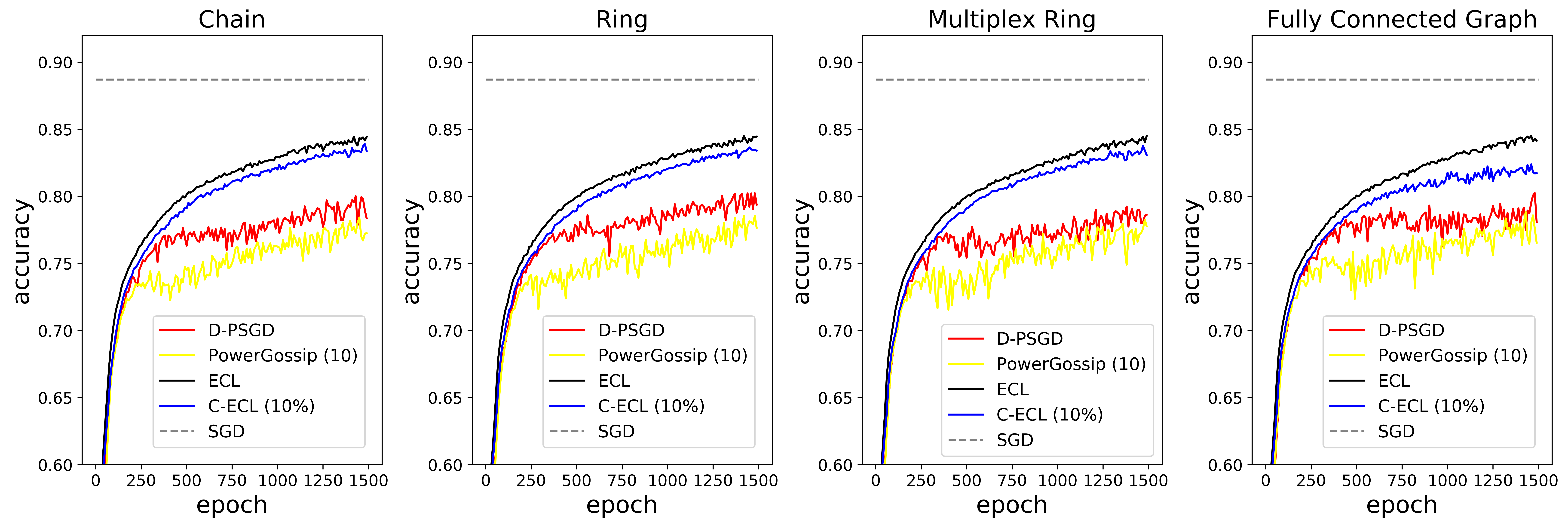}
}
\vskip -0.1 in
\caption{Test accuracy on FashionMNIST when varying the network topology.
We evaluate the average test accuracy of each node per 10 epochs.}
\label{fig:fashion}
\end{figure}
\begin{table}[t]
\center
\caption{Communication costs on FashionMNIST when varying the network topology. 
As the communication costs, the average amount of parameters sent per epoch on homogeneous and heterogeneous settings is shown.}
\label{table:cost}
\begin{tabular}{lrrrr}
                 & \multicolumn{1}{c}{Chain} & \multicolumn{1}{c}{Ring} & \multicolumn{1}{c}{Multiplex Ring} & \multicolumn{1}{c}{Fully Connected Graph} \\
\toprule
D-PSGD           & $4670$ KB & $5336$ KB & $10673$ KB & $18677$ KB  \\
ECL              & $4670$ KB & $5336$ KB & $10673$ KB & $18677$ KB  \\
PowerGossip (10) &  $944$ KB & $1078$ KB &  $2158$ KB &  $3776$ KB  \\
C-ECL ($10\%$)   &  $941$ KB & $1075$ KB &  $2151$ KB &  $3764$ KB  \\
\bottomrule
\end{tabular}
\vskip -0.15 in
\end{table}

\section{Conclusion}
In this work, we propose the Communication Compressed ECL (C-ECL), 
the novel framework for the compression methods of the ECL.
Specifically, we reformulate the update formula of the ECL and propose compressing the update values of the dual variables.
Theoretically, we analyze the convergence rate of the C-ECL
and show that the C-ECL converges linearly to the optimal solution as well as the ECL.
Experimentally, we demonstrate that, even if the data distribution of each node is statistically heterogeneous,
the C-ECL can achieve almost the same accuracy as the ECL with fewer parameter exchanges.
Moreover, we show that,
when the data distribution of each node is statistically heterogeneous, 
the C-ECL outperforms the uncompressed Gossip-based algorithm in terms of both the accuracy and the communication costs.

\section*{Acknowledgments}
M.Y. was supported by MEXT KAKENHI Grant Number 20H04243.

\bibliography{ref}

%%%%%%%%%%%%%%%%%%%%%%%%%%%%%%%%%%%%%%%%%%%%%%%%%%%%%%%%%%%%
%%%%%%%%%%%%%%%%%%%%%%%%%%%%%%%%%%%%%%%%%%%%%%%%%%%%%%%%%%%%
\newpage
\appendix
\section{Preliminary}
\label{sec:preliminary}
In this section, we introduce the definitions used in the following section
and briefly introduce the Douglas-Rachford splitting.
(See \cite{bauschke2011convex,ryu2015primer} for more details.)

\subsection{Definition}
\begin{definition}[Smooth Function]
Let $f: \mathcal{H}\rightarrow \mathbb{R} \cup \{ \infty \}$ be a closed, proper, and convex function.
$f$ is $L$-smooth if $f$ satisfies the following:
\begin{align*}
    f (x) \leq f (y) + \langle \nabla f(y), x - y \rangle + \frac{L}{2} \left\| x - y \right\|^2
    \qquad (\forall x, y \in \mathcal{H}).    
\end{align*}
\end{definition}

\begin{definition}[Strongly Convex Function]
Let $f: \mathcal{H}\rightarrow \mathbb{R} \cup \{ \infty \}$ be a closed, proper, and convex function.
$f$ is $\mu$-strongly convex if $f$ satisfies the following:
\begin{align*}
    f (x) \geq f (y) + \langle \nabla f(y), x - y \rangle + \frac{\mu}{2} \left\| x - y \right\|^2
    \qquad (\forall x, y \in \mathcal{H}).    
\end{align*}
\end{definition}

\begin{definition}[Conjugate Function]
Let $f: \mathcal{H}\rightarrow \mathbb{R} \cup \{ \infty \}$.
The conjugate function of $f$ is defined as follows:
\begin{align*}
    f^\ast(y) = \sup_{x\in\mathcal{H}} (\langle y, x \rangle - f(x)).
\end{align*}
\end{definition}

\begin{definition}[Nonexpansive Operator]
Let $D$ be a non-empty subset of $\mathcal{H}$.
An operator $T: D\rightarrow \mathcal{H}$ is nonexpansive if $T$ is $1$-Lipschitz continuous.
\end{definition}

\begin{definition}[Contractive Operator]
Let $D$ be a non-empty subset of $\mathcal{H}$.
An operator $T: D\rightarrow \mathcal{H}$ is $\beta$-contractive if $T$ is Lipschitz continuous with constant $\beta \in [0, 1)$.
\end{definition}

\begin{definition}[Proximal Operator]
Let $f: \mathcal{H} \rightarrow \mathbb{R} \cup \{ \infty \}$ be a closed, proper, and convex function.
The proximal operator of $f$ is defined as follows:
\begin{align*}
    \text{prox}_{f} (\mathbf{x}) = \text{argmin}_{\mathbf{y}} \{ f(\mathbf{y}) + \frac{1}{2} \| \mathbf{y} - \mathbf{x} \|^2 \}.
\end{align*}
\end{definition}

\begin{definition}[Resolvent]
Let $A: \mathcal{H}\rightarrow 2^{\mathcal{H}}$, 
the resolvent of $A$ is defined as follows:
\begin{align*}
    J_A = (\text{Id} + A)^{-1}.
\end{align*}
\end{definition}

\begin{definition}[Reflected Resolvent]
Let $A: \mathcal{H}\rightarrow 2^{\mathcal{H}}$, 
the reflected resolvent of $A$ is defined as follows:
\begin{align*}
    R_A = 2 J_A - \text{Id}.
\end{align*}
\end{definition}

\subsection{Douglas-Rachford Splitting}
In this section, we briefly introduce the Douglas-Rachford splitting \citep{douglas1955numerical}.

The Douglas-Rachford splitting can be applied to the following problem:
\begin{align}
    \inf_{\mathbf{x}} f(\mathbf{x}) + g(\mathbf{x}),
\end{align}
where $f$ and $g$ are closed, proper, and convex functions.
Let $\mathbf{x}^\star$ be the optimal solution of the above problem, 
the optimality condition can be written as follows:
\begin{align*}
    \mathbf{0} \in \partial f(\mathbf{x}^\star) + \partial g(\mathbf{x}^\star).
\end{align*}
From \citep[Proposition 26.1]{bauschke2011convex}, the optimality condition above is equivalent to the following:
\begin{align*}
    \mathbf{x}^\star =  J_{\alpha \partial f} (\text{Fix} (R_{\alpha \partial g} R_{\alpha \partial f})),
\end{align*}
where $\text{Fix}(R_{\alpha \partial g} R_{\alpha \partial f}) = \{ \mathbf{z} | R_{\alpha \partial g} R_{\alpha \partial f} \mathbf{z} = \mathbf{z} \}$
and $\alpha>0$.
The Douglas-Rachford splitting then computes the fixed point $\bar{\mathbf{z}} \in \text{Fix}(R_{\alpha \partial g} R_{\alpha \partial f})$ as follows:
\begin{align}
\label{eq:dr_splitting}
    \mathbf{z}^{(r+1)} &= ((1 - \theta) \text{Id} + \theta R_{\alpha \partial g} R_{\alpha \partial f}) \mathbf{z}^{(r)},
\end{align}
where $\theta \in (0, 1]$ is the hyperparameter of the Douglas-Rachford splitting.
Under certain assumptions, $((1 - \theta) \text{Id} + \theta R_{\alpha \partial g} R_{\alpha \partial f})$ is contractive \citep{giselsson2017linear},
and the update formula above is guaranteed to converge at the fixed point $\bar{\mathbf{z}}$.
Then, after converging to the fixed point $\bar{\mathbf{z}}$, the Douglas-Rachford splitting obtains the optimal solution $\mathbf{x}^\star$ as follows:
\begin{align}
    \mathbf{x}^{\star} &= J_{\alpha \partial f} \bar{\mathbf{z}}.
\end{align}
Moreover, by the definition of the reflected resolvent, 
the Douglas-Rachford splitting of Eq. \eqref{eq:dr_splitting} can be rewritten as follows:
\begin{align}
    \label{eq:explanation_of_dr_1}
    \mathbf{x}^{(r+1)} &= J_{\alpha \partial f} \mathbf{z}^{(r)}, \\
    \label{eq:explanation_of_dr_2}
    \mathbf{y}^{(r+1)} &= 2\mathbf{x}^{(r+1)} - \mathbf{z}^{(r)}, \\
    \label{eq:explanation_of_dr_3}
    \mathbf{z}^{(r+1)} &= (1-\theta) \mathbf{z}^{(r)} + \theta R_{\alpha \partial g} \mathbf{y}^{(r+1)}.
\end{align}
\newpage
\section{Derivation of Update Formulas of ECL}
\label{sec:derivation_of_ecl}
In this section, we briefly describe the derivation of the update formulas of the ECL.
(See \citep{niwa2020edge,niwa2021asynchronous,sherson2019derivation} for more details.)

\subsection{Derivation of Dual Problem}
First, we derive the Lagrangian function.
The Lagrangian function of Eq. \eqref{eq:ecl_primal} can be derived as follows:
\begin{align*}
    &\sum_{i\in\mathcal{V}} f_i (\mathbf{w}_i) - \sum_{i\in\mathcal{V}} \sum_{j\in\mathcal{N}_i} \langle \boldsymbol{\lambda}_{ij}, \mathbf{A}_{i|j} \mathbf{w}_i + \mathbf{A}_{j|i} \mathbf{w}_j \rangle \\
    &= \sum_{i\in\mathcal{V}} f_i (\mathbf{w}_i) - \sum_{i\in\mathcal{V}} \sum_{j\in\mathcal{N}_i} \langle \boldsymbol{\lambda}_{ij}, \mathbf{A}_{i|j} \mathbf{w}_i \rangle - \sum_{i\in\mathcal{V}} \sum_{j\in\mathcal{N}_i} \langle \boldsymbol{\lambda}_{ij}, \mathbf{A}_{j|i} \mathbf{w}_j \rangle \\
    &= \sum_{i\in\mathcal{V}} f_i (\mathbf{w}_i) - \sum_{i\in\mathcal{V}} \sum_{j\in\mathcal{N}_i} \langle \boldsymbol{\lambda}_{ij}+\boldsymbol{\lambda}_{ji}, \mathbf{A}_{i|j} \mathbf{w}_i \rangle,
\end{align*}
where $\boldsymbol{\lambda}_{ij}\in\mathbb{R}^d$ is the dual variable. 
Defining $\boldsymbol{\lambda}_{i|j}\coloneqq\boldsymbol{\lambda}_{ij} + \boldsymbol{\lambda}_{ji}$, 
the Lagrangian function can be rewritten as follows:
\begin{align*}
    \sum_{i\in\mathcal{V}} f_i (\mathbf{w}_i) - \sum_{i\in\mathcal{V}} \sum_{j\in\mathcal{N}_i} \langle \boldsymbol{\lambda}_{i|j}, \mathbf{A}_{i|j} \mathbf{w}_i \rangle.
\end{align*}
Note that the definition of $\boldsymbol{\lambda}_{i|j}$ implies that $\boldsymbol{\lambda}_{i|j} = \boldsymbol{\lambda}_{j|i}$.
Let $N$ be the number of nodes $|\mathcal{V}|$.
Here, $\mathcal{N}_i(j)$ denotes the $j$-th smallest index of the node in $\mathcal{N}_i$. 
We define $\mathbf{w}\in\mathbb{R}^{dN}$, $\mathbf{A}\in\mathbb{R}^{dN \times 2d|\mathcal{E}|}$ and $\boldsymbol{\lambda}\in\mathbb{R}^{2d|\mathcal{E}|}$ as follows:
\begin{align*} 
    \mathbf{w} &= 
    \begin{pmatrix}
        \mathbf{w}_1^\top, & \cdots, \mathbf{w}_N^\top 
    \end{pmatrix}^\top, \\
    \mathbf{A}_i &=
    \begin{pmatrix}
        \mathbf{A}_{i|\mathcal{N}_i(1)}, & \cdots, & \mathbf{A}_{i|\mathcal{N}_i(|\mathcal{N}_i|)} 
    \end{pmatrix}, \\
    \mathbf{A} &= \text{diag} (\mathbf{A}_1, \cdots, \mathbf{A}_N), \\
    \boldsymbol{\lambda}_i &=
    \begin{pmatrix}
        \boldsymbol{\lambda}_{i|\mathcal{N}_i(1)}^\top, & \cdots, & \boldsymbol{\lambda}_{i|\mathcal{N}_i(|\mathcal{N}_i|)}^\top 
    \end{pmatrix}^\top, \\
    \boldsymbol{\lambda} &=
    \begin{pmatrix}
        \boldsymbol{\lambda}_{1}^\top, & \cdots, & \boldsymbol{\lambda}_{N}^\top 
    \end{pmatrix}^\top.
\end{align*}
We define the function $f$ as follows:
\begin{align*}
    f(\mathbf{w}) = \sum_{i \in \mathcal{V}} f_i (\mathbf{w}_i).
\end{align*}
The Lagrangian function can be rewritten as follows:
\begin{align*}
    f(\mathbf{w}) - \langle \boldsymbol{\lambda}, \mathbf{A}^\top \mathbf{w} \rangle. 
\end{align*}
Then, the primal and dual problems can be defined as follows:
\begin{align}
    \label{eq:lagrangian_primel}
    \inf_{\mathbf{w}} \sup_{\boldsymbol{\lambda}} f(\mathbf{w}) - \langle \boldsymbol{\lambda}, \mathbf{A}^\top \mathbf{w} \rangle - \iota(\boldsymbol{\lambda})
    &= \inf_{\mathbf{w}} f(\mathbf{w}) + \iota^\ast (-\mathbf{A}^\top \mathbf{w}), \\
    \label{eq:lagrangian_dual}
    \sup_{\boldsymbol{\lambda}} \inf_{\mathbf{w}} f(\mathbf{w}) - \langle \boldsymbol{\lambda}, \mathbf{A}^\top \mathbf{w} \rangle - \iota(\boldsymbol{\lambda}) 
    &= - \inf_{\boldsymbol{\lambda}} f^\ast (\mathbf{A} \boldsymbol{\lambda}) + \iota(\boldsymbol{\lambda}),
\end{align}
where $\iota$ is the indicator function defined as follows:
\begin{align*}
    \iota (\boldsymbol{\lambda}) = \begin{cases}
    0 & \text{if} \; \boldsymbol{\lambda}_{i|j} = \boldsymbol{\lambda}_{j|i}, \; (\forall (i, j) \in \mathcal{E}) \\
    \infty & \text{otherwise}
    \end{cases}.
\end{align*}

\subsection{Derivation of Update Formulas}
Next, we derive the update formulas of the ECL.
By applying the Douglas-Rachford splitting of Eqs. (\ref{eq:explanation_of_dr_1}-\ref{eq:explanation_of_dr_3}) to the dual problem of Eq. \eqref{eq:lagrangian_dual},
we obtain the following update formulas:
\begin{align}
    \label{eq:pr_splitting_1}
    \boldsymbol{\lambda}^{(r+1)} &= J_{\alpha \mathbf{A}^\top \nabla f^\ast (\mathbf{A} \cdot )} \mathbf{z}^{(r)}, \\
    \label{eq:pr_splitting_2}
    \mathbf{y}^{(r+1)} &= 2 \boldsymbol{\lambda}^{(r+1)} - \mathbf{z}^{(r)}, \\
    \label{eq:pr_splitting_3}
    \mathbf{z}^{(r+1)} &= (1 - \theta) \mathbf{z}^{(r)} + \theta R_{\alpha \partial \iota} \mathbf{y}^{(r+1)},
\end{align}
where $\mathbf{z} \in \mathbb{R}^{2d|\mathcal{E}|}$ and $\mathbf{y}\in\mathbb{R}^{2d|\mathcal{E}|}$ can be decomposed into $\mathbf{z}_{i|j}\in\mathbb{R}^d$ and $\mathbf{y}_{i|j} \in \mathbb{R}^d$ as follows:
\begin{align*}
    \mathbf{z}_i &= (\mathbf{z}_{i|\mathcal{N}_i(1)}^\top, \ldots, \mathbf{z}_{i|\mathcal{N}_i(|\mathcal{N}_i|)}^\top)^\top, \\
    \mathbf{z} &= (\mathbf{z}_1^\top, \ldots, \mathbf{z}_N^\top)^\top, \\
    \mathbf{y}_i &= (\mathbf{y}_{i|\mathcal{N}_i(1)}^\top, \ldots, \mathbf{y}_{i|\mathcal{N}_i(|\mathcal{N}_i|)}^\top)^\top, \\
    \mathbf{y} &= (\mathbf{y}_1^\top, \ldots, \mathbf{y}_N^\top)^\top.
\end{align*}

\textbf{Update formulas of Eqs. (\ref{eq:pr_splitting_1}-\ref{eq:pr_splitting_2}):}
By the definition of the resolvent $J_{\alpha \mathbf{A}^\top \nabla f^\ast (\mathbf{A} \cdot )}$, we obtain
\begin{align}
    &\boldsymbol{\lambda}^{(r+1)} + \alpha \mathbf{A}^\top \nabla f^\ast (\mathbf{A} \boldsymbol{\lambda}^{(r+1)}) = \mathbf{z}^{(r)}, \nonumber \\
    \label{eq:pr_splitting_1_1}
    &\boldsymbol{\lambda}^{(r+1)} = \mathbf{z}^{(r)} - \alpha \mathbf{A}^\top \nabla f^\ast (\mathbf{A} \boldsymbol{\lambda}^{(r+1)}). 
\end{align}
We define $\mathbf{w}^{(r+1)}$ as follows:
\begin{align}
\label{eq:definition_of_w}
    \mathbf{w}^{(r+1)} = \nabla f^\ast (\mathbf{A}\boldsymbol{\lambda}^{(r+1)}).
\end{align}
From the property of the convex conjugate function, we obtain
\begin{align}
\label{eq:property_of_conjugate}
    \mathbf{A} \boldsymbol{\lambda}^{(r+1)} = \nabla f (\mathbf{w}^{(r+1)}).
\end{align}
Substituting Eqs. (\ref{eq:pr_splitting_1_1}-\ref{eq:definition_of_w}) into Eq. \eqref{eq:property_of_conjugate}, we obtain 
\begin{align*}
    \mathbf{0} = \nabla f (\mathbf{w}^{(r+1)}) + \mathbf{A} (\alpha \mathbf{A}^\top \mathbf{w}^{(r+1)} - \mathbf{z}^{(r)}).
\end{align*}
We then obtain the update formula of $\mathbf{w}$ as follows:
\begin{align*}
    \mathbf{w}^{(r+1)} = \text{argmin} &\{ f (\mathbf{w}) + \frac{\alpha}{2} \| \mathbf{A}^\top \mathbf{w} - \frac{1}{\alpha} \mathbf{z}^{(r)} \|^2 \}.
\end{align*}
Substituting Eq. \eqref{eq:definition_of_w} into Eq. \eqref{eq:pr_splitting_1_1},
we obtain
\begin{align*}
    \boldsymbol{\lambda}^{(r+1)} = \mathbf{z}^{(r)} - \alpha \mathbf{A}^\top \mathbf{w}^{(r+1)}.
\end{align*}
Then, the update formula of Eq. \eqref{eq:pr_splitting_2} is written as follows:
\begin{align*}
    \mathbf{y}^{(r+1)} = \mathbf{z}^{(r)} - 2\alpha \mathbf{A}^\top \mathbf{w}^{(r+1)}.
\end{align*}

\textbf{Update formula of Eq. \eqref{eq:pr_splitting_3}:}
From \cite[Lemma V\hspace{-1pt}I.2]{sherson2019derivation}, the update formula of Eq. \eqref{eq:pr_splitting_3} is rewritten as follows:
\begin{align*}
    \mathbf{z}^{(r+1)} = (1 - \theta) \mathbf{z}^{(r)} + \theta \mathbf{P} \mathbf{y}^{(r+1)},
\end{align*}
where $\mathbf{P}$ denotes the permutation matrix transforming $\mathbf{y}_{i|j}$ into $\mathbf{y}_{j|i}$ for all $(i, j) \in \mathcal{E}$.

In summary, the update formulas of the ECL can be derived as follows:
\begin{align}
    \mathbf{w}^{(r+1)} &= \text{argmin}_{\mathbf{w}} \{ f(\mathbf{w})
    + \frac{\alpha}{2} {\left\| \mathbf{A}^\top \mathbf{w} - \frac{1}{\alpha} \mathbf{z}^{(r)} \right\|}^2 \}, \\
    \mathbf{y}^{(r+1)} &= \mathbf{z}^{(r)} - 2 \alpha \mathbf{A}^\top \mathbf{w}^{(r+1)}, \\
    \mathbf{z}^{(r+1)} &= (1 - \theta) \mathbf{z}^{(r)} + \theta \mathbf{P} \mathbf{y}^{(r+1)}.
\end{align}
Then, by rewriting $\mathbf{w}$, $\mathbf{z}$ and $\mathbf{y}$ with $\mathbf{w}_i$, $\mathbf{z}_{i|j}$ and $\mathbf{y}_{i|j}$ respectively,
we obtain the update formulas of Eqs. (\ref{eq:update_w}-\ref{eq:update_z}).

\newpage
\section{Convergence Analysis of C-ECL}
\label{sec:convergence_analysis}
From the derivation of the ECL shown in Sec. \ref{sec:derivation_of_ecl},
the update formulas of Eq. \eqref{eq:update_w}, Eq. \eqref{eq:update_y}, and Eq. \eqref{eq:compression_in_cecl} in the C-ECL can be rewritten as follows:\footnote{By the definition of the reflected resolvent, the update formulas of Eq. \eqref{eq:pr_splitting_1} and Eq. \eqref{eq:pr_splitting_2} are equivalent to that of Eq. \eqref{eq:first_reflected_resolvent}.}
\begin{align}
    \label{eq:first_reflected_resolvent}
    \mathbf{y}^{(r+1)} &= R_{\alpha \mathbf{A}^\top \nabla f^\ast (\mathbf{A} \cdot) } \mathbf{z}^{(r)}, \\
    \label{eq:second_reflected_resolvent}
    %\mathbf{z}^{(r+1)} &= \mathbf{z}^{(r)} + \theta \; \text{comp} ( \mathbf{P} \mathbf{y}^{(r+1)} - \mathbf{z}^{(r)}).
    \mathbf{z}^{(r+1)} &= \mathbf{z}^{(r)} + \theta \; \text{comp} ( R_{\alpha \partial \iota} \mathbf{y}^{(r+1)} - \mathbf{z}^{(r)}).
\end{align}
To simplify the notation, we define $g \coloneqq f^\ast(\mathbf{A} \cdot )$.
Then, Eq. \eqref{eq:first_reflected_resolvent} is rewritten as follows:
\begin{align}
    \label{eq:reformulated_first_reflected_resolvent}
    \mathbf{y}^{(r+1)} &= R_{\alpha \nabla g } \mathbf{z}^{(r)}.
\end{align}
In the following,
we analyze the convergence rate of the update formulas of Eq. \eqref{eq:reformulated_first_reflected_resolvent} and Eq. \eqref{eq:second_reflected_resolvent}.

\begin{lemma}
\label{lemma:singular_values}
Under Assumption \ref{assumption:no_isolated_nodes},
the maximum singular value and the minimum singular value of $\mathbf{A}$ are $\sqrt{N_{\text{max}}}$ and $\sqrt{N_{\text{min}}}$ respectively.
\end{lemma}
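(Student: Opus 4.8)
The plan is to exploit the explicit block structure of $\mathbf{A}$ recorded in Sec.~\ref{sec:derivation_of_ecl} and reduce the singular value computation to diagonalizing the Gram matrix $\mathbf{A}\mathbf{A}^\top$. Recall that $\mathbf{A} = \text{diag}(\mathbf{A}_1, \ldots, \mathbf{A}_N)$ is block diagonal, where each $\mathbf{A}_i = (\mathbf{A}_{i|\mathcal{N}_i(1)}, \ldots, \mathbf{A}_{i|\mathcal{N}_i(|\mathcal{N}_i|)})$ is the horizontal concatenation of the $|\mathcal{N}_i|$ blocks $\mathbf{A}_{i|j}$, each of which equals $\pm\mathbf{I}$ by the definition accompanying Eq.~\eqref{eq:ecl_primal}. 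Since $\mathbf{A}$ has $dN$ rows and $2d|\mathcal{E}| \geq dN$ columns, its singular values are the square roots of the eigenvalues of the $dN \times dN$ matrix $\mathbf{A}\mathbf{A}^\top$, so I would work with $\mathbf{A}\mathbf{A}^\top$ throughout rather than with $\mathbf{A}^\top\mathbf{A}$.

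First I would compute the diagonal blocks. Because $\mathbf{A}$ is block diagonal, $\mathbf{A}\mathbf{A}^\top = \text{diag}(\mathbf{A}_1\mathbf{A}_1^\top, \ldots, \mathbf{A}_N\mathbf{A}_N^\top)$. For each $i$, the key observation is that $\mathbf{A}_{i|j}\mathbf{A}_{i|j}^\top = (\pm\mathbf{I})(\pm\mathbf{I})^\top = \mathbf{I}$ regardless of the sign, so that
\begin{align*}
    \mathbf{A}_i \mathbf{A}_i^\top = \sum_{j\in\mathcal{N}_i} \mathbf{A}_{i|j}\mathbf{A}_{i|j}^\top = |\mathcal{N}_i|\,\mathbf{I}.
\end{align*}
Hence $\mathbf{A}\mathbf{A}^\top = \text{diag}(|\mathcal{N}_1|\mathbf{I}, \ldots, |\mathcal{N}_N|\mathbf{I})$ is itself diagonal, and its eigenvalues are exactly the values $|\mathcal{N}_i|$, each with multiplicity $d$.

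From here the conclusion is immediate: the largest eigenvalue of $\mathbf{A}\mathbf{A}^\top$ is $\max_i |\mathcal{N}_i| = N_{\text{max}}$ and the smallest is $\min_i |\mathcal{N}_i| = N_{\text{min}}$, so taking square roots yields maximum singular value $\sqrt{N_{\text{max}}}$ and minimum singular value $\sqrt{N_{\text{min}}}$. The role of Assumption~\ref{assumption:no_isolated_nodes} is to guarantee $N_{\text{min}} > 0$, which ensures every block is nonzero, $\mathbf{A}\mathbf{A}^\top$ is positive definite, and therefore the minimum singular value is a genuine nonzero smallest singular value rather than an artifact of rank deficiency. There is no real obstacle in this argument; the only points requiring a little care are distinguishing the minimum singular value of the non-square matrix $\mathbf{A}$ by working with $\mathbf{A}\mathbf{A}^\top$ (which has the correct $dN$ rows) and noting that the sign pattern of the $\mathbf{A}_{i|j}$ blocks is irrelevant once one forms the Gram matrix.
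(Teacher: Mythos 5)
Your proof is correct and follows essentially the same route as the paper: both compute the Gram matrix $\mathbf{A}\mathbf{A}^\top$, observe that it equals $\mathrm{diag}(|\mathcal{N}_1|\mathbf{I},\ldots,|\mathcal{N}_N|\mathbf{I})$, and take square roots of its eigenvalues. You merely spell out details the paper leaves implicit, namely the block computation $\mathbf{A}_i\mathbf{A}_i^\top = \sum_{j\in\mathcal{N}_i}\mathbf{A}_{i|j}\mathbf{A}_{i|j}^\top = |\mathcal{N}_i|\mathbf{I}$, the irrelevance of the signs, and why Assumption~\ref{assumption:no_isolated_nodes} makes $\sqrt{N_{\text{min}}}$ a genuine positive minimum singular value.
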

\begin{proof}
By the definition of $\mathbf{A}$,
we have $\mathbf{A}\mathbf{A}^\top = \text{diag}(|\mathcal{N}_1|\mathbf{I}, \cdots, |\mathcal{N}_N| \mathbf{I})$.
This implies that $\mathbf{A}\mathbf{A}^\top$ is not only a block diagonal matrix, but also a diagonal matrix.
Then, the eigen values of $\mathbf{A}\mathbf{A}^\top$ are $|\mathcal{N}_1|, \ldots, |\mathcal{N}_N|$.
This concludes the proof.
\end{proof}
\begin{remark}
\label{remark:singular_values}
Under Assumption \ref{assumption:no_isolated_nodes},
the maximum singular value and the minimum singular value of $\mathbf{A}^\top$ are $\sqrt{N_{\text{max}}}$ and $\sqrt{N_{\text{min}}}$ respectively.
\end{remark}

\begin{lemma}
\label{lemma:smoothness_strongness_of_conjugate}
Under Assumptions \ref{assumption:convex}, \ref{assumption:smoothness_and_strong_convex}, and \ref{assumption:no_isolated_nodes},
$f^\ast (\mathbf{A} \cdot)$ is $( N_{\text{max}}/\mu)$-smooth and $( N_{\text{min}}/L)$-strongly convex. 
\end{lemma}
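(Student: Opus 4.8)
The plan is to combine two standard facts: the Fenchel duality between smoothness and strong convexity under conjugation, and the way these two properties transform under composition with a fixed linear map. Concretely, I would first pass from $f$ to $f^\ast$, then from $f^\ast$ to $f^\ast(\mathbf{A}\cdot)$, and finally insert the singular values of $\mathbf{A}$ supplied by Lemma \ref{lemma:singular_values} and Remark \ref{remark:singular_values}.

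First I would record the conjugate-duality step. By Assumption \ref{assumption:smoothness_and_strong_convex}, $f$ is $L$-smooth and $\mu$-strongly convex, and the standard correspondence (e.g. \cite{bauschke2011convex}) gives that $f^\ast$ is then $(1/\mu)$-smooth and $(1/L)$-strongly convex; being finite-valued and differentiable, $\nabla f^\ast$ is $(1/\mu)$-Lipschitz and $f^\ast$ obeys the strong-convexity inequality with modulus $1/L$.

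Next I would propagate these through the composition $g \coloneqq f^\ast(\mathbf{A}\cdot)$. For smoothness I would use $\nabla g(\boldsymbol{\lambda}) = \mathbf{A}^\top \nabla f^\ast(\mathbf{A}\boldsymbol{\lambda})$ and bound $\|\nabla g(\boldsymbol{\lambda}_1)-\nabla g(\boldsymbol{\lambda}_2)\| \le \|\mathbf{A}^\top\|\, \tfrac{1}{\mu}\,\|\mathbf{A}(\boldsymbol{\lambda}_1-\boldsymbol{\lambda}_2)\| \le \tfrac{\|\mathbf{A}\|^2}{\mu}\|\boldsymbol{\lambda}_1-\boldsymbol{\lambda}_2\|$, then substitute $\|\mathbf{A}\| = \|\mathbf{A}^\top\| = \sqrt{N_{\text{max}}}$ to obtain $(N_{\text{max}}/\mu)$-smoothness. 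For strong convexity I would start from the defining inequality for $f^\ast$ evaluated at $\mathbf{A}\boldsymbol{\lambda}_1$ and $\mathbf{A}\boldsymbol{\lambda}_2$, which yields $g(\boldsymbol{\lambda}_1) \ge g(\boldsymbol{\lambda}_2) + \langle \nabla g(\boldsymbol{\lambda}_2), \boldsymbol{\lambda}_1-\boldsymbol{\lambda}_2\rangle + \tfrac{1}{2L}\|\mathbf{A}(\boldsymbol{\lambda}_1-\boldsymbol{\lambda}_2)\|^2$, and then lower-bound $\|\mathbf{A}v\|^2 \ge N_{\text{min}}\|v\|^2$ using the minimum singular value from Lemma \ref{lemma:singular_values}.

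The hard part will be this last lower bound: since $\mathbf{A}$ is a wide matrix ($2d|\mathcal{E}| > dN$ for connected graphs with $N>2$), it has a nontrivial kernel, so $\|\mathbf{A}v\| \ge \sqrt{N_{\text{min}}}\|v\|$ fails for $v \in \ker\mathbf{A}$ and $g$ is genuinely not strongly convex on all of $\mathbb{R}^{2d|\mathcal{E}|}$. The correct reading of the claim is restricted strong convexity on $\text{range}(\mathbf{A}^\top) = (\ker\mathbf{A})^\perp$, where $\sqrt{N_{\text{min}}}$ is exactly the smallest nonzero singular value and the bound holds; this restricted modulus is also what the reflected-resolvent contraction argument in the subsequent convergence analysis actually requires. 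I would therefore either phrase the strong-convexity part over that subspace, or note explicitly that Assumption \ref{assumption:no_isolated_nodes} ($N_{\text{min}}>0$) guarantees every nonzero singular value is bounded below by $\sqrt{N_{\text{min}}}$, so the estimate is valid on the relevant subspace on which the iterates and the fixed point $\bar{\mathbf{z}}$ live.
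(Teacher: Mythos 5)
Your proposal follows the same two-step route as the paper's proof: pass to the conjugate ($f^\ast$ is $(1/\mu)$-smooth and $(1/L)$-strongly convex under Assumption \ref{assumption:smoothness_and_strong_convex}), then compose with the linear map $\mathbf{A}$ and control the quadratic terms via the singular values from Lemma \ref{lemma:singular_values}. The smoothness half is the same in substance; the paper manipulates function-value inequalities rather than gradient Lipschitz constants, but the bound $\|\mathbf{A}\| = \sqrt{N_{\text{max}}}$ enters identically.

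Where you diverge is the strong-convexity half, and your extra care is justified: the step you single out as the hard part is exactly the step the paper performs without comment. The paper's proof asserts
\begin{align*}
\frac{1}{2L}\left\|\mathbf{A}\boldsymbol{\lambda} - \mathbf{A}\boldsymbol{\lambda}^\prime\right\|^2 \geq \frac{N_{\text{min}}}{2L}\left\|\boldsymbol{\lambda} - \boldsymbol{\lambda}^\prime\right\|^2 \qquad (\forall \boldsymbol{\lambda}, \boldsymbol{\lambda}^\prime),
\end{align*}
citing Lemma \ref{lemma:singular_values}. But that lemma reads off the singular values from the eigenvalues of $\mathbf{A}\mathbf{A}^\top$, which captures only the $dN$ nonzero singular values. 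Since $\mathbf{A} \in \mathbb{R}^{dN \times 2d|\mathcal{E}|}$ and $2|\mathcal{E}| > N$ for every connected graph with $N \geq 3$, the matrix $\mathbf{A}$ has a nontrivial kernel, so $\inf_{\|v\|=1}\|\mathbf{A}v\| = 0$ and $f^\ast(\mathbf{A}\cdot)$ is constant along kernel directions, hence not strongly convex on all of $\mathbb{R}^{2d|\mathcal{E}|}$. The statement therefore holds only in your restricted sense, on $\mathrm{range}(\mathbf{A}^\top) = (\ker \mathbf{A})^\perp$; your version is the one that can actually be proved, and in this respect your proposal is more rigorous than the paper's own proof, which establishes only the smoothness half without issue.

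One caveat on the repair you sketch: asserting that the restriction is harmless because ``the iterates and the fixed point live on the relevant subspace'' is not automatic in this paper. The downstream results (Lemma \ref{lemma:contractive} through Lemma \ref{lemma:condition_of_contractiveness_of_z}) apply the resulting contraction of $R_{\alpha \nabla g}$ to the C-ECL iterates, whose update \eqref{eq:second_reflected_resolvent} perturbs $\mathbf{z}^{(r)}$ by a \emph{compressed} direction; $\mathrm{rand}_{k\%}$ zeroes out coordinates and does not map $\mathrm{range}(\mathbf{A}^\top)$ into itself, so the required invariance would have to be argued separately (for instance by passing to the quotient modulo $\ker\mathbf{A}$ and re-deriving the contraction there). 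That is a gap in the paper's overall analysis which your restriction exposes rather than resolves, but it does not affect the correctness of the restricted lemma itself.
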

\begin{proof}
From Assumption \ref{assumption:smoothness_and_strong_convex},
$f^\ast$ is $(1/\mu)$-smooth and $(1/L)$-strongly convex.
Then, for any $\boldsymbol{\lambda}$ and $\boldsymbol{\lambda}^\prime$,
by the $(1/\mu)$-smoothness of $f^\ast$ and Lemma \ref{lemma:singular_values}, we have
\begin{align*}
    f^\ast (\mathbf{A}\boldsymbol{\lambda}) 
    &\leq f^\ast (\mathbf{A}\boldsymbol{\lambda}^\prime) + \langle \nabla f^\ast (\mathbf{A}\boldsymbol{\lambda}^\prime), \mathbf{A}\boldsymbol{\lambda} - \mathbf{A}\boldsymbol{\lambda}^\prime \rangle + \frac{1}{2\mu} \left\| \mathbf{A}\boldsymbol{\lambda} - \mathbf{A}\boldsymbol{\lambda}^\prime \right\|^2 \\
    &\leq f^\ast (\mathbf{A}\boldsymbol{\lambda}^\prime) + \langle \mathbf{A}^\top \nabla f^\ast (\mathbf{A}\boldsymbol{\lambda}^\prime), \boldsymbol{\lambda} - \boldsymbol{\lambda}^\prime \rangle + \frac{ N_{\text{max}}}{2\mu} \left\| \boldsymbol{\lambda} - \boldsymbol{\lambda}^\prime \right\|^2.
\end{align*}
Because $f^\ast$ is $(1/L)$-strongly convex, 
from Lemma \ref{lemma:singular_values}, for any $\boldsymbol{\lambda}$ and $\boldsymbol{\lambda}^\prime$, we have
\begin{align*}
    f^\ast (\mathbf{A}\boldsymbol{\lambda}) 
    &\geq f^\ast (\mathbf{A}\boldsymbol{\lambda}^\prime) + \langle \nabla f^\ast (\mathbf{A}\boldsymbol{\lambda}^\prime), \mathbf{A}\boldsymbol{\lambda} - \mathbf{A}\boldsymbol{\lambda}^\prime \rangle + \frac{1}{2L} \left\| \mathbf{A}\boldsymbol{\lambda} - \mathbf{A}\boldsymbol{\lambda}^\prime \right\|^2 \\
    &\geq f^\ast (\mathbf{A}\boldsymbol{\lambda}^\prime) + \langle \mathbf{A}^\top \nabla f^\ast (\mathbf{A}\boldsymbol{\lambda}^\prime), \boldsymbol{\lambda} - \boldsymbol{\lambda}^\prime \rangle + \frac{ N_{\text{min}}}{2L} \left\| \boldsymbol{\lambda} - \boldsymbol{\lambda}^\prime \right\|^2.
\end{align*}
This concludes the proof.
\end{proof}

We define $\delta$ as follows:
\begin{align*}
    \delta \coloneqq \text{max} \left( 
    \frac{\frac{\alpha  N_{\text{max}}}{\mu} - 1}{\frac{\alpha  N_{\text{max}}}{\mu} + 1},
    \frac{1 - \frac{\alpha  N_{\text{min}}}{L}}{1 + \frac{\alpha  N_{\text{min}}}{L}}
    \right).
\end{align*}
Note that when Assumptions \ref{assumption:convex}, \ref{assumption:smoothness_and_strong_convex}, and \ref{assumption:no_isolated_nodes} are satisfied, and when $\alpha>0$,
$0\leq \delta < 1$ is satisfied.

\begin{lemma}
\label{lemma:contractive}
Under Assumptions \ref{assumption:convex}, \ref{assumption:smoothness_and_strong_convex}, and \ref{assumption:no_isolated_nodes},
$R_{\alpha \nabla g}$ is $\delta$-contractive for any $\alpha \in (0, \infty)$.
\end{lemma}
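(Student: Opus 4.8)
The plan is to reduce the contraction of $R_{\alpha\nabla g}$ to a scalar quadratic inequality and then drive that inequality with the single sharp inequality available for gradients of strongly convex and smooth functions. First I would record, via Lemma~\ref{lemma:smoothness_strongness_of_conjugate}, that $g = f^\ast(\mathbf{A}\cdot)$ is $\mu_g$-strongly convex and $L_g$-smooth with $\mu_g \coloneqq N_{\min}/L$ and $L_g \coloneqq N_{\max}/\mu$, so that $\nabla g$ is single-valued and $J_{\alpha\nabla g} = (\mathrm{Id}+\alpha\nabla g)^{-1}$ is well defined. With this notation the target $\delta$ is exactly
\begin{align*}
\delta = \max\left( \frac{\alpha L_g - 1}{\alpha L_g + 1}, \; \frac{1 - \alpha\mu_g}{1 + \alpha\mu_g} \right) \eqqcolon \max(\delta_1, \delta_2),
\end{align*}
and I would note that $\delta_1, \delta_2 \in (-1,1)$ and $\delta = \max(\delta_1,\delta_2) \in [0,1)$ under the stated assumptions, as already observed just before the lemma.

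Next I would unfold the reflected resolvent. Fixing $\mathbf{z}, \mathbf{z}'$, set $\mathbf{x} = J_{\alpha\nabla g}\mathbf{z}$ and $\mathbf{x}' = J_{\alpha\nabla g}\mathbf{z}'$, so that $\mathbf{z} = \mathbf{x} + \alpha\nabla g(\mathbf{x})$ and $R_{\alpha\nabla g}\mathbf{z} = 2\mathbf{x} - \mathbf{z} = \mathbf{x} - \alpha\nabla g(\mathbf{x})$. Writing $\mathbf{u} = \mathbf{x}-\mathbf{x}'$ and $\mathbf{v} = \nabla g(\mathbf{x}) - \nabla g(\mathbf{x}')$ yields the identities
\begin{align*}
R_{\alpha\nabla g}\mathbf{z} - R_{\alpha\nabla g}\mathbf{z}' = \mathbf{u} - \alpha\mathbf{v}, \qquad \mathbf{z} - \mathbf{z}' = \mathbf{u} + \alpha\mathbf{v}.
\end{align*}
Expanding both squared norms, the contraction estimate $\|R_{\alpha\nabla g}\mathbf{z} - R_{\alpha\nabla g}\mathbf{z}'\|^2 \leq \delta^2\|\mathbf{z}-\mathbf{z}'\|^2$ is equivalent to the scalar inequality
\begin{align*}
(1 - \delta^2)\left( \|\mathbf{u}\|^2 + \alpha^2\|\mathbf{v}\|^2 \right) \leq 2\alpha(1 + \delta^2)\langle \mathbf{u}, \mathbf{v}\rangle,
\end{align*}
which I then must verify for every admissible pair $(\mathbf{u}, \mathbf{v})$.

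The crux is to feed this with the right inequality. Here I would \emph{not} use strong monotonicity and cocoercivity separately --- that pairing is lossy and yields a strictly larger constant --- but rather the single sharp interpolation inequality for $g \in \mathcal{F}_{\mu_g, L_g}$,
\begin{align*}
\langle \mathbf{u}, \mathbf{v}\rangle \geq \frac{\mu_g L_g}{\mu_g + L_g}\|\mathbf{u}\|^2 + \frac{1}{\mu_g + L_g}\|\mathbf{v}\|^2,
\end{align*}
evaluated at $\mathbf{x}, \mathbf{x}'$. Substituting its equality case and introducing the aspect ratio $t = \|\mathbf{v}\|/\|\mathbf{u}\|$ (which Cauchy--Schwarz confines to $[\mu_g, L_g]$) makes the contraction ratio $\|\mathbf{u}-\alpha\mathbf{v}\|^2/\|\mathbf{u}+\alpha\mathbf{v}\|^2$ a function of $t$ alone; a short computation shows its derivative has the constant sign of $(\alpha^2\mu_g L_g - 1)$, so the maximum is attained at a collinear endpoint, namely $t = L_g$ giving $\delta_1^2$ and $t = \mu_g$ giving $\delta_2^2$. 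In both regimes the ratio is bounded by $\max(\delta_1^2, \delta_2^2) = \delta^2$, which is exactly the claim. The main obstacle I anticipate is precisely this endpoint/monotonicity analysis together with invoking the sharp interpolation inequality; alternatively, the whole statement is the specialization to the constants $(\mu_g, L_g)$ of the reflected-resolvent contraction estimate of \citep{giselsson2017linear}, which may be cited to shorten the argument.
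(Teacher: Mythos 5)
Your proposal is correct, but your primary route is genuinely different from the paper's: the paper's entire proof of this lemma is the citation you mention only as a fallback in your final sentence --- it invokes \citep[Theorem 1]{giselsson2017linear} together with Lemma~\ref{lemma:smoothness_strongness_of_conjugate}, and nothing else. What you do instead is reprove that cited contraction estimate from scratch, and the pieces check out: unfolding the reflected resolvent gives $\mathbf{z}-\mathbf{z}'=\mathbf{u}+\alpha\mathbf{v}$ and $R_{\alpha\nabla g}\mathbf{z}-R_{\alpha\nabla g}\mathbf{z}'=\mathbf{u}-\alpha\mathbf{v}$; the resulting ratio is decreasing in $\langle\mathbf{u},\mathbf{v}\rangle$, so substituting the equality case of the interpolation inequality is a legitimate worst case; and the derivative of the substituted ratio with respect to $t^2$ does have the sign of $\alpha^2\mu_g L_g-1$, so the maximum over $t\in[\mu_g,L_g]$ sits at the collinear endpoint $t=L_g$ (value $\delta_1^2$) when $\alpha^2\mu_g L_g\geq 1$ and at $t=\mu_g$ (value $\delta_2^2$) otherwise. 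Two points you should make explicit in a write-up. First, $\max(\delta_1^2,\delta_2^2)=\delta^2$ is not a tautology, since $\delta=\max(\delta_1,\delta_2)$ and one of $\delta_1,\delta_2$ may be negative; it holds because $\mu_g\leq L_g$ forces the maximizing endpoint to be the branch whose $\delta_i$ is nonnegative and dominant (e.g., $\alpha^2\mu_g L_g\geq 1$ implies $\alpha L_g\geq 1$, hence $\delta_1\geq 0$ and $\delta_1\geq\delta_2$, so the endpoint value is exactly $\delta^2$). Second, the confinement $t\in[\mu_g,L_g]$ does not follow from Cauchy--Schwarz alone but from Cauchy--Schwarz combined with the interpolation inequality (equivalently, Lipschitz continuity of $\nabla g$ plus strong monotonicity), and it is essential: the substituted ratio is monotone in $t$ and can exceed $\delta^2$ at $t=0$ or as $t\to\infty$. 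As for what each route buys: the paper's proof is two lines and inherits sharpness from the reference, while yours makes the lemma self-contained and explains where the two branches of $\delta$ come from --- they are exactly the collinear extremal configurations $\mathbf{v}=L_g\mathbf{u}$ and $\mathbf{v}=\mu_g\mathbf{u}$.
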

\begin{proof}
The statement follows from \citep[Theorem 1]{giselsson2017linear} and Lemma \ref{lemma:smoothness_strongness_of_conjugate}.
\end{proof}

\begin{lemma}
\label{lemma:contractiveness_of_two_reflected_resolvent}
Under Assumptions \ref{assumption:convex}, \ref{assumption:smoothness_and_strong_convex}, and \ref{assumption:no_isolated_nodes},
$R_{\alpha \partial \iota} R_{\alpha \nabla g}$ is $\delta$-contractive for any $\alpha\in(0, \infty)$.
\end{lemma}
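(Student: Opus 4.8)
The plan is to obtain the $\delta$-contractiveness of the composition $R_{\alpha \partial \iota} R_{\alpha \nabla g}$ by combining the $\delta$-contractiveness of $R_{\alpha \nabla g}$, already established in Lemma \ref{lemma:contractive}, with the nonexpansiveness of the second reflected resolvent $R_{\alpha \partial \iota}$. The underlying principle is that composing a nonexpansive operator with a $\delta$-contractive one produces a $\delta$-contractive operator, so the contraction factor is inherited entirely from $R_{\alpha \nabla g}$ and the presence of $R_{\alpha \partial \iota}$ does not worsen it.

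First I would verify that $R_{\alpha \partial \iota}$ is nonexpansive. The function $\iota$ is the indicator function of the consensus set $\{ \boldsymbol{\lambda} : \boldsymbol{\lambda}_{i|j} = \boldsymbol{\lambda}_{j|i}, \ \forall (i,j) \in \mathcal{E} \}$, which is a closed linear subspace; hence $\iota$ is closed, proper, and convex, and $\partial \iota$ is maximally monotone. Consequently its resolvent $J_{\alpha \partial \iota}$ is firmly nonexpansive, and by the definition $R_{\alpha \partial \iota} = 2 J_{\alpha \partial \iota} - \text{Id}$ the reflected resolvent is nonexpansive (see \citep{bauschke2011convex,ryu2015primer}). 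In fact, because the consensus set is a subspace, $J_{\alpha \partial \iota}$ is the orthogonal projection onto it and $R_{\alpha \partial \iota}$ is the associated reflection, which is norm-preserving; this is consistent with the representation of Eq. \eqref{eq:pr_splitting_3} as a permutation $\mathbf{P}$, which is orthogonal and hence an isometry.

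Then I would chain the two bounds. For any $\mathbf{z}, \mathbf{z}' \in \mathbb{R}^{2d|\mathcal{E}|}$,
\begin{align*}
    \| R_{\alpha \partial \iota} R_{\alpha \nabla g} \mathbf{z} - R_{\alpha \partial \iota} R_{\alpha \nabla g} \mathbf{z}' \|
    &\leq \| R_{\alpha \nabla g} \mathbf{z} - R_{\alpha \nabla g} \mathbf{z}' \|
    \leq \delta \| \mathbf{z} - \mathbf{z}' \|,
\end{align*}
where the first inequality uses the nonexpansiveness of $R_{\alpha \partial \iota}$ and the second uses Lemma \ref{lemma:contractive}. Since $\delta \in [0,1)$ under Assumptions \ref{assumption:convex}, \ref{assumption:smoothness_and_strong_convex}, and \ref{assumption:no_isolated_nodes} for any $\alpha \in (0,\infty)$, this shows that $R_{\alpha \partial \iota} R_{\alpha \nabla g}$ is $\delta$-contractive, which concludes the argument.

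The step I expect to require the most care is the nonexpansiveness of $R_{\alpha \partial \iota}$: this is a conceptual rather than a computational point, resting on recognizing $\iota$ as the indicator of a closed convex set so that the standard monotone-operator machinery (firm nonexpansiveness of the resolvent, hence nonexpansiveness of the reflection) applies. Once this is in place, the composition estimate is immediate and the contraction constant carries over unchanged.
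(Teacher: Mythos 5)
Your proposal is correct and follows essentially the same route as the paper: the paper likewise invokes \citep[Theorem 20.25]{bauschke2011convex} and \citep[Corollary 23.9]{bauschke2011convex} to get nonexpansiveness of $R_{\alpha \partial \iota}$, then composes it with the $\delta$-contractive $R_{\alpha \nabla g}$ from Lemma \ref{lemma:contractive} exactly as you do. Your extra remark that $R_{\alpha \partial \iota}$ is in fact an isometry (a reflection across a subspace, matching the permutation $\mathbf{P}$) is a nice observation but not needed for the bound.
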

\begin{proof}
From \citep[Corollary 23.9]{bauschke2011convex} and \citep[Theorem 20.25]{bauschke2011convex},
$R_{\alpha \partial \iota}$ is nonexpansive for any $\alpha \in (0, \infty)$.
Then, from Lemma \ref{lemma:contractive}, 
for any $\mathbf{z}$ and $\mathbf{z}^\prime$, we have
\begin{align*}
    \| R_{\alpha \partial \iota} R_{\alpha \nabla g} \mathbf{z} - R_{\alpha \partial \iota} R_{\alpha \nabla g} \mathbf{z}^\prime \|
    &\leq \| R_{\alpha \nabla g} \mathbf{z} - R_{\alpha \nabla g} \mathbf{z}^\prime \| \\
    &\leq \delta \| \mathbf{z} - \mathbf{z}^\prime \|.
\end{align*}
This concludes the proof.
\end{proof}

In the following, we use the notation $\mathbb{E}_r[\cdot]$ as the expectation over the randomness in the round $r$.

\begin{lemma}
\label{lemma:contractive_of_z}
Let $\bar{\mathbf{z}} \in \text{Fix}(R_{\alpha \partial \iota} R_{\alpha \nabla g})$.
Under Assumptions \ref{assumption:compression}, \ref{assumption:convex}, \ref{assumption:smoothness_and_strong_convex}, and \ref{assumption:no_isolated_nodes},
$\mathbf{z}^{(r+1)}$ and $\mathbf{z}^{(r)}$ generated by Eqs. (\ref{eq:first_reflected_resolvent}-\ref{eq:second_reflected_resolvent}) satisfy the following:
\begin{align}
    \mathbb{E}_r \| \mathbf{z}^{(r+1)} - \bar{\mathbf{z}} \| 
    \leq \{ |1 - \theta| + \theta \delta + \sqrt{1 - \tau} (\theta + |1 - \theta| \delta + \delta ) \} \| \mathbf{z}^{(r)} - \bar{\mathbf{z}} \|. 
\end{align}
\end{lemma}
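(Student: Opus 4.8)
The plan is to read Eqs.~\eqref{eq:reformulated_first_reflected_resolvent}--\eqref{eq:second_reflected_resolvent} as a \emph{compressed perturbation} of the exact relaxed fixed-point iteration for the operator $T \coloneqq R_{\alpha \partial \iota} R_{\alpha \nabla g}$, which is $\delta$-contractive by Lemma~\ref{lemma:contractiveness_of_two_reflected_resolvent}, and to control the perturbation in expectation. Since $\mathbf{y}^{(r+1)} = R_{\alpha \nabla g}\mathbf{z}^{(r)}$, we have $R_{\alpha \partial \iota}\mathbf{y}^{(r+1)} = T\mathbf{z}^{(r)}$, so with $\mathbf{d}^{(r)} \coloneqq T\mathbf{z}^{(r)} - \mathbf{z}^{(r)}$ the update reads $\mathbf{z}^{(r+1)} = \mathbf{z}^{(r)} + \theta\,\mathrm{comp}(\mathbf{d}^{(r)})$. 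First I would isolate the uncompressed step by writing $\mathbf{e}^{(r)} \coloneqq \mathrm{comp}(\mathbf{d}^{(r)}) - \mathbf{d}^{(r)}$ and using the fixed-point identity $\bar{\mathbf{z}} = T\bar{\mathbf{z}}$ (equivalently $\bar{\mathbf{z}} = (1-\theta)\bar{\mathbf{z}} + \theta T\bar{\mathbf{z}}$) to obtain
\[
\mathbf{z}^{(r+1)} - \bar{\mathbf{z}} = \big[(1-\theta)(\mathbf{z}^{(r)} - \bar{\mathbf{z}}) + \theta(T\mathbf{z}^{(r)} - \bar{\mathbf{z}})\big] + \theta\,\mathbf{e}^{(r)}.
\]
The bracketed term is deterministic given $\mathbf{z}^{(r)}$, while $\mathbf{e}^{(r)}$ carries the entire round-$r$ compression randomness.

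Next I would bound the two pieces separately. Conditioning on $\mathbf{z}^{(r)}$ and applying the triangle inequality, $\mathbb{E}_r\|\mathbf{z}^{(r+1)} - \bar{\mathbf{z}}\|$ is at most the (deterministic) norm of the bracket plus $\theta\,\mathbb{E}_r\|\mathbf{e}^{(r)}\|$. For the bracket, the triangle inequality together with $\delta$-contractiveness of $T$ (using $T\bar{\mathbf{z}} = \bar{\mathbf{z}}$, so $\|T\mathbf{z}^{(r)} - \bar{\mathbf{z}}\| \leq \delta\|\mathbf{z}^{(r)} - \bar{\mathbf{z}}\|$) yields the factor $|1-\theta| + \theta\delta$, which is exactly the leading term of the claimed rate. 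For the compression error I would pass from the second-moment guarantee of Eq.~\eqref{eq:bound_of_compression} to a first-moment bound via Jensen, $\mathbb{E}_r\|\mathbf{e}^{(r)}\| \leq \sqrt{\mathbb{E}_r\|\mathbf{e}^{(r)}\|^2} \leq \sqrt{1-\tau}\,\|\mathbf{d}^{(r)}\|$, and then use the linearity of $\mathrm{comp}$ (Eqs.~\eqref{eq:linearlity_of_compression}--\eqref{eq:linearlity_of_compression_2}) to split $\mathbf{d}^{(r)} = (T\mathbf{z}^{(r)} - \bar{\mathbf{z}}) - (\mathbf{z}^{(r)} - \bar{\mathbf{z}})$ and bound each summand, invoking $\delta$-contractiveness once more on the first. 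Assembling the weights $\theta$, $|1-\theta|$, and $1$ attached to $\|\mathbf{z}^{(r)}-\bar{\mathbf{z}}\|$ and to $\|T\mathbf{z}^{(r)}-\bar{\mathbf{z}}\| \le \delta\|\mathbf{z}^{(r)}-\bar{\mathbf{z}}\|$ then collapses everything into $\{|1-\theta| + \theta\delta + \sqrt{1-\tau}(\theta + |1-\theta|\delta + \delta)\}\|\mathbf{z}^{(r)} - \bar{\mathbf{z}}\|$.

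The hard part will be the bookkeeping of the compression term rather than any deep inequality. Two points require care. First, the expectation must be handled in the right order: one has to factor out the deterministic relaxed step \emph{before} taking $\mathbb{E}_r$, so that the contraction factor $\delta$ is not lost, and then convert the variance-type bound \eqref{eq:bound_of_compression} into a bound on $\mathbb{E}_r\|\cdot\|$ through Jensen. Second, the coefficient $\theta + |1-\theta|\delta + \delta$ must be tracked with the absolute values $|1-\theta|$ left explicit, since the admissible range of $\theta$ in Eq.~\eqref{eq:condition_of_theta} need not lie in $(0,1]$; keeping these factors is what makes the stated coefficient valid across the whole range (and tight when $\theta \ge 1$). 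With these in place, the lemma follows by combining the bracket bound $|1-\theta|+\theta\delta$ with the compression-error bound, and the full linear rate in Theorem~\ref{theorem:main_convergence_analysis} is then obtained by iterating the one-step contraction of this lemma over $r$.
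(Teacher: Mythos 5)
Your proof is correct and takes essentially the same route as the paper's: writing $T$ for $R_{\alpha \partial \iota} R_{\alpha \nabla g}$, both arguments isolate the exact relaxed step $(1-\theta)(\mathbf{z}^{(r)}-\bar{\mathbf{z}})+\theta(T\mathbf{z}^{(r)}-T\bar{\mathbf{z}})$ from the compression error $\theta(\mathrm{comp}(\mathbf{d}^{(r)})-\mathbf{d}^{(r)})$, bound the former by $(|1-\theta|+\theta\delta)\|\mathbf{z}^{(r)}-\bar{\mathbf{z}}\|$ via Lemma \ref{lemma:contractiveness_of_two_reflected_resolvent}, and convert Eq. \eqref{eq:bound_of_compression} into a first-moment bound by Jensen, exactly as in Sec. \ref{sec:convergence_analysis}. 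One harmless bookkeeping note: the split $\mathbf{d}^{(r)}=(T\mathbf{z}^{(r)}-\bar{\mathbf{z}})-(\mathbf{z}^{(r)}-\bar{\mathbf{z}})$ you describe yields the coefficient $\sqrt{1-\tau}\,\theta(1+\delta)$, not the stated $\sqrt{1-\tau}(\theta+|1-\theta|\delta+\delta)$ (which the paper obtains by regrouping as $[\theta(\mathbf{z}^{(r)}-\bar{\mathbf{z}})+(1-\theta)(T\mathbf{z}^{(r)}-T\bar{\mathbf{z}})]-(T\mathbf{z}^{(r)}-T\bar{\mathbf{z}})$), but your coefficient is tighter for $\theta<1$ and equal for $\theta\geq 1$, so the claimed inequality follows a fortiori.
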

\begin{proof}
Combining Eq. \eqref{eq:second_reflected_resolvent} and Eq. \eqref{eq:reformulated_first_reflected_resolvent},
the update formula of $\mathbf{z}$ can be written as follows:
\begin{align*}
    \mathbf{z}^{(r+1)} = \mathbf{z}^{(r)} + \theta \text{comp} ((R_{\alpha \partial \iota} R_{\alpha \nabla g} - \text{Id}) \mathbf{z}^{(r)}).
\end{align*}
Let $\omega^{(r)}$ be the parameter used to compress $((R_{\alpha \partial \iota} R_{\alpha \nabla g} - \text{Id}) \mathbf{z}^{(r)})$.
In the following, to use Eq. \eqref{eq:linearlity_of_compression} and Eq. \eqref{eq:linearlity_of_compression_2} in Assumption \ref{assumption:compression},
we rewrite the update formula of $\mathbf{z}$ as follows:
\begin{align*}
    \mathbf{z}^{(r+1)} &= \mathbf{z}^{(r)} + \theta \text{comp} ((R_{\alpha \partial \iota} R_{\alpha \nabla g} - \text{Id}) \mathbf{z}^{(r)} ; \omega^{(r)}).
\end{align*}
Because $\bar{\mathbf{z}} \in \text{Fix}(R_{\alpha \partial \iota} R_{\alpha \nabla g})$, 
we have $(R_{\alpha \partial \iota} R_{\alpha \nabla g} - \text{Id}) \bar{\mathbf{z}} = \mathbf{0}$.
Under Assumption \ref{assumption:compression}, because $\textbf{comp}(\mathbf{0} ; \omega) = \mathbf{0}$ for any $\omega$, we have
\begin{align*}
    \bar{\mathbf{z}} &= \bar{\mathbf{z}} + \theta \text{comp} ((R_{\alpha \partial \iota} R_{\alpha \nabla g} - \text{Id}) \bar{\mathbf{z}} ; \omega^{(r)}).
\end{align*}
We then have
\begin{align*}
    &\mathbb{E}_r \| \mathbf{z}^{(r+1)} - \bar{\mathbf{z}} \| \\
    &= \mathbb{E}_r \| \mathbf{z}^{(r)} - \bar{\mathbf{z}} 
    +\theta \; \text{comp}( (R_{\alpha \partial \iota} R_{\alpha \nabla g} - \text{Id}) \mathbf{z}^{(r)} ; \omega^{(r)}) - \theta \; \text{comp}( (R_{\alpha \partial \iota} R_{\alpha \nabla g} - \text{Id}) \bar{\mathbf{z}} ; \omega^{(r)}) \| \\
    %&\leq \| (1 - \theta) (\mathbf{z}^{(r)} - \bar{\mathbf{z}}) + \theta ( R_{\alpha \partial \iota} R_{\alpha \nabla g} \mathbf{z}^{(r)} - R_{\alpha \partial \iota} R_{\alpha \nabla g} \bar{\mathbf{z}}) \| \\
    %&\qquad + \mathbb{E}_r \| \theta (\mathbf{z}^{(r)} - \bar{\mathbf{z}}) - \theta ( R_{\alpha \partial \iota} R_{\alpha \nabla g} \mathbf{z}^{(r)} - R_{\alpha \partial \iota} R_{\alpha \nabla g} \bar{\mathbf{z}} ) \\
    %&\qquad\qquad - \theta \{ \text{comp} (\mathbf{z}^{(r)} ; \omega^{(r)}) - \text{comp} (\bar{\mathbf{z}} ; \omega^{(r)}) \}  
    %+ \theta \{ \text{comp} (R_{\alpha \partial \iota} R_{\alpha \nabla g} \mathbf{z}^{(r)} ; \omega^{(r)}) - \text{comp}( R_{\alpha \partial \iota} R_{\alpha \nabla g} \bar{\mathbf{z}}  ; \omega^{(r)}) \} \| \\
    &\stackrel{(a)}{\leq} \| (1 - \theta) (\mathbf{z}^{(r)} - \bar{\mathbf{z}}) + \theta ( R_{\alpha \partial \iota} R_{\alpha \nabla g} \mathbf{z}^{(r)} - R_{\alpha \partial \iota} R_{\alpha \nabla g} \bar{\mathbf{z}} ) \| \\
    &\qquad + \mathbb{E}_r \| \theta ( \mathbf{z}^{(r)} - \bar{\mathbf{z}} - R_{\alpha \partial \iota} R_{\alpha \nabla g} \mathbf{z}^{(r)} + R_{\alpha \partial \iota} R_{\alpha \nabla g} \bar{\mathbf{z}} ) \\
    &\qquad\qquad - \theta \{ \text{comp} (\mathbf{z}^{(r)} - \bar{\mathbf{z}} 
    - R_{\alpha \partial \iota} R_{\alpha \nabla g} \mathbf{z}^{(r)} + R_{\alpha \partial \iota} R_{\alpha \nabla g} \bar{\mathbf{z}} ; \omega^{(r)}) \} \| \\
    &\stackrel{(b)}{\leq}  \| (1 - \theta) (\mathbf{z}^{(r)} - \bar{\mathbf{z}}) + \theta ( R_{\alpha \partial \iota} R_{\alpha \nabla g} \mathbf{z}^{(r)} - R_{\alpha \partial \iota} R_{\alpha \nabla g} \bar{\mathbf{z}} ) \| \\
    &\qquad + \sqrt{1 - \tau} \| \theta (\mathbf{z}^{(r)} - \bar{\mathbf{z}}) - \theta ( R_{\alpha \partial \iota} R_{\alpha \nabla g} \mathbf{z}^{(r)} - R_{\alpha \partial \iota} R_{\alpha \nabla g} \bar{\mathbf{z}} ) \| \\
    &\leq \underbrace{\| (1 - \theta) (\mathbf{z}^{(r)} - \bar{\mathbf{z}}) + \theta ( R_{\alpha \partial \iota} R_{\alpha \nabla g} \mathbf{z}^{(r)} - R_{\alpha \partial \iota} R_{\alpha \nabla g} \bar{\mathbf{z}} ) \|}_{T_1} \\
    &\qquad + \underbrace{\sqrt{1 - \tau} \| \theta (\mathbf{z}^{(r)} - \bar{\mathbf{z}}) + (1 - \theta) ( R_{\alpha \partial \iota} R_{\alpha \nabla g} \mathbf{z}^{(r)} - R_{\alpha \partial \iota} R_{\alpha \nabla g} \bar{\mathbf{z}} ) \|}_{T_2} \\
    &\qquad + \underbrace{\sqrt{1 - \tau} \| R_{\alpha \partial \iota} R_{\alpha \nabla g} \mathbf{z}^{(r)} - R_{\alpha \partial \iota} R_{\alpha \nabla g} \bar{\mathbf{z}} \|}_{T_3},
\end{align*}
where we use Assumption \ref{assumption:compression} for (a) and (b).
From Lemma \ref{lemma:contractiveness_of_two_reflected_resolvent}, 
$T_1$, $T_2$, and $T_3$ are upper bounded as follows:
\begin{align*}
    T_1 
    &\leq |1 - \theta| \| \mathbf{z}^{(r)} - \bar{\mathbf{z}} \| + \theta \| R_{\alpha \partial \iota} R_{\alpha \nabla g} \mathbf{z}^{(r)} - R_{\alpha \partial \iota} R_{\alpha \nabla g} \bar{\mathbf{z}}  \| \\
    &\leq (|1 - \theta| + \theta \delta) \| \mathbf{z}^{(r)} - \bar{\mathbf{z}} \|, \\
    T_2 
    &\leq \sqrt{1 - \tau} (\theta \| \mathbf{z}^{(r)} - \bar{\mathbf{z}} \| + |1 - \theta| \| R_{\alpha \partial \iota} R_{\alpha \nabla g} \mathbf{z}^{(r)} - R_{\alpha \partial \iota} R_{\alpha \nabla g} \bar{\mathbf{z}}  \| )\\
    &\leq \sqrt{1 - \tau} (\theta + |1 - \theta| \delta ) \| \mathbf{z}^{(r)} - \bar{\mathbf{z}} \|, \\
    T_3 &\leq \sqrt{1 - \tau} \delta \| \mathbf{z}^{(r)} - \bar{\mathbf{z}} \|.
\end{align*}
Therefore, we obtain
\begin{align*}
    \mathbb{E}_r \| \mathbf{z}^{(r+1)} - \bar{\mathbf{z}} \| 
    \leq \{ |1 - \theta| + \theta \delta + \sqrt{1 - \tau} (\theta + |1 - \theta| \delta + \delta ) \} \| \mathbf{z}^{(r)} - \bar{\mathbf{z}} \|. 
\end{align*}
This concludes the proof.
\end{proof}
\begin{lemma}
\label{lemma:condition_of_contractiveness_of_z}
Let $\bar{\mathbf{z}} \in \text{Fix}(R_{\alpha \partial \iota} R_{\alpha \nabla g})$.
Under Assumptions \ref{assumption:compression}, \ref{assumption:convex}, \ref{assumption:smoothness_and_strong_convex}, and \ref{assumption:no_isolated_nodes},
when $\tau \geq 1 - ( \frac{1 - \delta}{1 + \delta} )^2$ and $\theta$ satisfies the following:
\begin{align}
\label{eq:domain_of_theta}
    \theta \in \left( \frac{2\delta \sqrt{1 - \tau}}{(1 - \delta)(1 - \sqrt{1 - \tau})}, \frac{2}{(1+\delta)(1 + \sqrt{1 - \tau})} \right),
\end{align}
$\mathbf{z}^{(r+1)}$ and $\mathbf{z}^{(r)}$ generated by Eqs. (\ref{eq:first_reflected_resolvent}-\ref{eq:second_reflected_resolvent}) satisfy the following:
\begin{align}
    \mathbb{E}_r \| \mathbf{z}^{(r+1)} - \bar{\mathbf{z}} \| 
    \leq \{ |1 - \theta| + \theta \delta + \sqrt{1 - \tau} (\theta + |1 - \theta| \delta + \delta ) \} \| \mathbf{z}^{(r)} - \bar{\mathbf{z}} \| 
    < \| \mathbf{z}^{(r)} - \bar{\mathbf{z}} \|.
\end{align}
\end{lemma}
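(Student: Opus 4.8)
The plan is to take the first inequality for granted --- it is exactly the conclusion of Lemma~\ref{lemma:contractive_of_z} --- so that the entire task reduces to proving that the bracketed coefficient
\[
    C(\theta) \coloneqq |1-\theta| + \theta\delta + \sqrt{1-\tau}\,(\theta + |1-\theta|\delta + \delta)
\]
is strictly less than $1$ whenever $\theta$ lies in the stated interval. To lighten the algebra I would write $s \coloneqq \sqrt{1-\tau}$, noting $s \in [0,1)$, and recall that $\delta \in [0,1)$ by the discussion following Assumption~\ref{assumption:no_isolated_nodes}, so that all the factors appearing below are positive.

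Because $C$ contains $|1-\theta|$, the natural move is a case split on the sign of $1-\theta$. In the case $\theta \le 1$ I would substitute $|1-\theta| = 1-\theta$, expand, and collect the terms in $\theta$; the coefficient of $\theta$ simplifies via the identity $-1+\delta+s-s\delta = -(1-\delta)(1-s)$, giving $C(\theta) = 1 + 2s\delta - \theta(1-\delta)(1-s)$. Hence $C(\theta) < 1$ is equivalent to $\theta > \frac{2s\delta}{(1-\delta)(1-s)}$, which is precisely the left endpoint of the interval. In the case $\theta > 1$ I would substitute $|1-\theta| = \theta-1$; here the constant terms collapse to $-1$ and the coefficient of $\theta$ factors cleanly as $(1+\delta)(1+s)$, so $C(\theta) = -1 + \theta(1+\delta)(1+s)$ and $C(\theta) < 1$ becomes $\theta < \frac{2}{(1+\delta)(1+s)}$, the right endpoint.

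Putting the two cases together yields the claim: for any $\theta$ in the open interval, if $\theta \le 1$ the lower-endpoint bound applies and if $\theta > 1$ the upper-endpoint bound applies, so $C(\theta) < 1$ in either case. I expect the only genuinely delicate point to be the two algebraic factorizations --- recognizing that $-1+\delta+s-s\delta$ and $1+\delta+s+s\delta$ factor as $-(1-\delta)(1-s)$ and $(1+\delta)(1+s)$ respectively --- since these are exactly what make the two contraction thresholds coincide with the two endpoints of the interval. Finally, I would remark that the hypothesis $\tau \ge 1 - (\tfrac{1-\delta}{1+\delta})^2$, equivalently $s \le \tfrac{1-\delta}{1+\delta}$, is what makes the interval non-degenerate, guaranteeing $\frac{2s\delta}{(1-\delta)(1-s)} < \frac{2}{(1+\delta)(1+s)}$ so that the statement is not vacuous; this is a short standalone inequality check rather than part of the contraction argument itself.
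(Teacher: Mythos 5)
Your proposal is correct and follows essentially the same route as the paper's proof: invoke Lemma~\ref{lemma:contractive_of_z} for the first inequality, then split on the sign of $1-\theta$ and use exactly the two factorizations $1+2\delta\sqrt{1-\tau}-\theta(1-\sqrt{1-\tau})(1-\delta)$ and $-1+\theta(1+\delta)(1+\sqrt{1-\tau})$ to match the coefficient against the two endpoints of the interval. The paper additionally notes that the interval contains $1$ (so the two cases cover it), which your case split on $\theta\le 1$ versus $\theta>1$ handles implicitly.
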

\begin{proof}
When $\tau \geq 1 - ( \frac{1 - \delta}{1 + \delta} )^2$,
the domain of Eq. \eqref{eq:domain_of_theta} is non-empty and contains $1$.
Then, when $\tau \geq 1 - ( \frac{1 - \delta}{1 + \delta} )^2$ and $\theta$ satisfies the following:
\begin{align*}
    \theta \in \left( \frac{2\delta \sqrt{1 - \tau}}{(1 - \delta)(1 - \sqrt{1 - \tau})}, 1 \right],
\end{align*}
we have
\begin{align*}
    &|1 - \theta| + \theta \delta + \sqrt{1 - \tau} (\theta + |1 - \theta| \delta + \delta )  \\
    &= 1 + 2\delta \sqrt{1-\tau} - \theta (1 - \sqrt{1-\tau})(1 - \delta) < 1. 
\end{align*}
When $\tau \geq 1 - ( \frac{1 - \delta}{1 + \delta} )^2$ and $\theta$ satisfies the following:
\begin{align*}
    \theta \in \left[ 1,  \frac{2}{(1+\delta)(1 + \sqrt{1 - \tau})} \right),
\end{align*}
we have
\begin{align*}
    &|1 - \theta| + \theta \delta + \sqrt{1 - \tau} (\theta + |1 - \theta| \delta + \delta )  \\
    &= - 1 + \theta (1 + \delta ) (1 + \sqrt{1 - \tau}) < 1. 
\end{align*}
This concludes the proof.
\end{proof}

In the following,
we define the operator $T$ as follows:
\begin{align*}
    T \mathbf{z}^{(r)} \coloneqq \text{argmin}_{\mathbf{w}} \{ f(\mathbf{w})
    + \frac{\alpha}{2} {\left\| \mathbf{A}^\top \mathbf{w} - \frac{1}{\alpha} \mathbf{z}^{(r)} \right\|}^2 \}.
\end{align*}

\begin{lemma}
\label{lemma:lipschitz_of_w}
Under Assumptions \ref{assumption:convex}, \ref{assumption:smoothness_and_strong_convex}, and \ref{assumption:no_isolated_nodes},
the operator $T$ is $(\sqrt{N_{\text{max}}} / (\mu + \alpha N_{\text{min}}))$-Lipschitz continuous.
\end{lemma}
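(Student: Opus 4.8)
The plan is to characterize $\mathbf{w}=T\mathbf{z}$ via the first-order optimality condition of its defining (strongly convex) minimization problem, and then bound $\|T\mathbf{z}-T\mathbf{z}'\|$ in terms of $\|\mathbf{z}-\mathbf{z}'\|$ by pairing the strong monotonicity of $\nabla f$ with the singular-value bounds of Lemma~\ref{lemma:singular_values}.

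First I would record the stationarity condition. Since $f$ is $\mu$-strongly convex (Assumption~\ref{assumption:smoothness_and_strong_convex}) and the quadratic penalty is convex and smooth, the objective is strongly convex, so $\mathbf{w}=T\mathbf{z}$ is the unique minimizer and satisfies $\mathbf{0}=\nabla f(\mathbf{w})+\alpha\mathbf{A}\mathbf{A}^\top\mathbf{w}-\mathbf{A}\mathbf{z}$. Writing the same identity for $\mathbf{w}'=T\mathbf{z}'$ and subtracting gives
$$\nabla f(\mathbf{w})-\nabla f(\mathbf{w}')+\alpha\mathbf{A}\mathbf{A}^\top(\mathbf{w}-\mathbf{w}')=\mathbf{A}(\mathbf{z}-\mathbf{z}').$$

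Next I would take the inner product of this identity with $\mathbf{w}-\mathbf{w}'$. On the left-hand side, $\mu$-strong convexity gives $\langle\nabla f(\mathbf{w})-\nabla f(\mathbf{w}'),\mathbf{w}-\mathbf{w}'\rangle\geq\mu\|\mathbf{w}-\mathbf{w}'\|^2$, and since the proof of Lemma~\ref{lemma:singular_values} shows $\mathbf{A}\mathbf{A}^\top$ is diagonal with entries in $[N_{\text{min}},N_{\text{max}}]$, its minimum eigenvalue $N_{\text{min}}$ yields $\langle\mathbf{A}\mathbf{A}^\top(\mathbf{w}-\mathbf{w}'),\mathbf{w}-\mathbf{w}'\rangle\geq N_{\text{min}}\|\mathbf{w}-\mathbf{w}'\|^2$; together these lower-bound the left-hand side by $(\mu+\alpha N_{\text{min}})\|\mathbf{w}-\mathbf{w}'\|^2$. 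On the right-hand side, Cauchy--Schwarz together with the fact that the maximum singular value of $\mathbf{A}$ is $\sqrt{N_{\text{max}}}$ gives $\langle\mathbf{A}(\mathbf{z}-\mathbf{z}'),\mathbf{w}-\mathbf{w}'\rangle\leq\sqrt{N_{\text{max}}}\,\|\mathbf{z}-\mathbf{z}'\|\,\|\mathbf{w}-\mathbf{w}'\|$.

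Combining the two bounds and dividing by $\|\mathbf{w}-\mathbf{w}'\|$ (the case $\mathbf{w}=\mathbf{w}'$ being trivial) yields $\|T\mathbf{z}-T\mathbf{z}'\|\leq\frac{\sqrt{N_{\text{max}}}}{\mu+\alpha N_{\text{min}}}\|\mathbf{z}-\mathbf{z}'\|$, which is the claim. I do not anticipate a genuine obstacle; the only care needed is bookkeeping the two distinct roles of the singular values---the minimum eigenvalue of $\mathbf{A}\mathbf{A}^\top$ supplying the coercivity term while the largest singular value of $\mathbf{A}$ controls the Cauchy--Schwarz term---and invoking strong convexity of the aggregate $f$ rather than of the individual $f_i$, which is exactly what Assumption~\ref{assumption:smoothness_and_strong_convex} provides.
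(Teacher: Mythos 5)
Your proof is correct, and it takes a genuinely different route from the paper's. The paper rewrites $T\mathbf{z}$ as a proximal operator, $T\mathbf{z} = \mathrm{prox}_{f + \frac{\alpha}{2}\|\mathbf{A}^\top\cdot\|^2 - \frac{1}{2}\|\cdot\|^2}(\mathbf{A}\mathbf{z})$, identifies this prox with the resolvent of the gradient, which collapses to $\nabla\bigl(f + \frac{\alpha}{2}\|\mathbf{A}^\top\cdot\|^2\bigr)^\ast$, and then invokes conjugate duality: $f + \frac{\alpha}{2}\|\mathbf{A}^\top\cdot\|^2$ is $(\mu + \alpha N_{\text{min}})$-strongly convex by Remark \ref{remark:singular_values}, so its conjugate is $(1/(\mu + \alpha N_{\text{min}}))$-smooth, and composing with $\|\mathbf{A}\mathbf{z} - \mathbf{A}\mathbf{z}'\| \leq \sqrt{N_{\text{max}}}\,\|\mathbf{z} - \mathbf{z}'\|$ gives the claim. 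You instead subtract the two first-order optimality conditions and pair the difference with $\mathbf{w} - \mathbf{w}'$, using strong monotonicity of $\nabla f$ together with $\mathbf{A}\mathbf{A}^\top \succeq N_{\text{min}}\mathbf{I}$ for coercivity, and Cauchy--Schwarz with $\|\mathbf{A}\| = \sqrt{N_{\text{max}}}$ on the other side. Both arguments consume exactly the same two spectral facts from Lemma \ref{lemma:singular_values}, but yours is more elementary and self-contained: it needs no conjugate functions or resolvents, and it quietly sidesteps a technical subtlety in the paper's route, namely that $f + \frac{\alpha}{2}\|\mathbf{A}^\top\cdot\|^2 - \frac{1}{2}\|\cdot\|^2$ is convex only when $\mu + \alpha N_{\text{min}} \geq 1$, so the cited prox-equals-resolvent fact (stated for proper closed convex functions) strictly speaking requires an extra word of justification there, whereas your stationarity computation is valid unconditionally since the objective defining $T$ is strongly convex and differentiable in its quadratic part. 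What the paper's route buys in exchange is consistency with the operator-splitting formalism used throughout the rest of the analysis (compare Lemma \ref{lemma:smoothness_strongness_of_conjugate}) and the explicit representation $T\mathbf{z} = \nabla\bigl(f + \frac{\alpha}{2}\|\mathbf{A}^\top\cdot\|^2\bigr)^\ast(\mathbf{A}\mathbf{z})$, which makes the Lipschitz constant appear as a smoothness modulus of a conjugate rather than as the outcome of an ad hoc estimate.
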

\begin{proof}
We have
\begin{align*}
    &\text{argmin}_{\mathbf{w}} \{ f(\mathbf{w})
    + \frac{\alpha}{2} {\left\| \mathbf{A}^\top \mathbf{w} - \frac{1}{\alpha} \mathbf{z}^{(r)} \right\|}^2 \} \\
    &= \text{argmin}_{\mathbf{w}} \{ f(\mathbf{w}) + \frac{\alpha}{2} \| \mathbf{A}^\top \mathbf{w} \|^2
    - \langle \mathbf{w}, \mathbf{A}\mathbf{z}^{(r)} \rangle \} \\
    &= \text{argmin}_{\mathbf{w}} \{ f(\mathbf{w}) + \frac{\alpha}{2} \| \mathbf{A}^\top \mathbf{w} \|^2
    - \frac{1}{2} \| \mathbf{w} \|^2 + \frac{1}{2} \| \mathbf{w} - \mathbf{A} \mathbf{z}^{(r)} \|^2 \} \\
    &= \text{prox}_{f + \frac{\alpha}{2} \| \mathbf{A}^\top \cdot \|^2
    - \frac{1}{2} \| \cdot \|^2} (\mathbf{A} \mathbf{z}^{(r)}).
\end{align*}
From \citep[Example 23.3]{bauschke2011convex}, 
the proximal operator of $(f + \frac{\alpha}{2} \| \mathbf{A}^\top \cdot \|^2 - \frac{1}{2} \| \cdot \|^2)$
is the resolvent of $\nabla (f + \frac{\alpha}{2} \| \mathbf{A}^\top \cdot \|^2 - \frac{1}{2} \| \cdot \|^2)$,
and the resolvent can be rewritten as follows:
\begin{align*}
    J_{\nabla (f + \frac{\alpha}{2} \| \mathbf{A}^\top \cdot \|^2 - \frac{1}{2} \| \cdot \|^2)} 
    &= (\text{Id} + \nabla (f + \frac{\alpha}{2} \| \mathbf{A}^\top \cdot \|^2 - \frac{1}{2} \| \cdot \|^2))^{-1} \\
    &= (\nabla (f + \frac{\alpha}{2} \| \mathbf{A}^\top \cdot \|^2))^{-1} \\
    &= \nabla (f + \frac{\alpha}{2} \| \mathbf{A}^\top \cdot \|^2)^\ast.
\end{align*}
From Assumption \ref{assumption:smoothness_and_strong_convex} and Remark \ref{remark:singular_values}, 
$(f + \frac{\alpha}{2} \| \mathbf{A}^\top \cdot \|^2)$ is $(\mu + \alpha N_{\text{min}})$-strongly convex.
Then, $(f + \frac{\alpha}{2} \| \mathbf{A}^\top \cdot \|^2)^\ast$ is $(1/(\mu + \alpha N_{\text{min}}))$-smooth.
Then, the proximal operator of $(f + \frac{\alpha}{2} \| \mathbf{A}^\top \cdot \|^2 - \frac{1}{2} \| \cdot \|^2)$ is $(1/(\mu + \alpha N_{\text{min}}))$-Lipschitz continuous.
Then, we have
\begin{align*}
    \left\| \text{prox}_{f + \frac{\alpha}{2} \| \mathbf{A}^\top \cdot \|^2
    - \frac{1}{2} \| \cdot \|^2} (\mathbf{A} \mathbf{z}) - \text{prox}_{f + \frac{\alpha}{2} \| \mathbf{A}^\top \cdot \|^2
    - \frac{1}{2} \| \cdot \|^2} (\mathbf{A} \mathbf{z}^\prime) \right\| 
    &\leq \frac{1}{\mu + \alpha N_{\text{min}}} \| \mathbf{A} \mathbf{z} - \mathbf{A} \mathbf{z}^\prime \| \\
    &\stackrel{(a)}{\leq} \frac{\sqrt{N_{\text{max}}}}{\mu + \alpha N_{\text{min}}}  \| \mathbf{z} - \mathbf{z}^\prime \|,
\end{align*}
where we use Lemma \ref{lemma:singular_values} for (a).
This concludes the proof.
\end{proof}

\begin{lemma}
\label{lemma:relationship_between_primal_and_dual}
Suppose that Assumptions \ref{assumption:convex}, \ref{assumption:smoothness_and_strong_convex}, and \ref{assumption:no_isolated_nodes} hold.
Let $\boldsymbol{\lambda}^\star$ be the optimal solution of the dual problem of Eq. \eqref{eq:lagrangian_dual}.
Then, $\nabla f^\ast (\mathbf{A} \boldsymbol{\lambda}^\star)$ is the optimal solution of the primal problem of Eq. \eqref{eq:lagrangian_primel}.
\end{lemma}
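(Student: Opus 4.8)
The plan is to derive the first-order optimality condition of the dual problem and translate it, via the standard conjugate subgradient inversion, into the optimality condition of the primal problem. Throughout I write $\mathbf{w}^\star \coloneqq \nabla f^\ast(\mathbf{A}\boldsymbol{\lambda}^\star)$ for the candidate primal solution, which is well defined because $f^\ast(\mathbf{A}\cdot)$ is differentiable by Lemma \ref{lemma:smoothness_strongness_of_conjugate}.

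First I would apply Fermat's rule to the dual objective $D(\boldsymbol{\lambda}) \coloneqq f^\ast(\mathbf{A}\boldsymbol{\lambda}) + \iota(\boldsymbol{\lambda})$. By Lemma \ref{lemma:smoothness_strongness_of_conjugate} the term $f^\ast(\mathbf{A}\cdot)$ is smooth, hence differentiable with gradient $\mathbf{A}^\top \nabla f^\ast(\mathbf{A}\boldsymbol{\lambda})$; since $\iota$ is closed, proper, and convex and the smooth term is finite everywhere, the sum rule for subdifferentials applies and optimality of $\boldsymbol{\lambda}^\star$ reads $\mathbf{0} \in \mathbf{A}^\top \nabla f^\ast(\mathbf{A}\boldsymbol{\lambda}^\star) + \partial\iota(\boldsymbol{\lambda}^\star)$, that is, $-\mathbf{A}^\top\mathbf{w}^\star \in \partial\iota(\boldsymbol{\lambda}^\star)$.

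Next, because $f$ is $\mu$-strongly convex and $L$-smooth (Assumption \ref{assumption:smoothness_and_strong_convex}), $\nabla f$ is a bijection with inverse $\nabla f^\ast$, so the definition of $\mathbf{w}^\star$ is equivalent to $\nabla f(\mathbf{w}^\star) = \mathbf{A}\boldsymbol{\lambda}^\star$. I would then invoke the conjugate subgradient inversion property for the closed proper convex function $\iota$, namely $\mathbf{u}\in\partial\iota(\boldsymbol{\lambda}) \Leftrightarrow \boldsymbol{\lambda}\in\partial\iota^\ast(\mathbf{u})$ \citep{bauschke2011convex}, to rewrite the dual condition as $\boldsymbol{\lambda}^\star \in \partial\iota^\ast(-\mathbf{A}^\top\mathbf{w}^\star)$.

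Finally I would assemble these facts into the optimality condition of the primal objective $P(\mathbf{w}) \coloneqq f(\mathbf{w}) + \iota^\ast(-\mathbf{A}^\top\mathbf{w})$. Since $f$ is differentiable and $\iota^\ast(-\mathbf{A}^\top\cdot)$ is convex, the sum and chain rules give $\partial P(\mathbf{w}) = \nabla f(\mathbf{w}) - \mathbf{A}\,\partial\iota^\ast(-\mathbf{A}^\top\mathbf{w})$; combining $\nabla f(\mathbf{w}^\star) = \mathbf{A}\boldsymbol{\lambda}^\star$ with $\boldsymbol{\lambda}^\star \in \partial\iota^\ast(-\mathbf{A}^\top\mathbf{w}^\star)$ yields $\mathbf{0} \in \partial P(\mathbf{w}^\star)$, so by convexity $\mathbf{w}^\star$ minimizes $P$, as claimed. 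The main obstacle is justifying the subdifferential calculus, i.e. checking the constraint qualification needed for the chain and sum rules; I expect this to be routine here, since $f$ has full domain $\mathbb{R}^{dN}$ (being globally smooth) while $\iota^\ast$ is the indicator of the subspace orthogonal to the consensus set, so the relative-interior qualification holds automatically. An equivalent route is to note that strong duality holds and that $(\mathbf{w}^\star,\boldsymbol{\lambda}^\star)$ is a saddle point of the Lagrangian $f(\mathbf{w}) - \langle\boldsymbol{\lambda},\mathbf{A}^\top\mathbf{w}\rangle - \iota(\boldsymbol{\lambda})$, with $\mathbf{w}^\star = \text{argmin}_{\mathbf{w}} \{ f(\mathbf{w}) - \langle\mathbf{A}\boldsymbol{\lambda}^\star,\mathbf{w}\rangle \} = \nabla f^\ast(\mathbf{A}\boldsymbol{\lambda}^\star)$.
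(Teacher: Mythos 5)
Your proof is correct and follows essentially the same route as the paper's: derive the dual optimality condition $-\mathbf{A}^\top\nabla f^\ast(\mathbf{A}\boldsymbol{\lambda}^\star)\in\partial\iota(\boldsymbol{\lambda}^\star)$, invert it via the conjugate subgradient relation to get $\boldsymbol{\lambda}^\star\in\partial\iota^\ast(-\mathbf{A}^\top\mathbf{w}^\star)$, and combine with $\nabla f(\mathbf{w}^\star)=\mathbf{A}\boldsymbol{\lambda}^\star$ to verify the primal optimality condition. Your version merely spells out the subdifferential-calculus justifications (sum rule, constraint qualification) that the paper leaves implicit by citing standard references.
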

\begin{proof}
By using the optimality condition \citep[Theorem 27.1]{rockafellar2015convex} of $\boldsymbol{\lambda}^\star$,
we have
\begin{align*}
    - \mathbf{A}^\top \nabla f^\ast (\mathbf{A} \boldsymbol{\lambda}^\ast) \in \partial \iota (\boldsymbol{\lambda}^\star).
\end{align*}
From the property of the convex conjugate function of $\iota$, we get
\begin{align*}
     \boldsymbol{\lambda}^\star \in \partial \iota^\ast (- \mathbf{A}^\top \nabla f^\ast (\mathbf{A} \boldsymbol{\lambda}^\star)).
\end{align*}
Then, by multiplying $\mathbf{A}$ and using the property of the convex conjugate function of $f$, we have
\begin{align*}
     \nabla f ( \nabla f^\ast ( \mathbf{A} \boldsymbol{\lambda}^\star)) \in \mathbf{A} \partial \iota^\ast (- \mathbf{A}^\top \nabla f^\ast (\mathbf{A} \boldsymbol{\lambda}^\star)).
\end{align*}
From the above equation, $\nabla f^\ast (\mathbf{A} \boldsymbol{\lambda}^\star)$ satisfies the optimality condition of Eq. \eqref{eq:lagrangian_primel}.
This concludes the proof.
\end{proof}

\begin{lemma}
\label{lemma:w_and_z}
Let $\bar{\mathbf{z}} \in \text{Fix}(R_{\alpha \partial \iota} R_{\alpha \nabla g})$,
and let $\mathbf{w}^\star$ be the optimal solution of Eq. \eqref{eq:ecl_primal}.
Under Assumptions \ref{assumption:compression}, \ref{assumption:convex}, \ref{assumption:smoothness_and_strong_convex}, and \ref{assumption:no_isolated_nodes},
$\mathbf{w}^{(r+1)}$ and $\mathbf{z}^{(r-1)}$ generated by Alg. \ref{alg:C-ECL} satisfy the following:
\begin{align}
    &\mathbb{E}_{r-1} \|\mathbf{w}^{(r+1)} - \mathbf{w}^\star \| \nonumber \\
    &\leq \frac{\sqrt{N_{\text{max}}}}{\mu + \alpha N_{\text{min}}} \{ |1 - \theta| + \theta \delta + \sqrt{1 - \tau} (\theta + |1 - \theta| \delta + \delta ) \} \| \mathbf{z}^{(r-1)} - \bar{\mathbf{z}} \|.
\end{align}
\end{lemma}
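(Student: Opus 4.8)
The plan is to write both $\mathbf{w}^{(r+1)}$ and $\mathbf{w}^\star$ as images of the operator $T$ and then chain the Lipschitz bound of Lemma \ref{lemma:lipschitz_of_w} with the one-step contraction of Lemma \ref{lemma:contractive_of_z}. By Line 2 of Alg. \ref{alg:C-ECL} together with the definition of $T$, the iterate satisfies $\mathbf{w}^{(r+1)} = T\mathbf{z}^{(r)}$. Hence the whole statement reduces to (i) identifying $\mathbf{w}^\star$ with $T\bar{\mathbf{z}}$ and (ii) controlling $\|\mathbf{z}^{(r)} - \bar{\mathbf{z}}\|$ in terms of $\|\mathbf{z}^{(r-1)} - \bar{\mathbf{z}}\|$.

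First I would verify that $\mathbf{w}^\star = T\bar{\mathbf{z}}$. Using the derivation in Sec. \ref{sec:derivation_of_ecl} (with $g = f^\ast(\mathbf{A}\cdot)$, so that $J_{\alpha \mathbf{A}^\top \nabla f^\ast(\mathbf{A}\cdot)} = J_{\alpha\nabla g}$), the operator $T$ admits the representation $T\mathbf{z} = \nabla f^\ast(\mathbf{A}\, J_{\alpha\nabla g}\mathbf{z})$, since the argmin defining $T\mathbf{z}^{(r)}$ coincides with $\nabla f^\ast(\mathbf{A}\boldsymbol{\lambda}^{(r+1)})$ for $\boldsymbol{\lambda}^{(r+1)} = J_{\alpha\nabla g}\mathbf{z}^{(r)}$ (Eqs. \eqref{eq:pr_splitting_1} and \eqref{eq:definition_of_w}). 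Because $\bar{\mathbf{z}} \in \text{Fix}(R_{\alpha\partial\iota}R_{\alpha\nabla g})$, the point $\boldsymbol{\lambda}^\star \coloneqq J_{\alpha\nabla g}\bar{\mathbf{z}}$ is the optimal solution of the dual problem of Eq. \eqref{eq:lagrangian_dual}, which is the standard recovery step of the Douglas-Rachford splitting recalled in Sec. \ref{sec:preliminary}. Lemma \ref{lemma:relationship_between_primal_and_dual} then gives $\nabla f^\ast(\mathbf{A}\boldsymbol{\lambda}^\star) = \mathbf{w}^\star$, so that $T\bar{\mathbf{z}} = \nabla f^\ast(\mathbf{A}\boldsymbol{\lambda}^\star) = \mathbf{w}^\star$, as required.

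With this identity in hand, the remainder is a short chaining argument. Since $T$ is deterministic, applying Lemma \ref{lemma:lipschitz_of_w} pointwise yields
\begin{align*}
    \|\mathbf{w}^{(r+1)} - \mathbf{w}^\star\| = \|T\mathbf{z}^{(r)} - T\bar{\mathbf{z}}\| \leq \frac{\sqrt{N_{\text{max}}}}{\mu + \alpha N_{\text{min}}} \|\mathbf{z}^{(r)} - \bar{\mathbf{z}}\|.
\end{align*}
Taking the conditional expectation $\mathbb{E}_{r-1}$, whose randomness $\omega^{(r-1)}$ is exactly what produces $\mathbf{z}^{(r)}$ from $\mathbf{z}^{(r-1)}$, and invoking Lemma \ref{lemma:contractive_of_z} with the round index shifted from $r$ to $r-1$, I can bound $\mathbb{E}_{r-1}\|\mathbf{z}^{(r)} - \bar{\mathbf{z}}\|$ by $\{|1-\theta| + \theta\delta + \sqrt{1-\tau}(\theta + |1-\theta|\delta + \delta)\}\|\mathbf{z}^{(r-1)} - \bar{\mathbf{z}}\|$. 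Multiplying the two factors gives exactly the claimed inequality.

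The only genuinely nontrivial step is the identification $\mathbf{w}^\star = T\bar{\mathbf{z}}$; everything else is deterministic Lipschitz continuity composed with a previously established contraction. The care needed there is to match the fixed point $\bar{\mathbf{z}}$ of $R_{\alpha\partial\iota}R_{\alpha\nabla g}$ to the dual optimum through the resolvent $J_{\alpha\nabla g}$ and then transport it to the primal side via Lemma \ref{lemma:relationship_between_primal_and_dual}. Once this resolvent bookkeeping is correct, the bound follows immediately.
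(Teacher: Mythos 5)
Your proposal is correct and follows essentially the same route as the paper's own proof: identifying $\mathbf{w}^\star = T\bar{\mathbf{z}}$ via $\boldsymbol{\lambda}^\star = J_{\alpha\nabla g}\bar{\mathbf{z}}$ and Lemma \ref{lemma:relationship_between_primal_and_dual}, then chaining the Lipschitz bound of Lemma \ref{lemma:lipschitz_of_w} with the one-step contraction of Lemma \ref{lemma:contractive_of_z}. The only difference is that you spell out the resolvent representation $T\mathbf{z} = \nabla f^\ast(\mathbf{A}\, J_{\alpha\nabla g}\mathbf{z})$ explicitly, which the paper leaves implicit.
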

\begin{proof}
Let $\boldsymbol{\lambda}^\star$ be the optimal solution of the dual problem of Eq. \eqref{eq:lagrangian_dual}.
We have $\boldsymbol{\lambda}^\star = J_{\alpha \nabla g} \bar{\mathbf{z}}$ because $\bar{\mathbf{z}} \in \text{Fix}(R_{\alpha \partial \iota} R_{\alpha \nabla g})$.
By Lemma \ref{lemma:relationship_between_primal_and_dual} and the definition of $\mathbf{w}$ of Eq. \eqref{eq:definition_of_w},
$\mathbf{w}^\star$ can be obtained from the fixed point $\bar{\mathbf{z}}$ as $\mathbf{w}^\star = T \bar{\mathbf{z}}$.
Then, we have
\begin{align*}
    &\mathbb{E}_{r-1} \|\mathbf{w}^{(r+1)} - \mathbf{w}^\star \| \\
    &\leq \mathbb{E}_{r-1} \| T \mathbf{z}^{(r)} - T \bar{\mathbf{z}} \| \\
    &\stackrel{(a)}{\leq} \frac{\sqrt{N_{\text{max}}}}{\mu + \alpha N_{\text{min}}} \mathbb{E}_{r-1} \| \mathbf{z}^{(r)} - \bar{\mathbf{z}} \| \\
    &\stackrel{(b)}{\leq} \frac{\sqrt{N_{\text{max}}}}{\mu + \alpha N_{\text{min}}} \{ |1 - \theta| + \theta \delta + \sqrt{1 - \tau} (\theta + |1 - \theta| \delta + \delta ) \} \| \mathbf{z}^{(r-1)} - \bar{\mathbf{z}} \|,
\end{align*}
where we use Lemma \ref{lemma:lipschitz_of_w} for (a) and use Lemma \ref{lemma:contractive_of_z} for (b).
This concludes the proof.
\end{proof}
\begin{lemma}
\label{lemma:upper_bound_of_w}
Let $\bar{\mathbf{z}} \in \text{Fix}(R_{\alpha \partial \iota} R_{\alpha \nabla g})$,
and let $\mathbf{w}^\star$ be the optimal solution of Eq. \eqref{eq:ecl_primal}.
Under Assumptions \ref{assumption:compression}, \ref{assumption:convex}, \ref{assumption:smoothness_and_strong_convex}, and \ref{assumption:no_isolated_nodes},
$\mathbf{w}^{(r+1)}$ generated by Alg. \ref{alg:C-ECL} satisfies the following:
\begin{align}
    &\mathbb{E} \|\mathbf{w}^{(r+1)} - \mathbf{w}^\star \| \nonumber \\
    &\leq \frac{\sqrt{N_{\text{max}}}}{\mu + \alpha N_{\text{min}}} \left\{ |1 - \theta| + \theta \delta + \sqrt{1 - \tau} (\theta + |1 - \theta| \delta + \delta ) \right\}^r \| \mathbf{z}^{(0)} - \bar{\mathbf{z}} \|.
\end{align}
\end{lemma}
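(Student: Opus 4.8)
The plan is to assemble the claimed bound from two results already in hand: the per-round primal estimate of Lemma \ref{lemma:w_and_z}, which controls $\mathbb{E}_{r-1}\|\mathbf{w}^{(r+1)}-\mathbf{w}^\star\|$ by $\|\mathbf{z}^{(r-1)}-\bar{\mathbf{z}}\|$, and an iterated version of the dual contraction of Lemma \ref{lemma:contractive_of_z}. Writing $\rho \coloneqq |1-\theta| + \theta\delta + \sqrt{1-\tau}(\theta + |1-\theta|\delta + \delta)$ for the one-step factor, the whole argument reduces to passing from conditional (single-round) bounds to total-expectation (multi-round) bounds via the tower property $\mathbb{E}[\cdot] = \mathbb{E}[\mathbb{E}_r[\cdot]]$, followed by a short induction.

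First I would establish the geometric decay of the dual iterate in full expectation. Lemma \ref{lemma:contractive_of_z} gives the conditional estimate $\mathbb{E}_r\|\mathbf{z}^{(r+1)}-\bar{\mathbf{z}}\| \le \rho\,\|\mathbf{z}^{(r)}-\bar{\mathbf{z}}\|$. Taking total expectation and using the tower property turns this into $\mathbb{E}\|\mathbf{z}^{(r+1)}-\bar{\mathbf{z}}\| \le \rho\,\mathbb{E}\|\mathbf{z}^{(r)}-\bar{\mathbf{z}}\|$, and since $\mathbf{z}^{(0)}$ is deterministic a straightforward induction on $r$ yields $\mathbb{E}\|\mathbf{z}^{(r)}-\bar{\mathbf{z}}\| \le \rho^{r}\|\mathbf{z}^{(0)}-\bar{\mathbf{z}}\|$ for all $r\ge 0$.

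Next I would feed this into Lemma \ref{lemma:w_and_z}. Taking the total expectation of its conclusion, again through the tower property, gives $\mathbb{E}\|\mathbf{w}^{(r+1)}-\mathbf{w}^\star\| \le \tfrac{\sqrt{N_{\text{max}}}}{\mu+\alpha N_{\text{min}}}\,\rho\,\mathbb{E}\|\mathbf{z}^{(r-1)}-\bar{\mathbf{z}}\|$. Substituting the geometric bound $\mathbb{E}\|\mathbf{z}^{(r-1)}-\bar{\mathbf{z}}\| \le \rho^{r-1}\|\mathbf{z}^{(0)}-\bar{\mathbf{z}}\|$ from the previous step collapses the two factors of $\rho$ into $\rho^{r}$, producing exactly
\[
\mathbb{E}\|\mathbf{w}^{(r+1)}-\mathbf{w}^\star\| \le \frac{\sqrt{N_{\text{max}}}}{\mu+\alpha N_{\text{min}}}\,\rho^{r}\,\|\mathbf{z}^{(0)}-\bar{\mathbf{z}}\|,
\]
which is the claim.

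The main obstacle is conceptual rather than computational: the careful bookkeeping of the nested expectations. Since $\mathbb{E}_{r-1}$ in Lemma \ref{lemma:w_and_z} conditions on all randomness before round $r-1$, the factor $\|\mathbf{z}^{(r-1)}-\bar{\mathbf{z}}\|$ is measurable with respect to that conditioning and may be treated as a constant when the inner expectation is evaluated; only then does applying the outer expectation and substituting the already-proved $\rho^{r-1}$ decay give the clean telescoping. Once this measurability point is checked, no new contractivity, smoothness, or Lipschitz estimates are required beyond Lemmas \ref{lemma:contractive_of_z} and \ref{lemma:w_and_z}, and the rest is a one-line induction and a single substitution.
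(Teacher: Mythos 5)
Your proposal is correct and matches the paper's own argument: the paper proves this lemma simply by citing Lemma \ref{lemma:contractive_of_z} and Lemma \ref{lemma:w_and_z}, which is exactly the combination you carry out, with the tower-property bookkeeping and the induction on the dual contraction spelled out explicitly.
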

\begin{proof}
The statement follows from Lemma \ref{lemma:contractive_of_z} and Lemma \ref{lemma:w_and_z}.
\end{proof}

\setcounter{theorem}{0}
\begin{theorem}
Let $\bar{\mathbf{z}} \in \text{Fix}(R_{\alpha \partial \iota} R_{\alpha \nabla g})$.
Under Assumptions \ref{assumption:compression}, \ref{assumption:convex}, \ref{assumption:smoothness_and_strong_convex}, and \ref{assumption:no_isolated_nodes},
when $\tau \geq 1 - ( \frac{1 - \delta}{1 + \delta} )^2$ and $\theta$ satisfies the following:
\begin{align}
    \theta \in \left( \frac{2\delta \sqrt{1 - \tau}}{(1 - \delta)(1 - \sqrt{1 - \tau})}, \frac{2}{(1+\delta)(1 + \sqrt{1 - \tau})} \right),
\end{align}
$\mathbf{w}^{(r+1)}$ generated by Alg. \ref{alg:C-ECL} linearly converges to the optimal solution $\mathbf{w}^\star$ as follows:
\begin{align}
    &\mathbb{E} \|\mathbf{w}^{(r+1)} - \mathbf{w}^\star \| \nonumber \\
    &\leq \frac{\sqrt{N_{\text{max}}}}{\mu + \alpha N_{\text{min}}} \left\{ |1 - \theta| + \theta \delta + \sqrt{1 - \tau} (\theta + |1 - \theta| \delta + \delta ) \right\}^r \| \mathbf{z}^{(0)} - \bar{\mathbf{z}} \|.
\end{align}
\end{theorem}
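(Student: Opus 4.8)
The plan is to obtain the theorem by combining the recursive bound of Lemma \ref{lemma:upper_bound_of_w} with the strict-contraction condition of Lemma \ref{lemma:condition_of_contractiveness_of_z}: the former already delivers the displayed inequality Eq. \eqref{eq:convergence_rate_of_cecl}, and the latter guarantees that the per-step factor $|1-\theta|+\theta\delta+\sqrt{1-\tau}(\theta+|1-\theta|\delta+\delta)$ is strictly below one on the interval Eq. \eqref{eq:condition_of_theta}, which is precisely what upgrades the bound to \emph{linear} convergence. Thus the theorem is essentially an assembly of the preceding lemmas and can be stated as a one-line consequence, but it is worth tracing where the substance actually lies.

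The geometric engine is the $\delta$-contractivity of the composed reflected resolvent $R_{\alpha\partial\iota}R_{\alpha\nabla g}$. Setting $g\coloneqq f^\ast(\mathbf{A}\cdot)$, Lemma \ref{lemma:singular_values} supplies the singular values $\sqrt{N_{\text{max}}}$, $\sqrt{N_{\text{min}}}$ of $\mathbf{A}$, Lemma \ref{lemma:smoothness_strongness_of_conjugate} turns these into smoothness and strong-convexity constants for $g$, and Lemmas \ref{lemma:contractive}--\ref{lemma:contractiveness_of_two_reflected_resolvent} conclude contractivity since $R_{\alpha\partial\iota}$ is nonexpansive. The main obstacle, and the real content behind Lemma \ref{lemma:upper_bound_of_w}, is the one-step expected contraction of $\mathbf{z}$ in Lemma \ref{lemma:contractive_of_z}. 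After rewriting the two updates of Algorithm \ref{alg:C-ECL} as $\mathbf{z}^{(r+1)}=\mathbf{z}^{(r)}+\theta\,\text{comp}((R_{\alpha\partial\iota}R_{\alpha\nabla g}-\text{Id})\mathbf{z}^{(r)})$, I would exploit that $\bar{\mathbf{z}}$ is a fixed point, so the displacement vanishes there and $\text{comp}(\mathbf{0})=\mathbf{0}$; then I would add and subtract the uncompressed displacement and apply the triangle inequality. The mean-square bound Eq. \eqref{eq:bound_of_compression} together with the linearity Eqs. (\ref{eq:linearlity_of_compression}-\ref{eq:linearlity_of_compression_2}) splits the error into a deterministic damped-iteration term and two $\sqrt{1-\tau}$-weighted residual terms, each of which the $\delta$-contractivity bounds to produce the stated factor.

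To verify the factor is strictly below one I would follow Lemma \ref{lemma:condition_of_contractiveness_of_z}: removing $|1-\theta|$ by splitting on $\theta\le 1$ versus $\theta\ge 1$ renders the factor affine in $\theta$ on each side, equal to $1+2\delta\sqrt{1-\tau}-\theta(1-\sqrt{1-\tau})(1-\delta)$ and $-1+\theta(1+\delta)(1+\sqrt{1-\tau})$ respectively. Solving each inequality for the threshold $\theta$ yields the two endpoints of Eq. \eqref{eq:condition_of_theta}, and the hypothesis $\tau\ge 1-((1-\delta)/(1+\delta))^2$ is exactly what keeps the left endpoint below the right one, so the interval is non-empty and contains $1$.

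Finally I would transfer contraction from $\mathbf{z}$ to $\mathbf{w}$, the role of Lemmas \ref{lemma:lipschitz_of_w}--\ref{lemma:w_and_z}: the primal map $T\mathbf{z}=\text{argmin}_{\mathbf{w}}\{f(\mathbf{w})+\frac{\alpha}{2}\|\mathbf{A}^\top\mathbf{w}-\frac{1}{\alpha}\mathbf{z}\|^2\}$ is $(\sqrt{N_{\text{max}}}/(\mu+\alpha N_{\text{min}}))$-Lipschitz and sends $\bar{\mathbf{z}}$ to $\mathbf{w}^\star$ by the primal-dual identification of Lemma \ref{lemma:relationship_between_primal_and_dual}. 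Composing this Lipschitz estimate with the one-step bound and unrolling the conditional expectations over the $r$ rounds via the tower property produces the geometric rate of Eq. \eqref{eq:convergence_rate_of_cecl}. The only point requiring care is the correct nesting of conditional expectations when iterating, which is handled by the indexing in Lemma \ref{lemma:w_and_z}.
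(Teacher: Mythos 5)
Your proposal is correct and takes essentially the same route as the paper: the theorem is assembled from Lemma~\ref{lemma:condition_of_contractiveness_of_z} and Lemma~\ref{lemma:upper_bound_of_w}, and your tracing of the substance --- the fixed-point/$\mathrm{comp}(\mathbf{0})=\mathbf{0}$ decomposition behind Lemma~\ref{lemma:contractive_of_z}, the affine-in-$\theta$ case split of Lemma~\ref{lemma:condition_of_contractiveness_of_z}, and the Lipschitz transfer via Lemmas~\ref{lemma:lipschitz_of_w}--\ref{lemma:w_and_z} --- matches the paper's argument step for step. No gaps.
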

\begin{proof}
The statement follows from Lemma \ref{lemma:condition_of_contractiveness_of_z} and Lemma \ref{lemma:upper_bound_of_w}.
\end{proof}
\setcounter{corollary}{0}
\begin{corollary}
Let $\bar{\mathbf{z}} \in \text{Fix}(R_{\alpha \partial \iota} R_{\alpha \nabla g})$.
Under Assumptions \ref{assumption:compression}, \ref{assumption:convex}, \ref{assumption:smoothness_and_strong_convex}, and \ref{assumption:no_isolated_nodes},
when $\tau = 1$ and $\theta$ satisfies the following:
\begin{align}
    \theta \in \left( 0, \frac{2}{1+\delta} \right),
\end{align}
$\mathbf{w}^{(r+1)}$ generated by Alg. \ref{alg:C-ECL} linearly converges to the optimal solution $\mathbf{w}^\star$ as follows:
\begin{align}
    \mathbb{E} \|\mathbf{w}^{(r+1)} - \mathbf{w}^\star \| \nonumber \leq \frac{\sqrt{N_{\text{max}}}}{\mu + \alpha N_{\text{min}}} \left\{ |1 - \theta| + \theta \delta \right\}^r \| \mathbf{z}^{(0)} - \bar{\mathbf{z}} \|.
\end{align}
\end{corollary}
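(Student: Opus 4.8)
The plan is to recognize this corollary as the specialization of Theorem~\ref{theorem:main_convergence_analysis} (equivalently, of Lemmas~\ref{lemma:condition_of_contractiveness_of_z} and~\ref{lemma:upper_bound_of_w}) to the noiseless case $\tau=1$, and to check that every $\tau$-dependent quantity collapses to the stated form. First I would observe that $\tau=1$ forces the compression to be the identity: Eq.~\eqref{eq:bound_of_compression} gives $\mathbb{E}_\omega\|\mathrm{comp}(\mathbf{x})-\mathbf{x}\|^2\le(1-\tau)\|\mathbf{x}\|^2=0$, and since the left side is a nonnegative expectation it must vanish, so $\mathrm{comp}(\mathbf{x})=\mathbf{x}$ almost surely for every $\mathbf{x}$. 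Consequently the $\mathbf{z}$-update in Eq.~\eqref{eq:second_reflected_resolvent} reduces to the deterministic Douglas--Rachford iteration $\mathbf{z}^{(r+1)}=(1-\theta)\mathbf{z}^{(r)}+\theta R_{\alpha\partial\iota}R_{\alpha\nabla g}\mathbf{z}^{(r)}$, so no randomness remains in the claim.

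Next I would bound the distance to the fixed point. Since $\bar{\mathbf{z}}\in\text{Fix}(R_{\alpha\partial\iota}R_{\alpha\nabla g})$ and, by Lemma~\ref{lemma:contractiveness_of_two_reflected_resolvent}, the map $R_{\alpha\partial\iota}R_{\alpha\nabla g}$ is $\delta$-contractive, the triangle inequality applied to the deterministic iteration yields $\|\mathbf{z}^{(r+1)}-\bar{\mathbf{z}}\|\le(|1-\theta|+\theta\delta)\|\mathbf{z}^{(r)}-\bar{\mathbf{z}}\|$; this is exactly Lemma~\ref{lemma:contractive_of_z} with the $\sqrt{1-\tau}$ terms set to zero. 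To confirm this is a genuine contraction I would split on the sign of $1-\theta$: for $\theta\in(0,1]$ the factor equals $1-\theta(1-\delta)<1$ since $\delta<1$, and for $\theta\in(1,\tfrac{2}{1+\delta})$ it equals $\theta(1+\delta)-1<1$ by the upper bound on $\theta$. This recovers the stated interval $(0,\tfrac{2}{1+\delta})$ as precisely the $\tau=1$ value of Eq.~\eqref{eq:domain_of_theta}: its lower endpoint $\frac{2\delta\sqrt{1-\tau}}{(1-\delta)(1-\sqrt{1-\tau})}$ becomes $0$ and its upper endpoint $\frac{2}{(1+\delta)(1+\sqrt{1-\tau})}$ becomes $\tfrac{2}{1+\delta}$ when $\sqrt{1-\tau}=0$; note also that $\tau=1$ trivially satisfies the hypothesis $\tau\ge1-(\tfrac{1-\delta}{1+\delta})^2$. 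Iterating gives $\|\mathbf{z}^{(r)}-\bar{\mathbf{z}}\|\le(|1-\theta|+\theta\delta)^r\|\mathbf{z}^{(0)}-\bar{\mathbf{z}}\|$.

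Finally I would transfer the bound from the dual iterate $\mathbf{z}$ to the primal iterate $\mathbf{w}$. By Lemma~\ref{lemma:w_and_z} one has $\mathbf{w}^\star=T\bar{\mathbf{z}}$ and $\mathbf{w}^{(r+1)}=T\mathbf{z}^{(r)}$, and by Lemma~\ref{lemma:lipschitz_of_w} the operator $T$ is $\tfrac{\sqrt{N_{\text{max}}}}{\mu+\alpha N_{\text{min}}}$-Lipschitz continuous; composing these with the preceding step produces the claimed rate. Equivalently, the entire argument is just Lemma~\ref{lemma:upper_bound_of_w} evaluated at $\tau=1$.

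I do not expect a genuine obstacle, since the corollary is a clean specialization of machinery already established. The only point demanding care is the case analysis on $\theta$ in the second paragraph, where one must verify that the absolute value $|1-\theta|$ does not spoil strict contraction at either end of the interval, together with the bookkeeping confirming that the general $\theta$-interval of Eq.~\eqref{eq:domain_of_theta} degenerates exactly to $(0,\tfrac{2}{1+\delta})$ and not to something wider or narrower.
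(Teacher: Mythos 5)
Your proposal is correct and takes essentially the same route as the paper: the paper's own proof is a one-line invocation of Theorem~\ref{theorem:main_convergence_analysis} with $\tau=1$, and your argument is precisely the bookkeeping implicit in that specialization (identity compression, the $\theta$-interval degenerating to $\bigl(0,\tfrac{2}{1+\delta}\bigr)$, and the rate collapsing to $|1-\theta|+\theta\delta$). The extra unpacking via Lemmas~\ref{lemma:contractiveness_of_two_reflected_resolvent}, \ref{lemma:lipschitz_of_w}, and~\ref{lemma:w_and_z} is sound but not a different approach.
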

\begin{proof}
The statement follows from Theorem \ref{theorem:main_convergence_analysis}.
\end{proof}
\begin{corollary}
Under Assumptions \ref{assumption:compression}, \ref{assumption:convex}, \ref{assumption:smoothness_and_strong_convex}, and \ref{assumption:no_isolated_nodes},
when $\tau \geq 1 - ( \frac{1 - \delta}{1 + \delta} )^2$,
the optimal convergence rate of Eq. \eqref{eq:convergence_rate_of_cecl} in the C-ECL is achieved when $\theta=1$.
\end{corollary}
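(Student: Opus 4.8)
The plan is to treat the convergence rate in Eq.~\eqref{eq:convergence_rate_of_cecl} as a function of $\theta$ and to minimize its contraction factor
\begin{align*}
    \rho(\theta) \coloneqq |1 - \theta| + \theta \delta + \sqrt{1 - \tau}\,(\theta + |1 - \theta| \delta + \delta)
\end{align*}
over the admissible interval given in Eq.~\eqref{eq:condition_of_theta}. Since the base $\|\mathbf{z}^{(0)} - \bar{\mathbf{z}}\|$ and the prefactor $\sqrt{N_{\text{max}}}/(\mu + \alpha N_{\text{min}})$ do not depend on $\theta$, minimizing the rate is equivalent to minimizing $\rho(\theta)$. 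To lighten notation I would set $s \coloneqq \sqrt{1 - \tau} \in [0, 1)$, so that $\rho(\theta) = |1-\theta|(1 + s\delta) + \theta(\delta + s) + s\delta$.

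The key observation is that $\rho$ is piecewise affine in $\theta$, with the only nonlinearity coming from $|1-\theta|$, so I would split into the two branches determined by the sign of $1-\theta$. First I would handle the branch $\theta \le 1$, where $|1-\theta| = 1-\theta$ and a short rearrangement gives
\begin{align*}
    \rho(\theta) = 1 + 2 s\delta - \theta(1-\delta)(1 - s).
\end{align*}
Because $\delta \in [0,1)$ and $s \in [0,1)$, the coefficient $(1-\delta)(1-s)$ is strictly positive, so $\rho$ is strictly decreasing on this branch and is minimized at its right endpoint $\theta = 1$. Next I would handle the branch $\theta \ge 1$, where $|1-\theta| = \theta - 1$ and the analogous computation yields
\begin{align*}
    \rho(\theta) = \theta(1+\delta)(1 + s) - 1,
\end{align*}
which is strictly increasing in $\theta$ and hence minimized at its left endpoint $\theta = 1$. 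Combining the two branches shows that $\rho$ attains its global minimum over all $\theta$ precisely at $\theta = 1$, with value $\rho(1) = \delta + s(1+\delta)$.

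Finally I would confirm that $\theta = 1$ lies inside the admissible interval of Eq.~\eqref{eq:condition_of_theta}; this is exactly the content of Lemma~\ref{lemma:condition_of_contractiveness_of_z}, which states that under $\tau \ge 1 - (\frac{1-\delta}{1+\delta})^2$ the interval is non-empty and contains $1$. Since the unconstrained minimizer $\theta = 1$ is feasible, it is also the constrained minimizer over Eq.~\eqref{eq:condition_of_theta}, which establishes the claim. I do not anticipate a genuine obstacle: the argument is elementary once one notices the piecewise-affine structure of $\rho$, and the only points requiring care are getting the signs of the two slopes right and invoking the earlier lemma to guarantee that the endpoint $\theta = 1$ is feasible.
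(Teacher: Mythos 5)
Your proposal is correct and follows essentially the same route as the paper's own proof: both split on the sign of $1-\theta$, simplify the rate to $1 + 2\delta\sqrt{1-\tau} - \theta(1-\sqrt{1-\tau})(1-\delta)$ for $\theta \le 1$ and to $-1 + \theta(1+\delta)(1+\sqrt{1-\tau})$ for $\theta \ge 1$, and conclude by monotonicity that the minimum is at $\theta=1$. Your additional check that $\theta=1$ lies in the admissible interval of Eq.~\eqref{eq:condition_of_theta} (via Lemma~\ref{lemma:condition_of_contractiveness_of_z}) is a sound finishing touch that the paper leaves implicit.
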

\begin{proof}
By Theorem \ref{theorem:main_convergence_analysis},
when $\theta\leq 1$, we have
\begin{align*}
    \mathbb{E} \|\mathbf{w}^{(r+1)} - \mathbf{w}^\star \| 
    \leq \frac{\sqrt{N_{\text{max}}}}{\mu + \alpha N_{\text{min}}} \{ 1 + 2\delta \sqrt{1-\tau} - \theta (1 - \sqrt{1-\tau})(1 - \delta) \}^r \| \mathbf{z}^{(0)} - \bar{\mathbf{z}} \|.
\end{align*}
Because $(1 - \sqrt{1-\tau})(1 - \delta) > 0$, $\{ 1 + 2\delta \sqrt{1-\tau} - \theta (1 - \sqrt{1-\tau})(1 - \delta) \}$ decreases when $\theta$ approaches $1$.

When $\theta \geq 1$, we have
\begin{align*}
    \mathbb{E} \|\mathbf{w}^{(r+1)} - \mathbf{w}^\star \| 
    \leq \frac{\sqrt{N_{\text{max}}}}{\mu + \alpha N_{\text{min}}} \{ - 1 + \theta (1 + \delta ) (1 + \sqrt{1 - \tau}) \}^r \| \mathbf{z}^{(0)} - \bar{\mathbf{z}} \|.
\end{align*}
Because $(1 + \delta ) (1 + \sqrt{1 - \tau}) > 0$, $\{ - 1 + \theta (1 + \delta ) (1 + \sqrt{1 - \tau}) \}$ decreases when $\theta$ approaches $1$.
Therefore, the optimal convergence rate is achieved when $\theta=1$.
\end{proof}
\begin{corollary}
Under Assumptions \ref{assumption:compression}, \ref{assumption:convex}, \ref{assumption:smoothness_and_strong_convex}, and \ref{assumption:no_isolated_nodes},
when $\tau=1$,
the optimal convergence rate of Eq. \eqref{eq:convergence_rate_of_ecl} is achieved when $\theta=1$.
\end{corollary}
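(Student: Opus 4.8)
The plan is to recognize that the rate appearing in Eq.~\eqref{eq:convergence_rate_of_ecl} is precisely the $\tau = 1$ specialization of the general C-ECL rate in Eq.~\eqref{eq:convergence_rate_of_cecl}: substituting $\tau = 1$ makes $\sqrt{1-\tau} = 0$, so the per-round contraction factor collapses to $\rho(\theta) := |1-\theta| + \theta\delta$. Since $\tau = 1$ automatically satisfies $\tau \geq 1 - (\frac{1-\delta}{1+\delta})^2$ (the right-hand side never exceeds $1$), the hypotheses of Corollary~\ref{corollary:optimal_convergence_rate} already hold, and the claim is exactly its $\tau = 1$ instance. I would therefore either invoke Corollary~\ref{corollary:optimal_convergence_rate} directly or, for self-containedness, repeat the short monotonicity argument below.

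The key step is a case split on the sign of $1-\theta$. For $\theta \leq 1$ we have $\rho(\theta) = (1-\theta) + \theta\delta = 1 - \theta(1-\delta)$; because $\delta \in [0,1)$ gives $1-\delta > 0$, this expression is strictly decreasing in $\theta$, hence minimized by pushing $\theta$ up toward $1$. For $\theta \geq 1$ we have $\rho(\theta) = (\theta-1) + \theta\delta = -1 + \theta(1+\delta)$, which is strictly increasing in $\theta$ since $1+\delta > 0$, hence minimized by pulling $\theta$ down toward $1$. The two one-sided branches therefore both point toward $\theta = 1$, where $\rho(1) = \delta$, establishing that $\theta = 1$ is the unique minimizer on the feasible interval $\theta \in (0, \frac{2}{1+\delta})$ taken from Corollary~\ref{corollary:convergenece_rate_of_ecl}.

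There is essentially no genuine obstacle: the entire argument rests on $0 \leq \delta < 1$, which is guaranteed under Assumptions~\ref{assumption:convex},~\ref{assumption:smoothness_and_strong_convex}, and~\ref{assumption:no_isolated_nodes} with $\alpha \in (0,\infty)$. The only point warranting a moment's care is confirming that the candidate minimizer $\theta = 1$ actually lies inside the admissible domain, i.e.\ that $\frac{2}{1+\delta} > 1$; this holds precisely because $\delta < 1$. With that verified, the optimal rate $\rho(1) = \delta$ is attained at $\theta = 1$, which concludes the proof.
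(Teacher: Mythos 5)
Your proposal is correct and follows essentially the same route as the paper: the paper's proof of this corollary is simply an invocation of Corollary \ref{corollary:optimal_convergence_rate} (whose hypothesis $\tau \geq 1 - (\frac{1-\delta}{1+\delta})^2$ is trivially met at $\tau = 1$), and your self-contained case split on the sign of $1-\theta$ is exactly the paper's proof of that corollary specialized to $\sqrt{1-\tau} = 0$. The only addition you make is the explicit check that $\theta = 1$ lies in the admissible interval $(0, \frac{2}{1+\delta})$, which the paper leaves implicit.
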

\begin{proof}
The statement follows from Corollary \ref{corollary:optimal_convergence_rate}.
\end{proof}

\newpage
\section{Experimental Setting}
\label{sec:experimental_setting}

\subsection{Hyperparameter}
In this section, we describe the detailed hyperparameters used in our experiments.

\textbf{FashionMNIST:}
We set the learning rate to $0.001$, batch size to $100$, and number of epochs to $1500$.
To avoid the overfitting, we use $\textbf{RandomCrop}$ of PyTorch for the data augmentation.
In the ECL, following the previous work \citep{niwa2021asynchronous}, we set $\alpha$ as follows:
\begin{align}
\label{eq:alpha_ecl}
    \alpha=\frac{1}{\eta |\mathcal{N}_i| (K-1)},
\end{align}
where $K$ is the number of local steps.
In the C-ECL, when we use $\textbf{rand}_{k\%}$ as the compression operator,
the number of local steps can be regarded to be $\frac{100 K}{k}$.
Then, in the C-ECL, we set $\alpha$ as follows:
\begin{align}
\label{eq:alpha_cecl}
    \alpha = \frac{1}{\eta |\mathcal{N}_i| (\frac{100 K}{k}-1)}.
\end{align}
Note that $\alpha$ is set to the different values between the nodes by the definition of $\alpha$ in Eqs. (\ref{eq:alpha_ecl}-\ref{eq:alpha_cecl}).
For the D-PSGD and the PowerGossip, we use Metropolis-Hastings weights \citep{xiao2007distributed} as the weights of the edges.

\textbf{CIFAR10:}
We set the learning rate to $0.005$, batch size to $100$, and number of epochs to $2500$.
To avoid the overfitting, we use $\textbf{RandomCrop}$ and $\textbf{RandomHorizontalFlip}$ of PyTorch for the data augmentation.
For the ECL and the C-ECL, we set $\alpha$ in the same way as the FashionMNIST.
For the D-PSGD and the PowerGossip, we use Metropolis–Hastings weights as the weights of the edges.

\subsection{Network Topology}
Fig. \ref{fig:visualization_of_graph} shows the visualization of the network topology used in our experiments.
\begin{figure}[H]
\center
\subfigure[Chain]{
    \includegraphics[width=0.5\hsize]{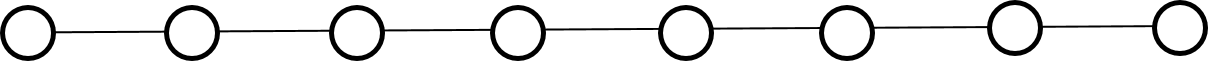}
}
\\
\subfigure[Ring]{
    \includegraphics[width=0.3\hsize]{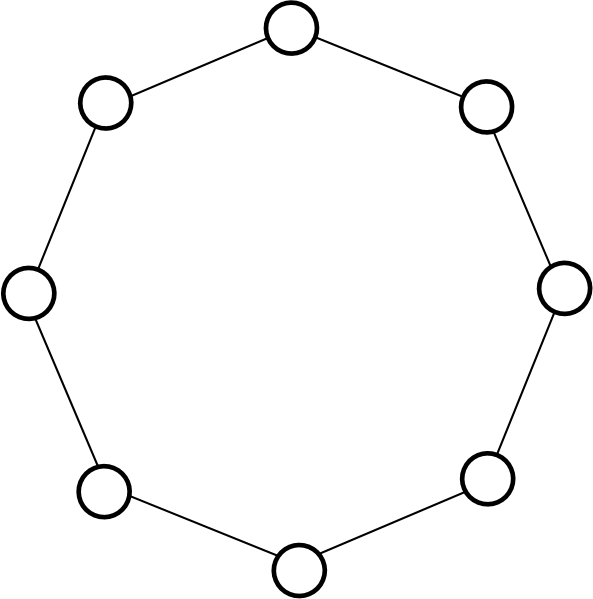}
}
\subfigure[Multiplex Ring]{
    \includegraphics[width=0.3\hsize]{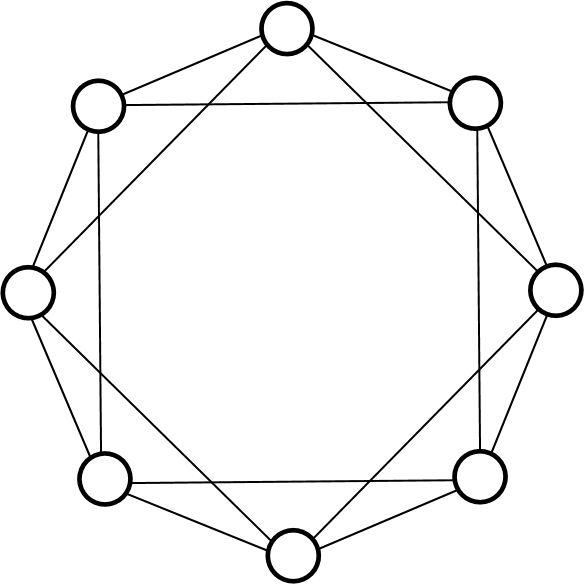}
}
\subfigure[Fully Connected Graph]{
    \includegraphics[width=0.3\hsize]{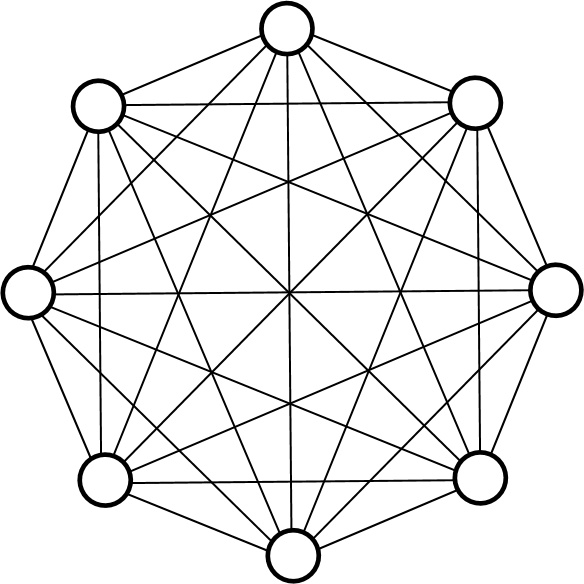}
}
\caption{Visualization of the network topology.}
\label{fig:visualization_of_graph}
\end{figure}

\end{document}